\newcommand\reallywidehat[1]{%
\savestack{\tmpbox}{\stretchto{%
  \scaleto{%
    \scalerel*[\widthof{\ensuremath{#1}}]{\kern-.6pt\bigwedge\kern-.6pt}%
    {\rule[-\textheight/2]{1ex}{\textheight}}
  }{\textheight}%
}{0.5ex}}%
\stackon[1pt]{#1}{\tmpbox}%
}
\DeclareMathOperator*{\argmin}{arg\,min}
\DeclareMathOperator*{\argmax}{arg\,max}
\newtheorem{thm}{Theorem}
\newtheorem{lemma}[thm]{Lemma}
\newtheorem*{remark}{Remark}
\newcommand{\eqn}[1]{Eqn.~(\ref{#1})}
\newcommand{\sect}[1]{Section~\ref{#1}}
\newcommand{\fig}[1]{Figure~\ref{#1}}
\newcommand{\alg}[1]{Alg.~\ref{#1}}
\newcommand{\apdx}[1]{Appendix~\ref{#1}}
\newcommand{\eg}{e.g.~}
\icmltitlerunning{Disentangling by Factorising}
\begin{document}

\twocolumn[
\icmltitle{Disentangling by Factorising}




\icmlsetsymbol{equal}{*}

\begin{icmlauthorlist}
\icmlauthor{Hyunjik Kim}{dm,ox}
\icmlauthor{Andriy Mnih}{dm}
\end{icmlauthorlist}

\icmlaffiliation{ox}{Department of Statistics, University of Oxford}
\icmlaffiliation{dm}{DeepMind, UK}

\icmlcorrespondingauthor{Hyunjik Kim}{hyunjikk@google.com}




\icmlkeywords{Disentangling, Representation Learning, Generative Models, Machine Learning, ICML}

\vskip 0.3in
]



\printAffiliationsAndNotice{}  

\begin{abstract}
We define and address the problem of unsupervised learning of disentangled representations on data generated from independent factors of variation. 
We propose FactorVAE, a method that disentangles by encouraging the distribution of representations to be factorial and hence independent across the dimensions. We show that it improves upon $\beta$-VAE by providing a better trade-off between disentanglement and reconstruction quality. Moreover, we highlight the problems of a commonly used disentanglement metric and introduce a new metric that does not suffer from them.
\end{abstract}

\section{Introduction}
\label{intro}

Learning interpretable representations of data that expose semantic meaning has important consequences for artificial intelligence. Such representations are useful not only for standard downstream tasks such as supervised learning and reinforcement learning, but also for tasks such as transfer learning and zero-shot learning where humans excel but machines struggle \cite{lake2016building}. There have been multiple efforts in the deep learning community towards learning factors of variation in the data, commonly referred to as learning a \textit{disentangled representation}. While there is no canonical definition for this term, we adopt the one due to \citet{bengio2013representation}: a representation where a change in one dimension corresponds to a change in one factor of variation, while being relatively invariant to changes in other factors. In particular, we assume that the data has been generated from a fixed number of independent factors of variation.\footnote[3]{We discuss the limitations of this assumption in \sect{metric}.} We focus on image data, where the effect of factors of variation is easy to visualise.

\begin{figure}[h!]
\vskip -0.1in
  \centering
  \includegraphics[width=\columnwidth]{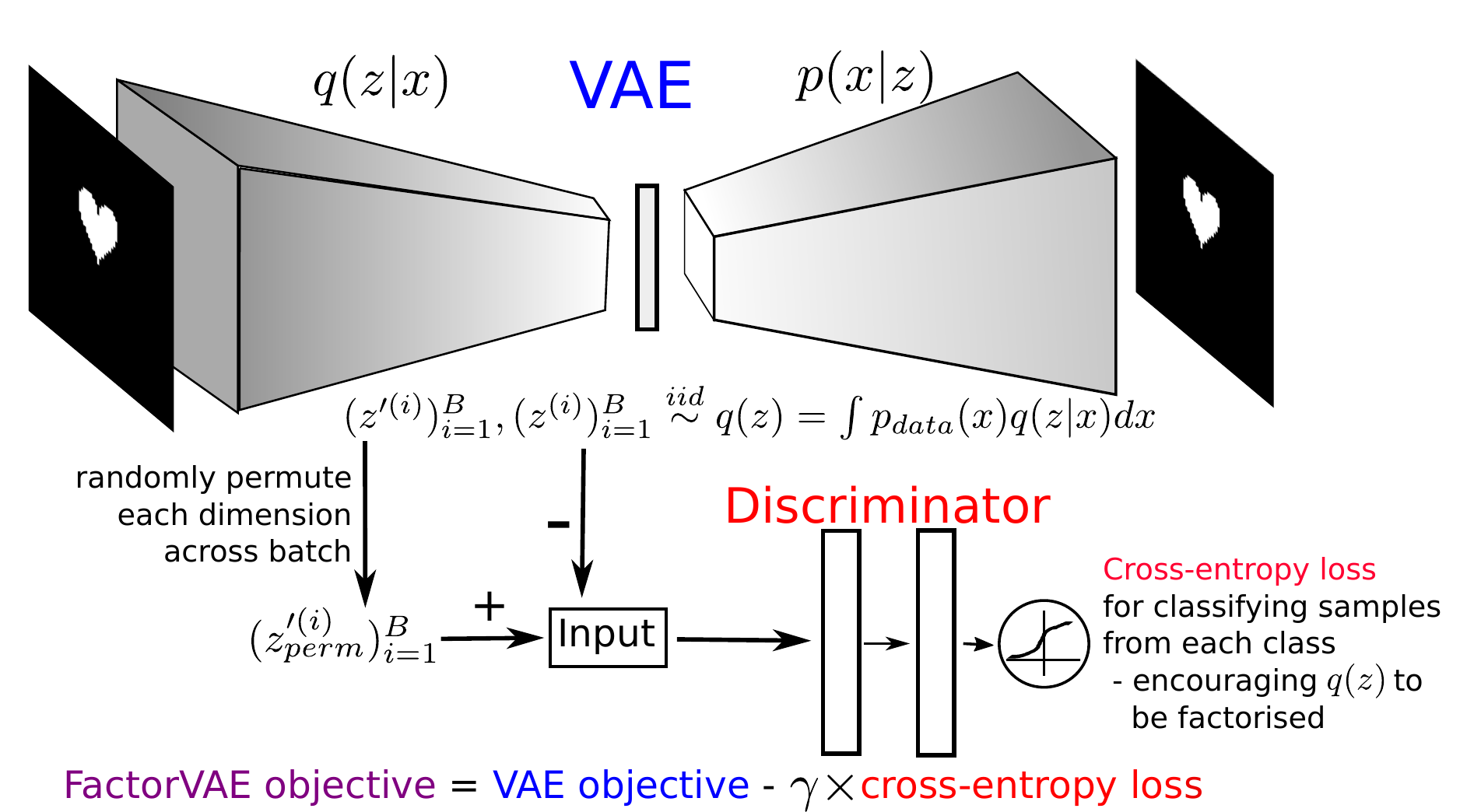}
  \vskip -0.1in
  \caption{Architecture of FactorVAE, a Variational Autoencoder (VAE) that encourages the code distribution to be factorial. The top row is a VAE with convolutional encoder and decoder, and the bottom row is an MLP classifier, the discriminator, that distinguishes whether the input was drawn from the marginal code distribution or the product of its marginals.}\label{fig:factor_vae}
\vskip -0.1in
\end{figure}

Using generative models has shown great promise in learning disentangled representations in images. Notably, semi-supervised approaches that require implicit or explicit knowledge about the true underlying factors of the data have excelled at disentangling \cite{kulkarni2015deep,kingma2014semi,reed2014learning,siddharth2017learning,hinton2011transforming,mathieu2016disentangling,goroshin2015unsupervised, hsu2017unsupervised, denton2017unsupervised}. However, ideally we would like to learn these in an unsupervised manner, due to the following reasons: 1. Humans are able to learn factors of variation unsupervised \cite{perry2010continuous}. 2. Labels are costly as obtaining them requires a human in the loop. 3. Labels assigned by humans might be inconsistent or leave out the factors that are difficult for humans to identify.

$\beta$-VAE \cite{higgins2016beta} is a popular method for unsupervised disentangling based on the 
Variational Autoencoder (VAE) framework \cite{kingma2013auto,rezende2014stochastic} for generative modelling. It uses a modified version of the VAE objective with a larger weight ($\beta > 1$) on the KL divergence between the variational posterior and the prior, and has proven to be an effective and stable method for disentangling.
One drawback of $\beta$-VAE is that reconstruction quality (compared to VAE) must be sacrificed in order to obtain better disentangling. The goal of our work is to obtain a better trade-off between disentanglement and reconstruction, allowing to achieve better disentanglement without degrading reconstruction quality. In this work, we analyse the source of this trade-off and propose FactorVAE, which augments the VAE objective with a penalty that encourages the marginal distribution of representations to be factorial without substantially affecting the quality of reconstructions. This penalty is expressed as a KL divergence between this marginal distribution and the product of its marginals, and is optimised using a discriminator network following the divergence minimisation view of GANs \cite{nowozin2016f,mohamed2016learning}. Our experimental results show that this approach achieves better disentanglement than $\beta$-VAE for the same reconstruction quality. We also point out the weaknesses in the disentangling metric of \citet{higgins2016beta}, and propose a new metric that addresses these shortcomings.

A popular alternative to $\beta$-VAE is InfoGAN \cite{chen2016infogan}, which is based on the Generative Adversarial Net (GAN) framework \cite{goodfellow2014generative} for generative modelling. InfoGAN learns disentangled representations by rewarding the mutual information between the observations and a subset of latents. However at least in part due to its training stability issues \cite{higgins2016beta}, there has been little empirical comparison between VAE-based methods and InfoGAN. Taking advantage of the recent developments in the GAN literature that help stabilise training, we include InfoWGAN-GP, a version of InfoGAN that uses Wasserstein distance \cite{arjovsky2017wasserstein} and gradient penalty \cite{gulrajani2017improved}, in our experimental evaluation.

In summary, we make the following contributions: 1) We introduce FactorVAE, a method for disentangling that gives higher disentanglement scores than $\beta$-VAE for the same reconstruction quality. 2) We identify the weaknesses of the disentanglement metric of \citet{higgins2016beta} and propose a more robust alternative. 3) We give quantitative comparisons of FactorVAE and $\beta$-VAE against InfoGAN's WGAN-GP counterpart for disentanglement.

\section{Trade-off between Disentanglement and Reconstruction in $\beta$-VAE}
We motivate our approach by analysing where the disentanglement and reconstruction trade-off arises in the $\beta$-VAE objective. First, we introduce notation and architecture of our VAE framework. We assume that observations $x^{(i)}\in \mathcal{X}, i=1,\ldots,N$ are generated by combining $K$ underlying factors $f=(f_1,\ldots,f_K)$. These observations are modelled using a real-valued latent/code vector $z \in \mathbb{R}^d$, interpreted as the representation of the data. The generative model is defined by the standard Gaussian prior $p(z)=\mathcal{N}(0,I)$, intentionally chosen to be a factorised distribution, and the decoder $p_{\theta}(x|z)$ parameterised by a neural net. The variational posterior for an observation is  $q_{\theta}(z|x) = \prod_{j=1}^d \mathcal{N}(z_j|\mu_j(x), \sigma_j^2(x))$, with the mean and variance produced by the encoder, also parameterised by a neural net.\footnote{In the rest of the paper we will omit the dependence of $p$ and $q$ on their parameters $\theta$ for notational convenience.} The variational posterior can be seen as the distribution of the representation corresponding to the data point $x$. The distribution of representations for the entire data set is then given by
\begin{equation}
q(z) = \mathbb{E}_{p_{data}(x)}[q(z|x)] = \frac{1}{N} \sum_{i=1}^N q(z|x^{(i)}),
\end{equation}
which is known as the marginal posterior or aggregate posterior, where $p_{data}$ is the empirical data distribution. 
A disentangled representation would have each $z_j$ correspond to precisely one underlying factor $f_k$. Since we assume that these factors vary independently, we wish for a factorial distribution  $q(z) = \prod_{j=1}^d q(z_j)$.

The $\beta$-VAE objective 
\begin{equation}
\frac{1}{N}\sum_{i=1}^N \left[ \mathbb{E}_{q(z|x^{(i)})}[\log p(x^{(i)}|z)] - \beta  KL(q(z|x^{(i)})||p(z)) \right] \nonumber
\end{equation}
is a variational lower bound on $\mathbb{E}_{p_{data}(x)}[ \log p(x^{(i)})]$ for $\beta \geq 1$, reducing to the VAE objective for $\beta = 1$. Its first term can be interpreted as the negative \textit{reconstruction error}, and the second term as the complexity penalty that acts as a regulariser. We may further break down this KL term as  \cite{hoffman2016elbo, makhzani2017pixelgan}
\begin{equation}
\mathbb{E}_{p_{data}(x)}[KL(q(z|x)||p(z))] = I(x;z) + KL(q(z)||p(z)), \nonumber
\end{equation}
where $I(x;z)$ is the mutual information between $x$ and $z$ under the joint distribution $p_{data}(x)q(z|x)$. See \apdx{apd:kl_decomp} for the derivation. Penalising the $KL(q(z)||p(z))$ term pushes $q(z)$ towards the factorial prior $p(z)$, encouraging independence in the dimensions of $z$ and thus disentangling. Penalising $I(x;z)$, on the other hand, reduces the amount of information about $x$ stored in $z$, which can lead to poor reconstructions for high values of $\beta$ \cite{makhzani2017pixelgan}. Thus making $\beta$ larger than 1, penalising both terms more, leads to better disentanglement but reduces reconstruction quality. When this reduction is severe, there is insufficient information about the observation in the latents, making it impossible to recover the true factors. Therefore there exists a value of $\beta > 1$ that gives highest disentanglement, but results in a higher reconstruction error than a VAE. 

\section{Total Correlation Penalty and FactorVAE}
\label{factorVAE}
Penalising $I(x;z)$ more than a VAE does might be neither necessary nor desirable for disentangling. For example, InfoGAN disentangles by encouraging $I(x;c)$ to be high where $c$ is a subset of the latent variables $z$ \footnote{Note however that $I(x;z)$ in $\beta$-VAE is defined under the joint distribution of data and their encoding distribution $p_{data}(x)q(z|x)$, whereas $I(x;c)$ in InfoGAN is defined on the joint distribution of the prior on $c$ and the decoding distribution $p(c)p(x|c)$.}. Hence we motivate FactorVAE by augmenting the VAE objective with a term that directly encourages independence in the code distribution, arriving at the following objective:
\begin{align} \label{eq:loss}
\frac{1}{N}\sum_{i=1}^N  \Big[ \mathbb{E}_{q(z|x^{(i)})}[\log p(x^{(i)}|z)] &- KL(q(z|x^{(i)})||p(z)) \Big] \nonumber \\ 
&- \gamma KL(q(z)||\bar{q}(z)),
\end{align}
where $\bar{q}(z) \coloneqq \prod_{j=1}^d q(z_j)$. Note that this is also a lower bound on the marginal log likelihood $\mathbb{E}_{p_{data}(x)}[\log p(x)]$. $KL(q(z)||\bar{q}(z))$ is known as \textit{Total Correlation} \citep[TC,][]{watanabe1960information}, a popular measure of dependence for multiple random variables. In our case this term is intractable since both $q(z)$ and $\bar{q}(z)$ involve mixtures with a large number of components, and the direct Monte Carlo estimate requires a pass through the entire data set for each $q(z)$ evaluation.\footnote{We have also tried using a batch estimate of $q(z)$, but this did not work. See \apdx{apd:batch_estimate_q} for details.}. Hence we take an alternative approach for optimizing this term. We start by observing we can sample from $q(z)$ efficiently by first choosing a datapoint $x^{(i)}$ uniformly at random and then sampling from $q(z|x^{(i)})$. We can also sample from $\bar{q}(z)$ by generating $d$ samples from $q(z)$ and then ignoring all but one dimension for each sample. A more efficient alternative involves sampling a batch from $q(z)$ and then randomly permuting across the batch for each latent dimension (see \alg{alg:permute_dims}). This is a standard trick used in the independence testing literature \cite{arcones1992bootstrap} and as long as the batch is large enough, the distribution of these samples samples will closely approximate $\bar{q}(z)$. 

Having access to samples from both distributions allows us to minimise their KL divergence using the \textit{density-ratio trick} \cite{nguyen2010estimating,sugiyama2012density} which involves training a classifier/discriminator to approximate the density ratio that arises in the KL term. Suppose we have a discriminator $D$ (in our case an MLP) that outputs an estimate of the probability $D(z)$ that its input is a sample from $q(z)$ rather than from $\bar{q}(z)$. Then we have
\begin{align} \label{eq:density_ratio}
TC(z) & = KL(q(z)||\bar{q}(z)) = \mathbb{E}_{q(z)}\bigg[ \log \frac{q(z)}{\bar{q}(z)}\bigg] \nonumber \\ 
& \approx \mathbb{E}_{q(z)}\bigg[ \log \frac{D(z)}{1-D(z)}\bigg].
\end{align}
We train the discriminator and the VAE jointly. In particular, the VAE parameters are updated using the objective in \eqn{eq:loss}, with the TC term replaced using the discriminator-based approximation from \eqn{eq:density_ratio}. The discriminator is trained to classify between samples from $q(z)$ and $\bar{q}(z)$, thus learning to approximate the density ratio needed for estimating TC. See \alg{alg:factor_vae} for pseudocode of FactorVAE.

\begin{algorithm}[h!]
   \caption{\texttt{permute\_dims}}
   \label{alg:permute_dims}
\begin{algorithmic}
   \STATE {\bfseries Input:} $\{z^{(i)} \in \mathbb{R}^d: i=1,\ldots,B\}$
   \FOR{$j=1$ {\bfseries to} $d$}
   \STATE $\pi \leftarrow$ random permutation on $\{1,\ldots,B\}$
   \STATE $(z^{(i)}_j)_{i=1}^B \leftarrow (z^{(\pi(i))}_j)_{i=1}^B$
   \ENDFOR
   \STATE {\bfseries Output:} $\{z^{(i)}: i=1,\ldots,B\}$
\end{algorithmic}
\end{algorithm}

\begin{algorithm}[h!]
   \caption{FactorVAE}
   \label{alg:factor_vae}
\begin{algorithmic}
   \STATE {\bfseries Input:} observations $(x^{(i)})_{i=1}^N$, batch size $m$, latent dimension $d$, $\gamma$, VAE/Discriminator optimisers: $g$, $g_D$
   \STATE Initialize VAE and discriminator parameters $\theta,\psi$.
   \REPEAT
   \STATE Randomly select batch $(x^{(i)})_{i \in \mathcal{B}}$ of size $m$
   \STATE Sample $z_{\theta}^{(i)} \sim q_{\theta}(z|x^{(i)})$ $\forall i \in \mathcal{B}$ 
   \STATE $\theta \leftarrow$ $g(\nabla_{\theta} \frac{1}{m} \sum \limits_{i \in \mathcal{B}} [\log \frac{p_{\theta}(x^{(i)},z_{\theta}^{(i)})}{q_{\theta}(z_{\theta}^{(i)}|x^{(i)})} - \gamma \log \frac{D_{\psi}(z_{\theta}^{(i)})}{1-D_{\psi}(z_{\theta}^{(i)})}])$
   \STATE Randomly select batch $(x^{(i)})_{i \in \mathcal{B'}}$ of size $m$
   \STATE Sample $z'^{(i)}_{\theta} \sim q_{\theta}(z|x^{(i)})$ for $i \in \mathcal{B'}$
   \STATE $(z'^{(i)}_{perm})_{i \in \mathcal{B'}} \leftarrow$ \texttt{permute\_dims}($(z'^{(i)}_{\theta})_{i \in \mathcal{B'}}$)
   \STATE $\psi \leftarrow g_D(\nabla_{\psi} \frac{1}{2m} [\sum \limits_{i \in \mathcal{B}} \log(D_{\psi}(z^{(i)}_{\theta}))$
   \STATE \hspace{25mm} $+\sum \limits_{i \in \mathcal{B'}}\log(1-D_{\psi}(z'^{(i)}_{perm}))])$
   \UNTIL convergence of objective.
\end{algorithmic}
\end{algorithm}

It is important to note that low TC is necessary but not sufficient for meaningful disentangling. For example, when $q(z|x)=p(z)$, TC=0 but $z$ carries no information about the data. Thus having low TC is only meaningful when we can preserve information in the latents, which is why controlling for reconstruction error is important.

In the GAN literature, divergence minimisation is usually done between two distributions over the data space, which is often very high dimensional (\eg images). As a result, the two distributions often have disjoint support, making training unstable, especially when the discriminator is strong. Hence it is necessary to use tricks to weaken the discriminator such as instance noise \cite{sonderby2016amortised} or to replace the discriminator with a critic, as in Wasserstein GANs \cite{arjovsky2017wasserstein}. In this work, we minimise divergence between two distributions over the latent space (as in \eg  \cite{Mescheder2017ICML}), which is typically much lower dimensional and the two distributions have overlapping support. We observe that training is stable for sufficiently large batch sizes (\eg 64 worked well for $d=10$), allowing us to use a strong discriminator.

\section{A New Metric for Disentanglement}
\label{metric}

\begin{figure}[h!]
\vskip -0.1in
  \centering
  \includegraphics[width=\columnwidth]{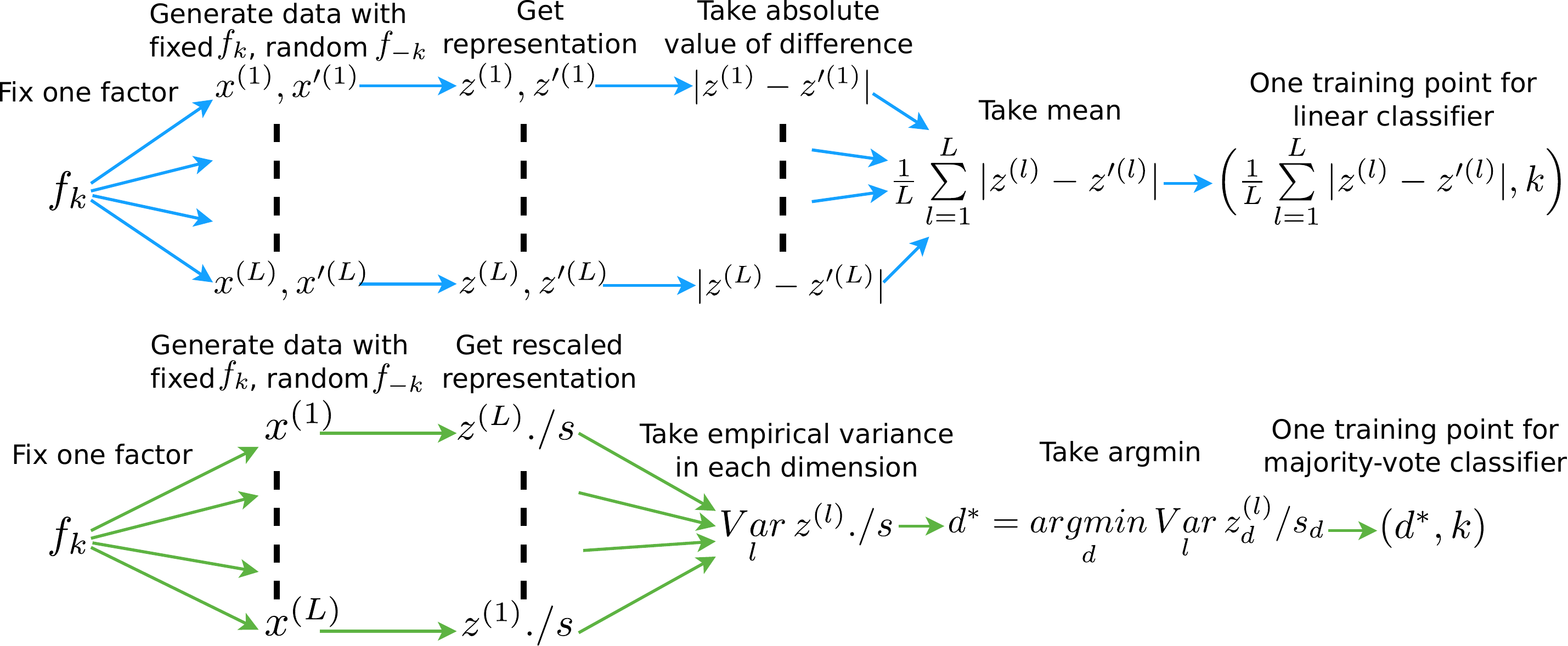}
    \vskip -0.1in
  \caption{Top: Metric in \cite{higgins2016beta}. Bottom: Our new metric, where $s \in \mathbb{R}^d$ is the scale (empirical standard deviation) of latent representations of the full data (or large enough random subset).}\label{fig:metric}
\vskip -0.1in
\end{figure}

The definition of disentanglement we use in this paper, where a change in one dimension of the representation corresponds to a change in exactly one factor of variation, is clearly a simplistic one. It does not allow correlations among the factors or hierarchies over them. Thus this definition seems more suited to synthetic data with independent factors of variation than to most realistic data sets. However, as we will show below, robust disentanglement is not a fully solved problem even in this simple setting. One obstacle on the way to this first milestone is the absence of a sound quantitative metric for measuring disentanglement.

A popular method of measuring disentanglement is by inspecting \textit{latent traversals}: visualising the change in reconstructions while traversing one dimension of the latent space at a time. Although latent traversals can be a useful indicator of when a model has failed to disentangle, the qualitative nature of this approach makes it unsuitable for comparing algorithms reliably. Doing this would require inspecting a multitude of latent traversals over multiple reference images, random seeds, and points during training.
Having a human in the loop to assess the traversals is also too time-consuming and subjective. Unfortunately, for data sets that do not have the ground truth factors of variation available, currently this is the only viable option for assessing disentanglement.

\citet{higgins2016beta} proposed a supervised metric that attempts to quantify disentanglement  when the ground truth factors of a data set are given. The metric is the error rate of a linear classifier that is trained as follows. Choose a factor $k$; generate data with this factor fixed but all other factors varying randomly; obtain their representations (defined to be the mean of $q(z|x)$); take the absolute value of the pairwise differences of these representations. Then the mean of these statistics across the pairs gives one training input for the classifier, and the fixed factor index $k$ is the corresponding training output (see top of \fig{fig:metric}). So if the representations were perfectly disentangled, we would see zeros in the dimension of the training input that corresponds to the fixed factor of variation, and the  classifier would learn to map the index of the zero value to the index of the factor. 

However this metric has several weaknesses. Firstly, it could be sensitive to hyperparameters of the linear classifier optimisation, such as the choice of the optimiser and its hyperparameters, weight initialisation, and the number of training iterations. 
Secondly, having a linear classifier is not so intuitive -- we could get representations where each factor corresponds to a linear combination of dimensions instead of a single dimension. Finally and most importantly, the metric has a failure mode: it gives 100\% accuracy even when only $K-1$ factors out of $K$ have been disentangled; to predict the remaining factor, the classifier simply learns to detect when all the values corresponding to the $K-1$ factors are non-zero. An example of such a case is shown in \fig{fig:failure}.

\begin{figure}[t!] 
\vskip -0.1in
  \centering
  \includegraphics[width=0.5\linewidth]{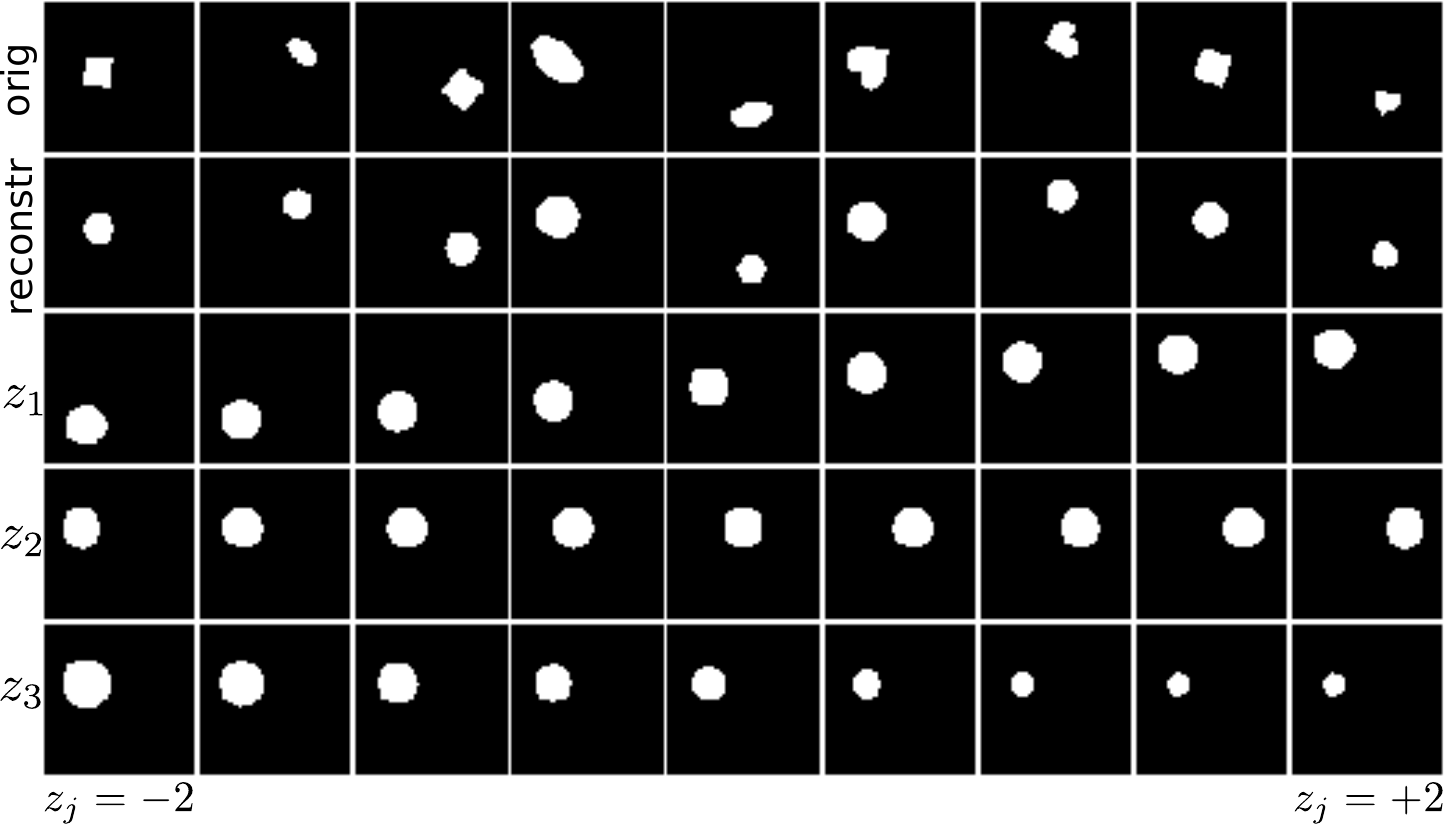}
  \vskip -0.1in
  \caption{A $\beta$-VAE model trained on the 2D Shapes data that scores 100\% on metric in \citet{higgins2016beta} (ignoring the shape factor). First row: originals. Second row: reconstructions. Remaining rows: reconstructions of latent traversals. The model only uses three latent units to capture $x$-position, $y$-position, scale and ignores orientation, yet achieves a perfect score on the metric.}\label{fig:failure}
\vskip -0.2in
\end{figure}

To address these weaknesses, we propose a new disentanglement metric as follows. Choose a factor $k$; generate data with this factor fixed but all other factors varying randomly; obtain their representations; normalise each dimension by its empirical standard deviation over the full data (or a large enough random subset); take the empirical variance in each dimension\footnote{We can use Gini's definition of variance for discrete latents \cite{gini1971variability}. See \apdx{apd:metric} for details.} of these normalised representations. Then the index of the dimension with the lowest variance and the target index $k$ provide one training input/output example for the classifier (see bottom of \fig{fig:metric}). Thus if the representation is perfectly disentangled, the empirical variance in the dimension corresponding to the fixed factor will be 0. We normalise the representations so that the $\argmin$ is invariant to rescaling of the representations in each dimension. Since both inputs and outputs lie in a discrete space, the optimal classifier is the majority-vote classifier (see \apdx{apd:metric} for details), and the metric is the error rate of the classifier. The resulting classifier is a deterministic function of the training data, hence there are no optimisation hyperparameters to tune. We also believe that this metric is conceptually simpler and more natural than the previous one. Most importantly, it circumvents the failure mode of the earlier metric, since the classifier needs to see the lowest variance in a latent dimension for a given factor to classify it correctly.

We think developing a reliable unsupervised disentangling metric that does not use the ground truth factors is an important direction for future research, since unsupervised disentangling is precisely useful for the scenario where we do not have access to the ground truth factors. With this in mind, we believe that having a reliable supervised metric is still valuable as it can serve as a gold standard for evaluating unsupervised metrics.

\section{Related Work}
\label{related}
There are several recent works that use a discriminator to optimise a divergence to encourage independence in the latent codes. Adversarial Autoencoder \citep[AAE,][]{makhzani2015adversarial} removes the $I(x;z)$ term in the VAE objective and maximizes the negative reconstruction error minus $KL(q(z)||p(z))$ via the density-ratio trick, showing applications in semi-supervised classification and unsupervised clustering. This means that the AAE objective is not a lower bound on the log marginal likelihood. Although optimising a lower bound is not strictly necessary for disentangling, it does ensure that we have a valid generative model; having a generative model with disentangled latents has the benefit of being a single model that can be useful for various tasks e.g. planning for model-based RL, visual concept learning and semi-supervised learning, to name a few. In PixelGAN Autoencoders \cite{makhzani2017pixelgan}, the same objective is used to study the decomposition of information between the latent code and the decoder.  The authors state that adding noise to the inputs of the encoder is crucial, which suggests that limiting the information that the code contains about the input is essential and that the $I(x;z)$ term should not be dropped from the VAE objective. \citet{brakel2017learning} also use a discriminator to penalise the Jensen-Shannon Divergence between the distribution of codes and the product of its marginals. However, they use the GAN loss with deterministic encoders and decoders and only explore their technique in the context of Independent Component Analysis source separation. 

Early works on unsupervised disentangling include \citep{schmidhuber1992learning} which attempts to disentangle codes in an autoencoder by penalising predictability of one latent dimension given the others and \citep{desjardins2012disentangling} where a variant of a Boltzmann Machine is used to disentangle two factors of variation in the data. More recently, \citet{achille2018information} have used a loss function that penalises TC in the context of supervised learning. They show that their approach can be extended to the VAE setting, but do not perform any experiments on disentangling to support the theory. In a concurrent work, \citet{kumar2017variational} used moment matching in VAEs to penalise the covariance between the latent dimensions, but did not constrain the mean or higher moments. We provide the objectives used in these related methods and show experimental results on disentangling performance, including AAE, in \apdx{apd:related}.

There have been various works that use the notion of predictability to quantify disentanglement, mostly predicting the value of ground truth factors $f=(f_1,\ldots,f_K)$ from the latent code $z$. This dates back to \citet{yang1997adaptive} who learn a linear map from representations to factors in the context of linear ICA, and quantify how close this map is to a permutation matrix. More recently \citet{eastwood2018framework} have extended this idea to disentanglement by training a Lasso regressor to map $z$ to $f$ and using its trained weights to quantify disentanglement. Like other regression-based approaches, this one introduces hyperparameters such as the optimiser and the Lasso penalty coefficient. The metric of \citet{higgins2016beta} as well as the one we proposed, predict the factor $k$ from the $z$ of images with a fixed $f_k$ but $f_{-k}$ varying randomly. \citet{schmidhuber1992learning} quantifies predictability between the different dimensions of $z$, using a predictor that is trained to predict $z_j$ from $z_{-j}$. 

\textit{Invariance} and \textit{equivariance} are frequently considered to be desirable properties of representations in the literature \cite{goodfellow2009measuring,kivinen2011transformation,lenc2015understanding}. A representation is said to be \textit{invariant} for a particular task if it does not change when nuisance factors of the data, that are irrelevant to the task, are changed. An \textit{equivariant} representation changes in a stable and predictable manner when altering a factor of variation. A disentangled representation, in the sense used in the paper, is equivariant, since changing one factor of variation will change one dimension of a disentangled representation in a predictable manner. Given a task, it will be easy to obtain an invariant representation from the disentangled representation by ignoring the dimensions encoding the nuisance factors for the task \cite{cohen2014learning}.

Building on a preliminary version of this paper, \cite{chen2018isolating} recently proposed a minibatch-based alternative to our density-ratio-trick-based method for estimating the Total Correlation and introduced an information-theoretic disentangling metric.

\section{Experiments} \label{exp_vae}

We compare FactorVAE to $\beta$-VAE on the following data sets with i) known generative factors: 1) \textbf{2D Shapes} \cite{dsprites17}: 737,280 binary $64\times64$ images of 2D shapes with ground truth factors[number of values]: shape[3], scale[6], orientation[40], x-position[32], y-position[32]. 2) \textbf{3D Shapes} \cite{3dshapes18}: 480,000 RGB $64\times64\times3$ images of 3D shapes with ground truth factors: shape[4], scale[8], orientation[15], floor colour[10], wall colour[10], object colour[10] ii) unknown generative factors: 3) \textbf{3D Faces} \cite{paysan20093d}: 239,840 grey-scale $64\times64$ images of 3D Faces. 4) \textbf{3D Chairs} \cite{aubry2014seeing}: 86,366 RGB $64\times64\times3$ images of chair CAD models. 5) \textbf{CelebA} (cropped version) \cite{liu2015deep}: 202,599 RGB $64\times64\times3$ images of celebrity faces. The experimental details such as encoder/decoder architectures and hyperparameter settings are in \apdx{apd:exp}. The details of the disentanglement metrics, along with a sensitivity analysis with respect to their hyperparameters, are given in \apdx{apd:metric}.

\begin{figure}[h!]
\vskip -0.1in
  \centering
  \includegraphics[width=\columnwidth]{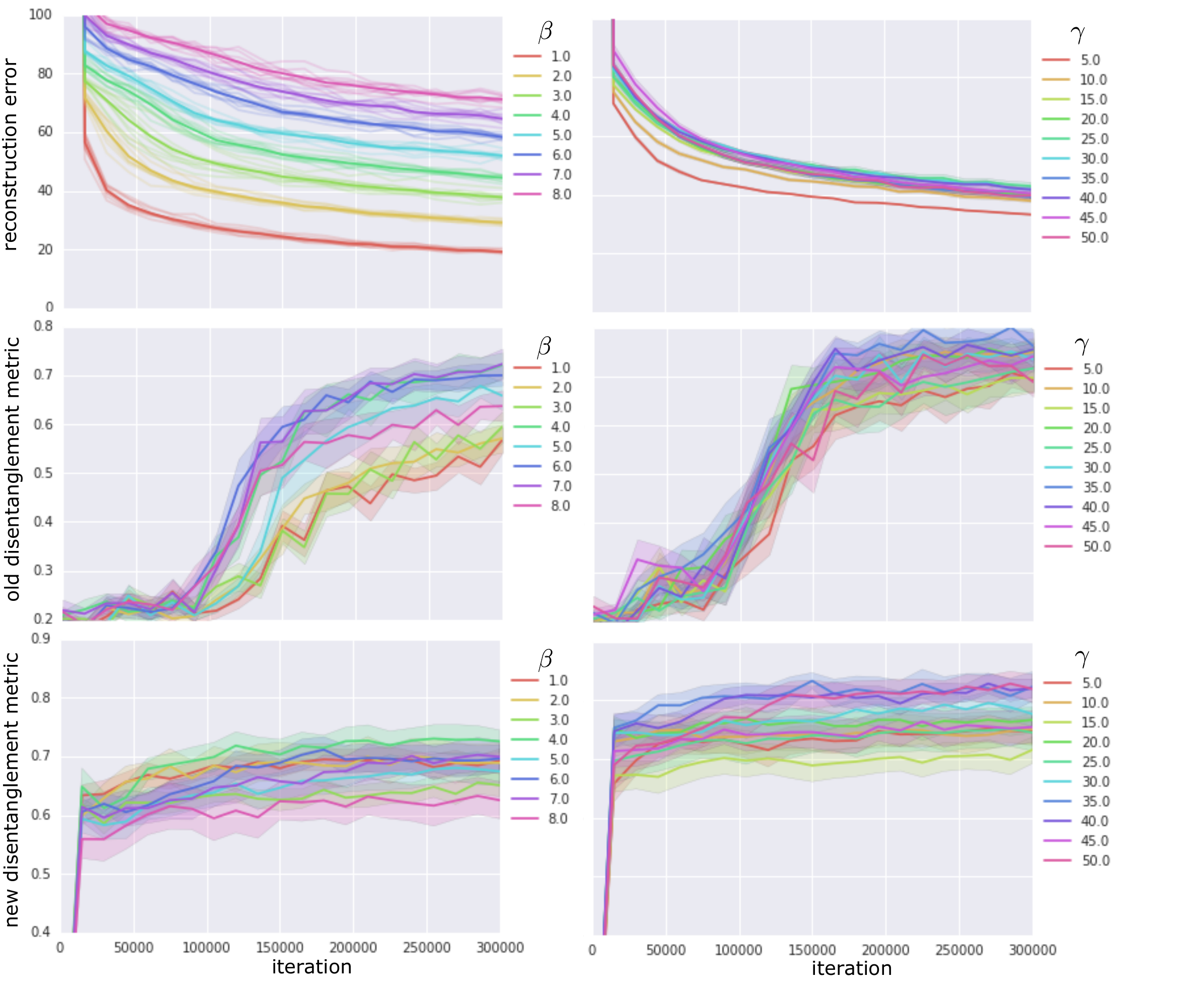}
  \vskip -0.1in
  \caption{Reconstruction error (top), metric in \citet{higgins2016beta} (middle), our metric (bottom). $\beta$-VAE (left), FactorVAE (right). The colours correspond to different values of $\beta$ and $\gamma$ respectively, and confidence intervals are over 10 random seeds.}\label{fig:shapes2d_plots}
\vskip -0.1in
\end{figure}

\begin{figure}[h!]
  \centering
  \includegraphics[width=0.65\columnwidth]{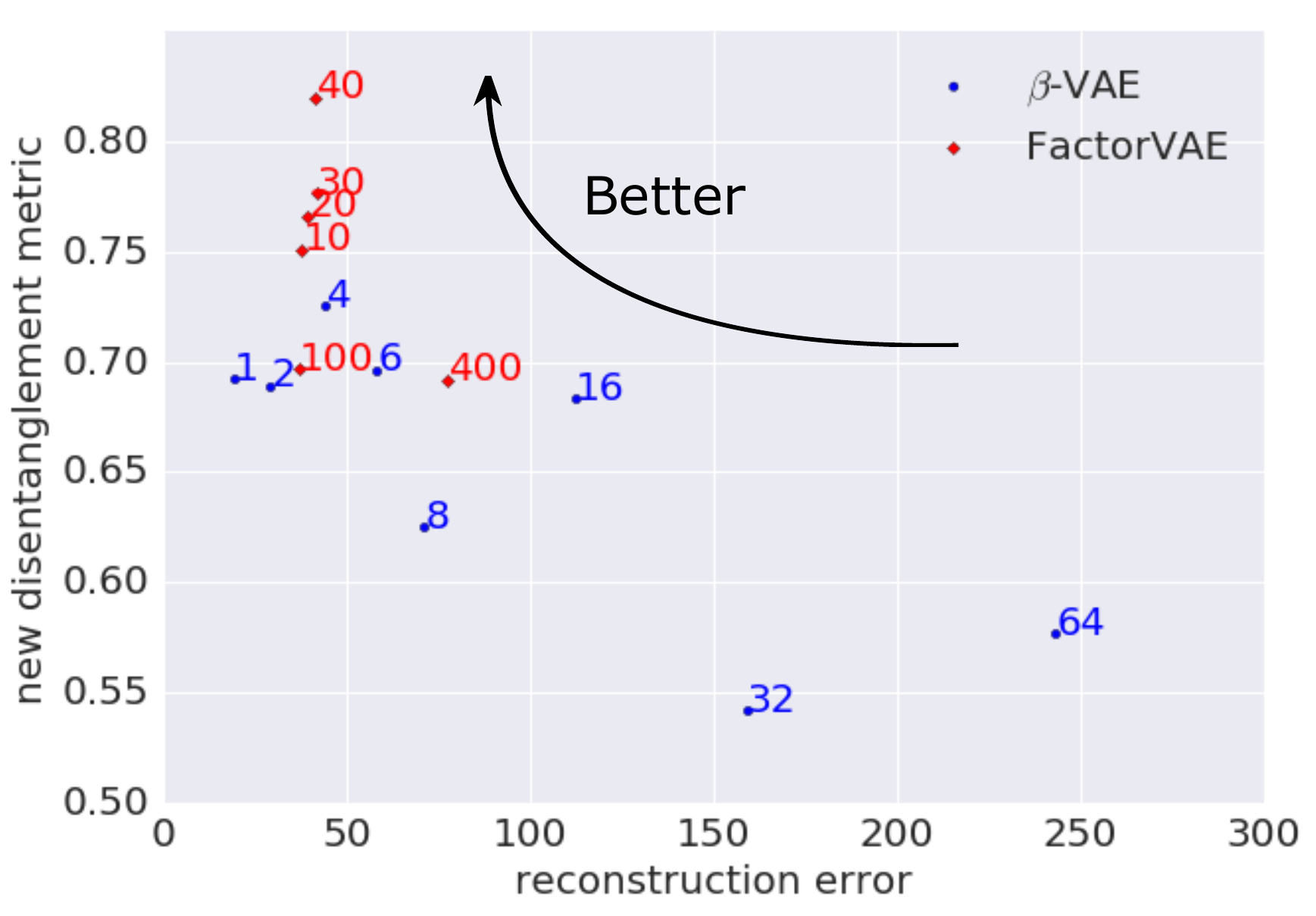}
  \vskip -0.1in
  \caption{Reconstruction error plotted against our disentanglement metric, both averaged over 10 random seeds at the end of training. The numbers at each point are values of $\beta$ and $\gamma$. Note that we want low reconstruction error and a high disentanglement metric.}\label{fig:shapes2d_tradeoff}
\end{figure}

\begin{figure}[h!]
\vskip -0.1in
  \centering
  \includegraphics[width=\columnwidth]{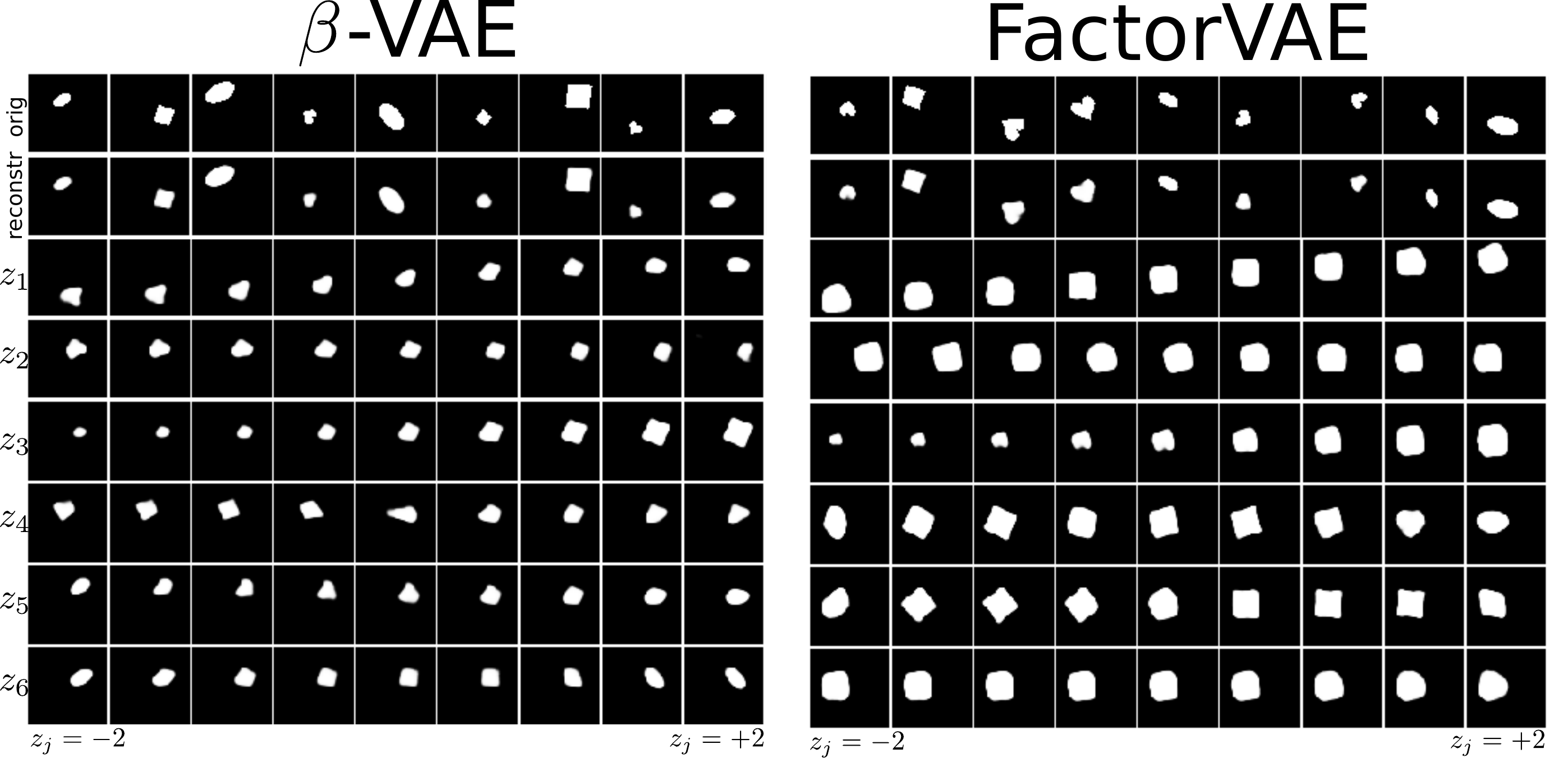}
  \vskip -0.1in
  \caption{First row: originals. Second row: reconstructions. Remaining rows: reconstructions of latent traversals across each latent dimension sorted by $KL(q(z_j|x)||p(z_j))$, for the best scoring models on our disentanglement metric. Left: $\beta$-VAE, score: 0.814, $\beta=4$. Right: FactorVAE, score: 0.889, $\gamma=35$.}\label{fig:shapes2d_latent_traversals}
\vskip -0.1in
\end{figure}

From \fig{fig:shapes2d_plots}, we see that FactorVAE gives much better disentanglement scores than VAEs ($\beta=1$), while barely sacrificing reconstruction error, highlighting the disentangling effect of adding the Total Correlation penalty to the VAE objective. The best disentanglement scores for FactorVAE are noticeably better than those for $\beta$-VAE given the same reconstruction error. This can be seen more clearly in Figure \ref{fig:shapes2d_tradeoff} where the best mean disentanglement of FactorVAE ($\gamma=40$) is around 0.82, significantly higher than the one for $\beta$-VAE ($\beta=4$), which is around 0.73, both with reconstruction error around 45. From Figure \ref{fig:shapes2d_latent_traversals}, we can see that both models are capable of finding $x$-position, $y$-position, and scale, but struggle to disentangle orientation and shape, $\beta$-VAE especially. For this data set, neither method can robustly capture shape, the discrete factor of variation\footnote{This is partly due to the fact that learning discrete factors would require using discrete latent variables instead of Gaussians, but jointly modelling discrete and continuous factors of variation is a non-trivial problem that needs further research.}.

As a sanity check, we also evaluated the correlation between our metric and the metric in \citet{higgins2016beta}: \textbf{Pearson} (linear correlation coefficient): 0.404, \textbf{Kendall} (proportion of pairs that have the same ordering): 0.310, \textbf{Spearman} (linear correlation of the rankings): 0.444, all with p-value 0.000. Hence the two metrics show a fairly high positive correlation as expected. 

\begin{figure}[t!]
\vskip -0.1in
  \centering
  \includegraphics[width=0.9\columnwidth]{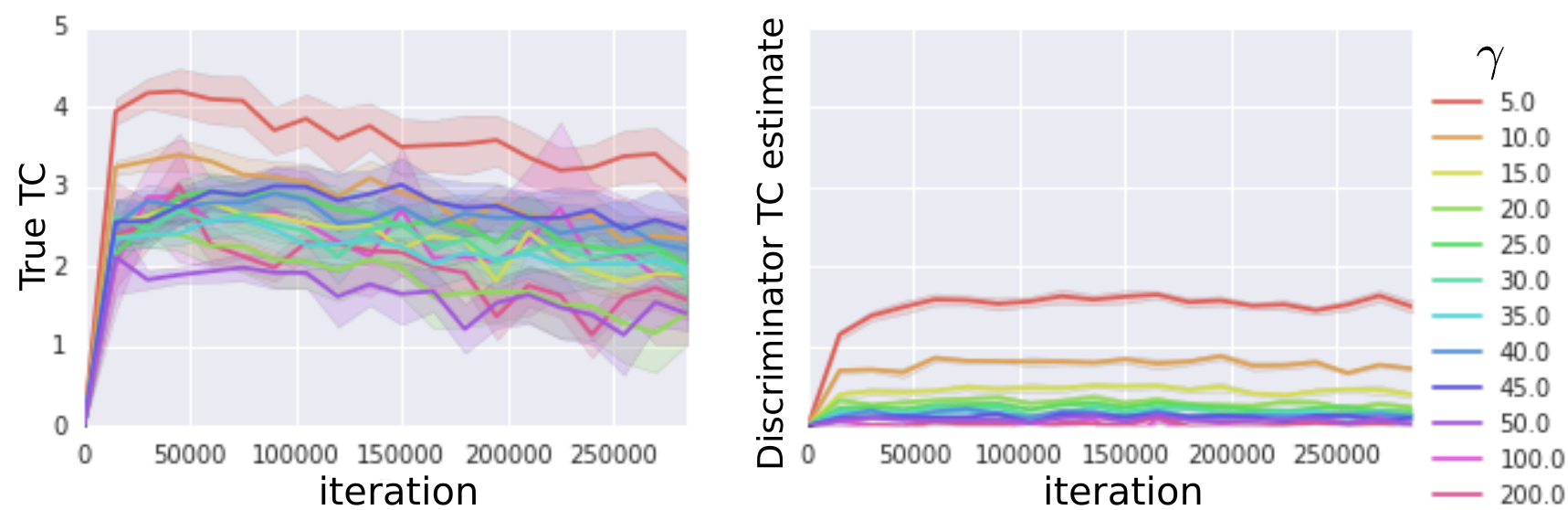}
  \vskip -0.1in
  \caption{Total Correlation values for FactorVAE on 2D Shapes. Left: True TC value. Right: Discriminator's estimate of TC.}\label{fig:shapes2d_tc}
\vskip -0.1in
\end{figure}

We have also examined how the discriminator's estimate of the Total Correlation (TC) behaves and the effect of  $\gamma$ on the true TC. From \fig{fig:shapes2d_tc}, observe that the discriminator is consistently underestimating the true TC, also confirmed in \cite{rosca2018distribution}. However the true TC decreases throughout training, and a higher $\gamma$ leads to lower TC, so the gradients obtained using the discriminator are sufficient for encouraging independence in the code distribution.

We then evaluated InfoWGAN-GP, the counterpart of InfoGAN that uses Wasserstein distance and gradient penalty. See \apdx{apd:infogan_overview} for an overview. One advantage of InfoGAN is that the Monte Carlo estimate of its objective is differentiable with respect to its parameters even for discrete codes $c$, which makes gradient-based optimisation straightforward. In contrast, VAE-based methods that rely on the reprameterisation trick for gradient-based optimisation require $z$ to be a reparameterisable continuous random variable and alternative approaches require various variance reduction techniques for gradient estimation   \cite{MnihRezende2016,maddison2016concrete}. Thus we might expect Info(W)GAN(-GP) to show better disentangling in cases where some factors are discrete. Hence we use 4 continuous latents (one for each continuous factor) and one categorical latent of 3 categories (one for each shape). We tuned for $\lambda$, the weight of the mutual information term in Info(W)GAN(-GP), $\in \{0.0,0.1,0.2,\ldots,1.0\}$, number of noise variables $\in \{5,10,20,40,80,160\}$ and the learning rates of the generator $\in \{10^{-3}, 10^{-4}\}$, discriminator $\in \{10^{-4}, 10^{-5}\}$.

\begin{figure}[h!]
  \centering
  \includegraphics[width=\columnwidth]{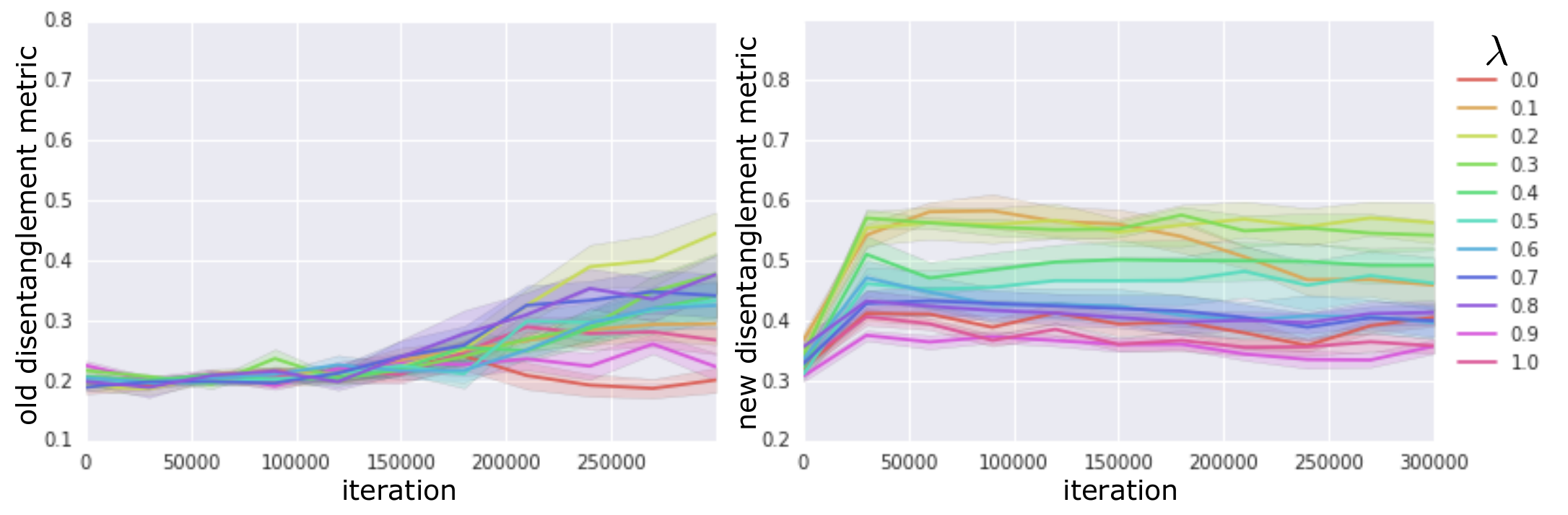}
  \vskip -0.1in
  \caption{Disentanglement scores for InfoWGAN-GP on 2D Shapes for 10 random seeds per hyperparameter setting. Left: Metric in \citet{higgins2016beta}. Right: Our metric.}\label{fig:infowgan_gp_shapes2d_disent}
\vskip -0.1in
\end{figure}

\begin{figure}[h!]
  \centering
  \includegraphics[width=0.7\columnwidth]{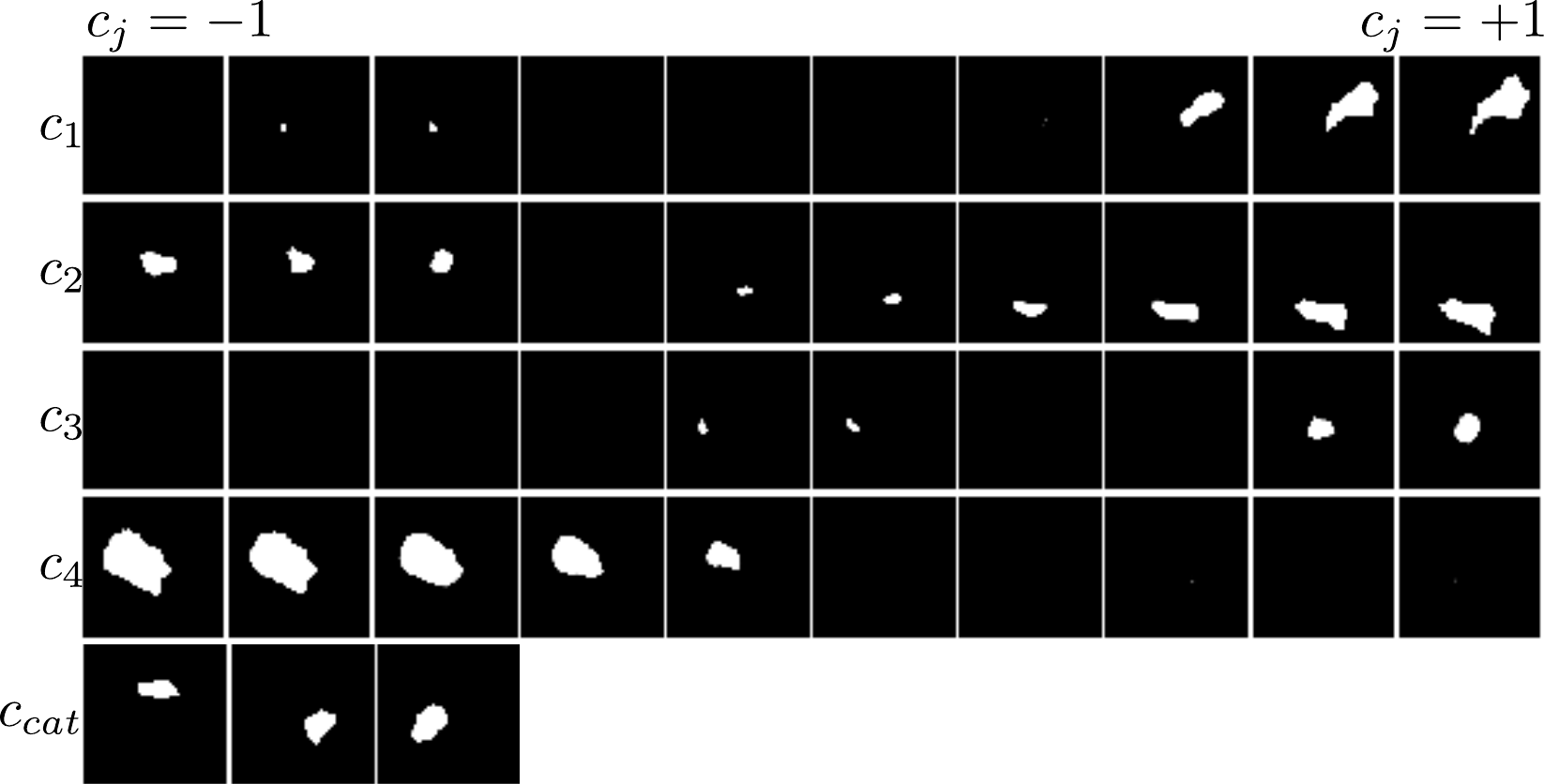}
  \vskip -0.1in
  \caption{Latent traversals for InfoWGAN-GP on 2D Shapes across four continuous codes (first four rows) and categorical code (last row) for run with best disentanglement score ($\lambda=0.2$).}\label{fig:infowgan_gp_shapes2d_latent_traversals}
\end{figure}

However from \fig{fig:infowgan_gp_shapes2d_disent} we can see that the disentanglement scores are disappointingly low. From the latent traversals in  \fig{fig:infowgan_gp_shapes2d_latent_traversals}, we can see that the model learns only the scale factor, and tries to put positional information in the discrete latent code, which is one reason for the low disentanglement score. Using 5 continuous codes and no categorical codes did not improve the disentanglement scores however. InfoGAN with early stopping (before training instability occurs -- see \apdx{apd:infogan}) also gave similar results. The fact that some  latent traversals give blank reconstructions indicates that the model does not generalise well to all parts of the domain of $p(z)$.

One reason InfoWGAN-GP's poor performance on this data set could be that InfoGAN is sensitive to the generator and discriminator architecture, which is one thing we did not tune extensively. We use a similar architecture to the VAE-based approaches for 2D shapes for a fair comparison, but have also tried a bigger architecture which gave similar results (see \apdx{apd:infogan}). If architecture search is indeed important, this would be a weakness of InfoGAN relative to FactorVAE and $\beta$-VAE, which are both much more robust to architecture choice. In \apdx{apd:infogan}, we check that we can replicate the results of \citet{chen2016infogan} on MNIST using InfoWGAN-GP, verify that it makes training stable compared to InfoGAN, and give implementation details with further empirical studies of InfoGAN and InfoWGAN-GP. 

We now show results on the 3D Shapes data, which is a more complex data set of 3D scenes with additional features such as shadows and background (sky). We train both $\beta$-VAE and FactorVAE for 1M iterations. \fig{fig:shapes3d_tradeoff} again shows that FactorVAE achieves much better disentanglement with barely any increase in reconstruction error compared to VAE. Moreover, while the top mean disentanglement scores for FactorVAE and $\beta$-VAE are similar, the reconstruction error is lower for FactorVAE: 3515 ($\gamma=36$) as compared to 3570 ($\beta=24$). The latent traversals in \fig{fig:shapes3d_latent_traversals} show that both models are able to capture the factors of variation in the best-case scenario. Looking at latent traversals across many random seeds, however, makes it evident that both models struggled to disentangle the factors for shape and scale.

\begin{figure}[t!]
\vskip -0.1in
  \centering
  \includegraphics[width=0.65\columnwidth]{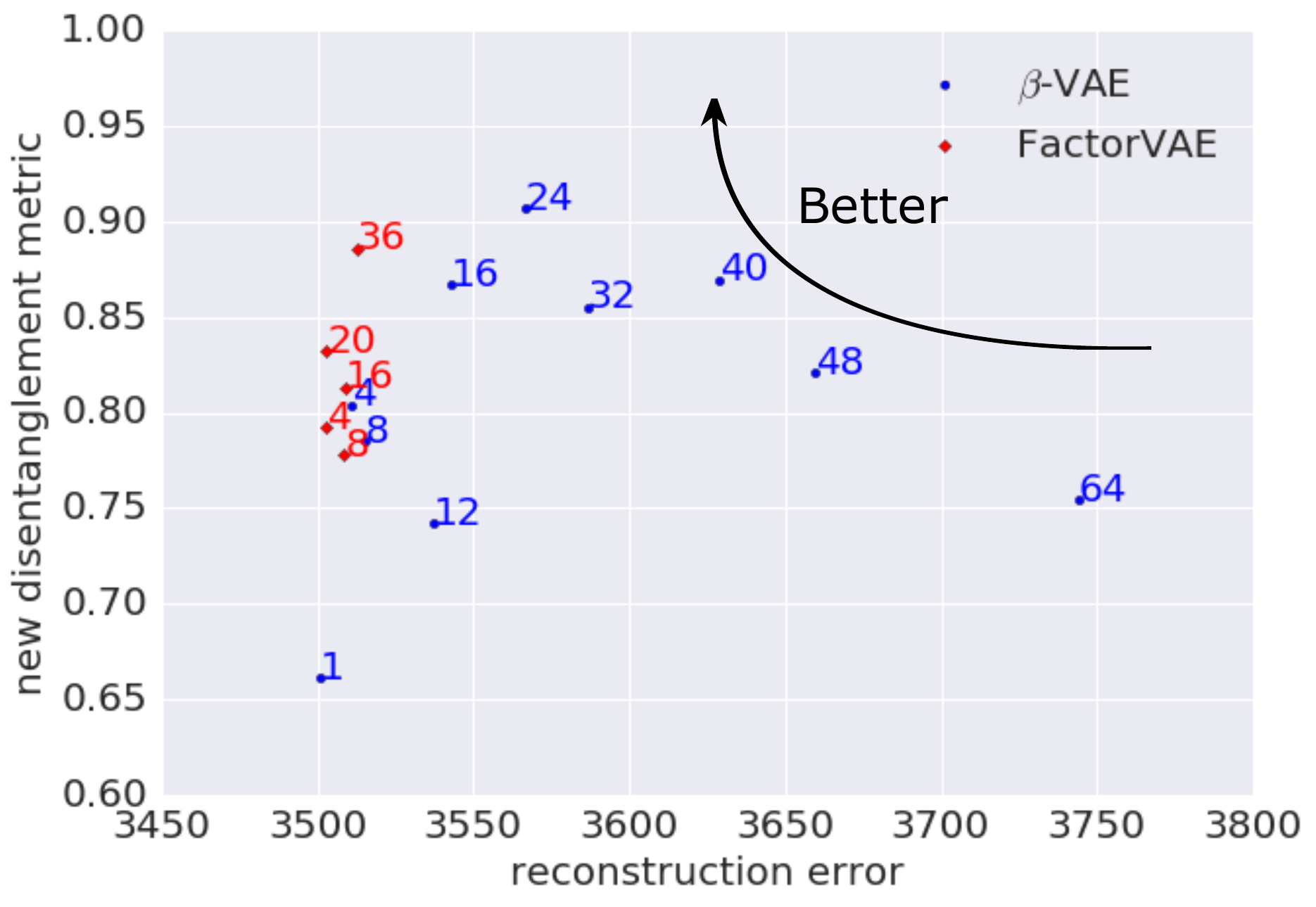}
  \vskip -0.1in
  \caption{Same as \fig{fig:shapes2d_tradeoff} for 3D Shapes data.}\label{fig:shapes3d_tradeoff}
\vskip -0.1in
\end{figure}

\begin{figure}[t!]
  \centering
  \includegraphics[width=\columnwidth]{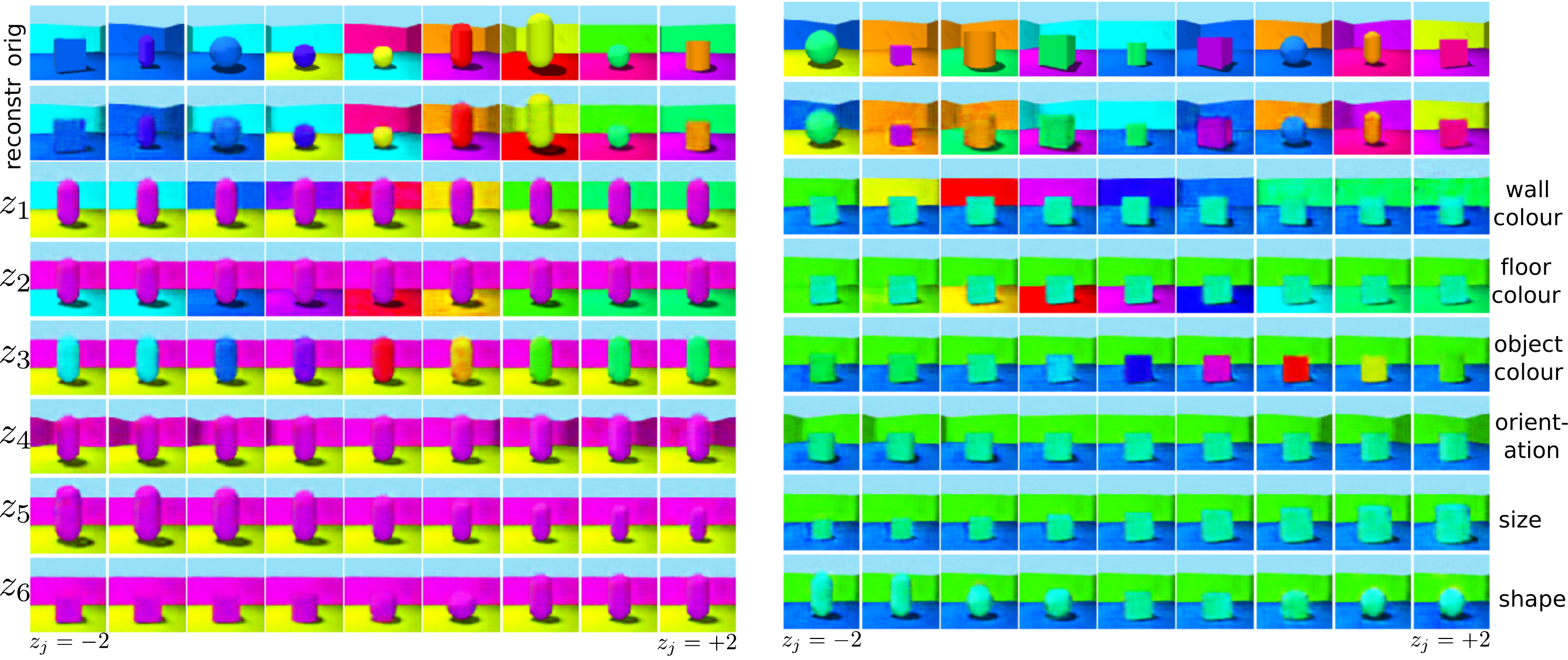}
  \vskip -0.1in
  \caption{Same as \fig{fig:shapes2d_latent_traversals} but for 3D Shapes data. Left: $\beta$-VAE, score: 1.00, $\beta=32$. Right: FactorVAE, score: 1.00, $\gamma=7$.}\label{fig:shapes3d_latent_traversals}
\vskip -0.1in
\end{figure}

To show that FactorVAE also gives a valid generative model for both 2D Shapes and 3D Shapes, we present the log marginal likelihood evaluated on the entire data set together with samples from the generative model in \apdx{apd:generative}.



We also show results for $\beta$-VAE and FactorVAE experiments on the data sets with unknown generative factors, namely 3D Chairs, 3D Faces, and CelebA. Note that inspecting latent traversals is the only evaluation method possible here. We can see from Figure \ref{fig:faces3d_rec_err} (and Figures~\ref{fig:chairs_rec_err} and \ref{fig:celeba_rec_err} in \apdx{apd:further_results}) that FactorVAE has smaller reconstruction error compared to $\beta$-VAE, and is capable of learning sensible factors of variation, as shown in the latent traversals in Figures~\ref{fig:chairs_latent_traversals}, \ref{fig:faces3d_latent_traversals} and \ref{fig:celeba_latent_traversals}. Unfortunately, as explained in \sect{metric}, latent traversals tell us little about the robustness of our method.

\begin{figure}[h!]
  \centering
  \includegraphics[width=0.8\columnwidth]{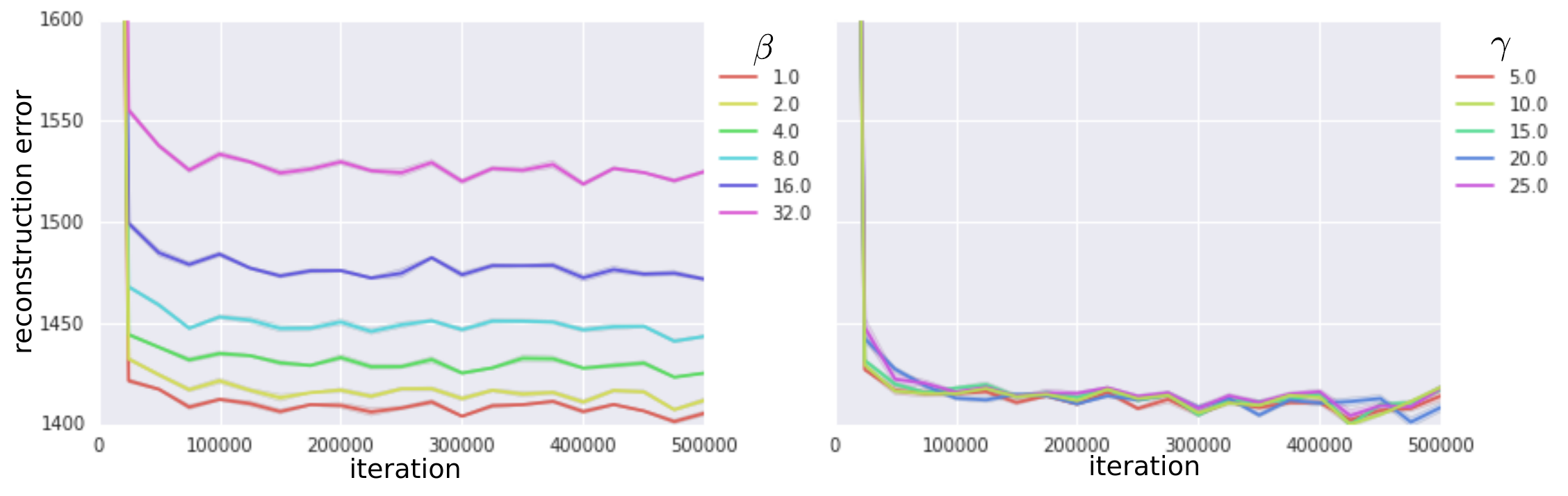}
  \vskip -0.1in
  \caption{Plots of reconstruction error of $\beta$-VAE (left) and FactorVAE (right) for different values of $\beta$ and $\gamma$ on 3D Faces data over 5 random seeds.}\label{fig:faces3d_rec_err}
\vskip -0.1in
\end{figure}

\begin{figure}[h!]
  \centering
  \includegraphics[width=0.8\columnwidth]{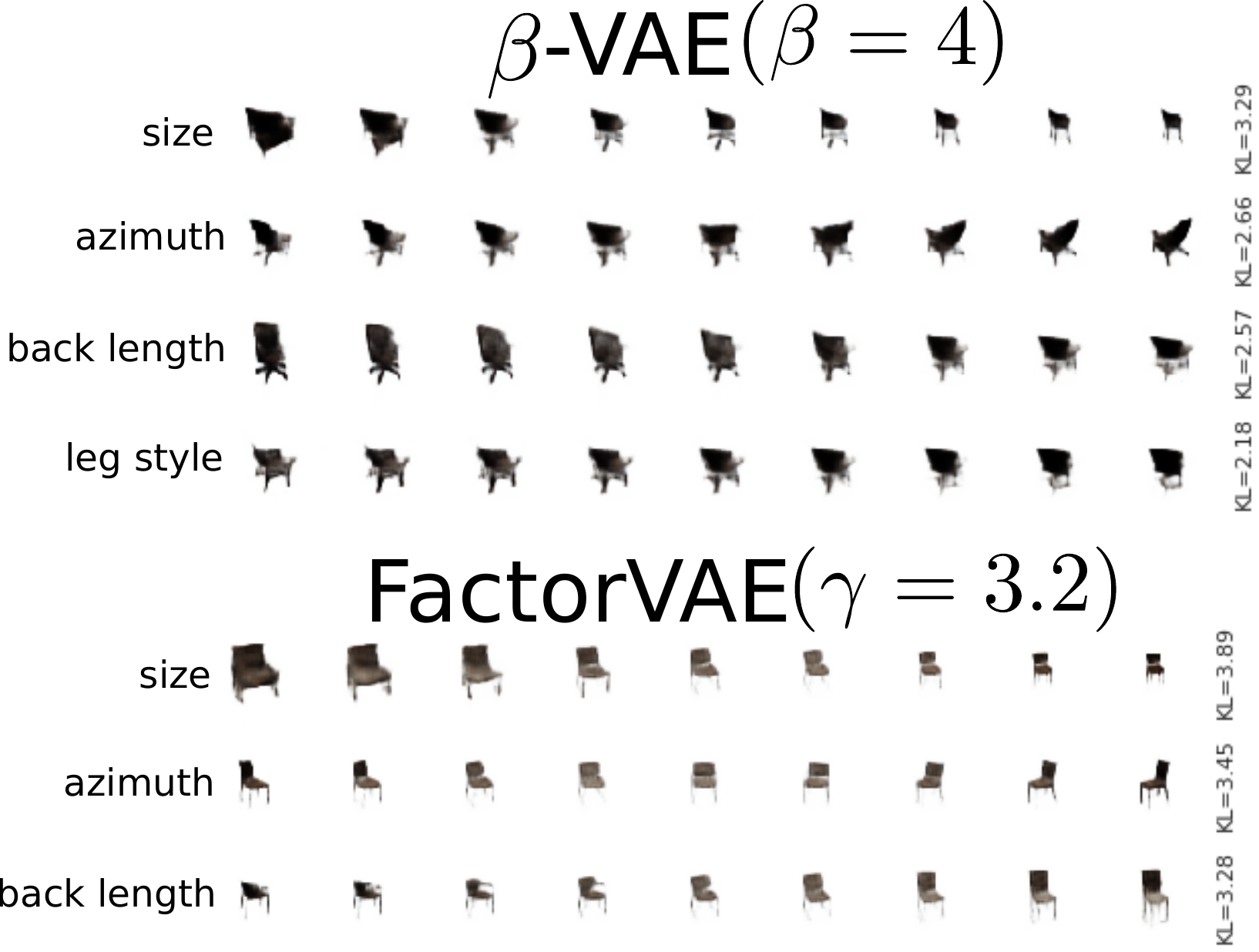}
  \vskip -0.1in
  \caption{$\beta$-VAE and FactorVAE latent traversals across each latent dimension sorted by KL on 3D Chairs, with annotations of the factor of variation corresponding to each latent unit.}\label{fig:chairs_latent_traversals}
\vskip -0.1in
\end{figure}

\begin{figure}[h!]
  \centering
  \includegraphics[width=\columnwidth]{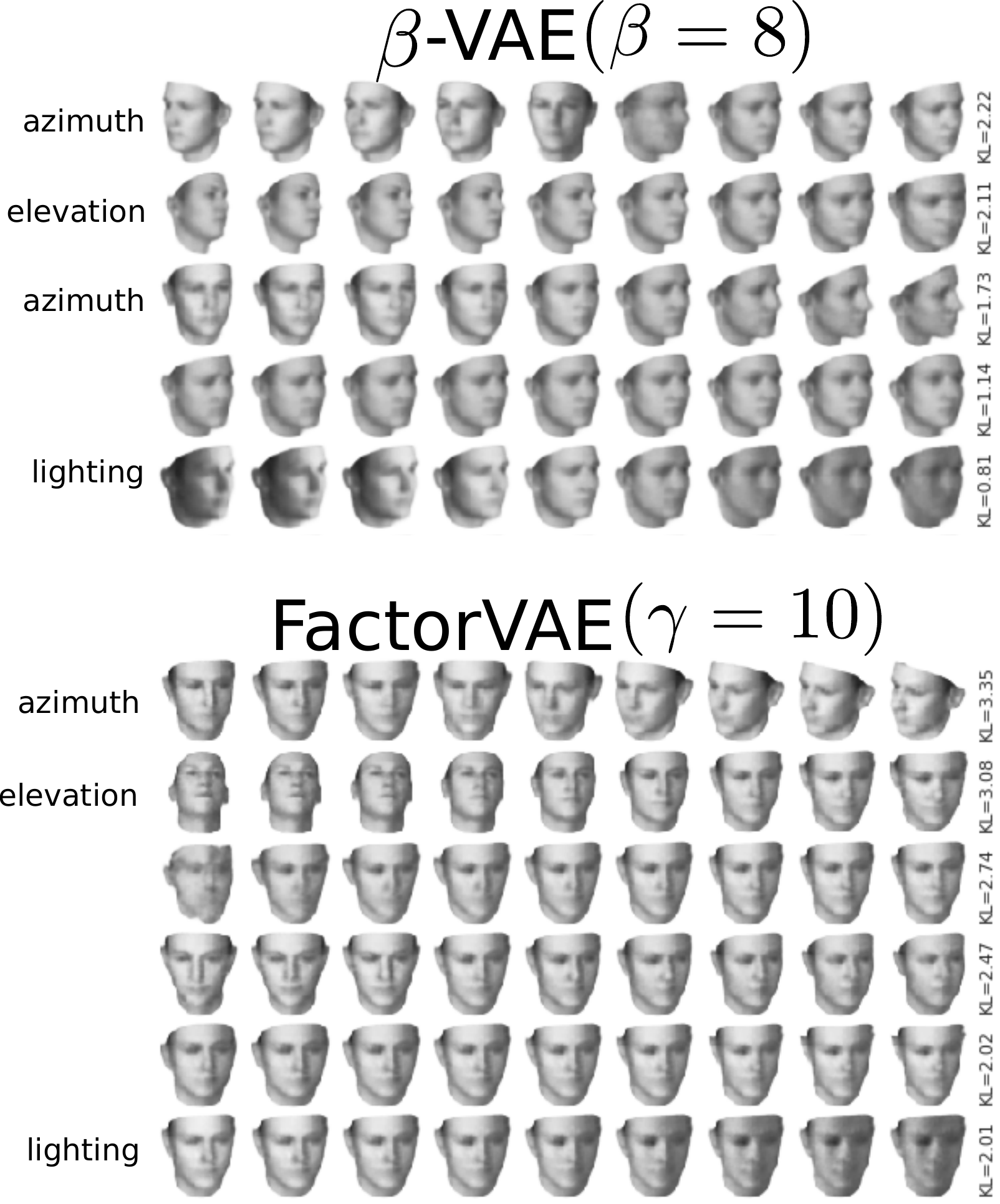}
  \vskip -0.1in
  \caption{Same as \fig{fig:chairs_latent_traversals} but for 3D Faces.}\label{fig:faces3d_latent_traversals}
\vskip -0.1in
\end{figure}

\begin{figure}[h!]
  \centering
  \includegraphics[width=0.9\columnwidth]{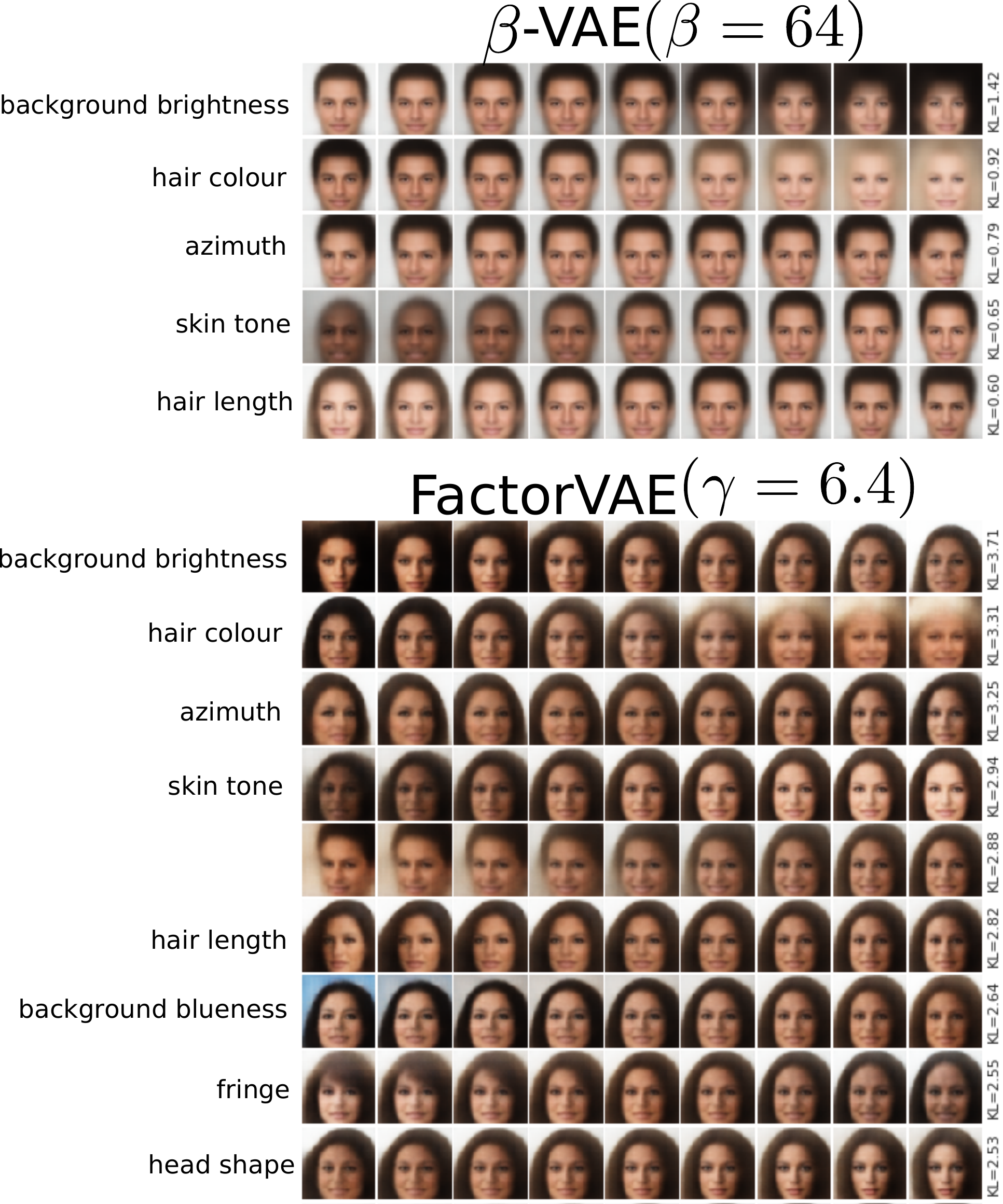}
  \vskip -0.1in
  \caption{Same as Figure \ref{fig:chairs_latent_traversals} but for CelebA.}\label{fig:celeba_latent_traversals}
\vskip -0.1in
\end{figure}

\section{Conclusion and Discussion}
We have introduced FactorVAE, a novel method for disentangling that achieves better disentanglement scores than $\beta$-VAE on the 2D Shapes and 3D Shapes data sets for the same reconstruction quality. Moreover, we have identified weaknesses of the commonly used disentanglement metric of \citet{higgins2016beta}, and proposed an alternative metric that is conceptually simpler, is free of hyperparameters, and avoids the failure mode of the former. Finally, we have performed an experimental evaluation of disentangling for the VAE-based methods and InfoWGAN-GP, a more stable variant of InfoGAN, and identified its weaknesses relative to the VAE-based methods.

One of the limitations of our approach is that low Total Correlation is necessary but not sufficient for disentangling of independent factors of variation. For example, if all but one of the latent dimensions were to collapse to the prior, the TC would be 0 but the representation would not be disentangled. Our disentanglement metric also requires us to be able to generate samples holding one factor fixed, which may not always be possible, for example when our training set does not cover all possible combinations of factors. The metric is also unsuitable for data with non-independent factors of variation.

For future work, we would like to use discrete latent variables to model discrete factors of variation and investigate how to reliably capture combinations of discrete and continuous factors using discrete and continuous latents.

\section*{Acknowledgements}
We thank Chris Burgess and Nick Watters for providing the data sets and helping to set them up, and thank Guillaume Desjardins, Sergey Bartunov, Mihaela Rosca, Irina Higgins and Yee Whye Teh for helpful discussions.



\bibliography{refs}

\begin{thebibliography}{55}
\providecommand{\natexlab}[1]{#1}
\providecommand{\url}[1]{\texttt{#1}}
\expandafter\ifx\csname urlstyle\endcsname\relax
  \providecommand{\doi}[1]{doi: #1}\else
  \providecommand{\doi}{doi: \begingroup \urlstyle{rm}\Url}\fi

\bibitem[Achille \& Soatto(2018)Achille and Soatto]{achille2018information}
Achille, A. and Soatto, S.
\newblock Information {D}ropout: {L}earning optimal representations through
  noisy computation.
\newblock \emph{IEEE Transactions on Pattern Analysis and Machine
  Intelligence}, 2018.

\bibitem[Arcones \& Gine(1992)Arcones and Gine]{arcones1992bootstrap}
Arcones, M.~A. and Gine, E.
\newblock On the bootstrap of {U} and {V} statistics.
\newblock \emph{The Annals of Statistics}, pp.\  655--674, 1992.

\bibitem[Arjovsky et~al.(2017)Arjovsky, Chintala, and
  Bottou]{arjovsky2017wasserstein}
Arjovsky, M., Chintala, S., and Bottou, L.
\newblock {W}asserstein {G}enerative {A}dversarial {N}etworks.
\newblock In \emph{ICML}, 2017.

\bibitem[Aubry et~al.(2014)Aubry, Maturana, Efros, Russell, and
  Sivic]{aubry2014seeing}
Aubry, M., Maturana, D., Efros, A.~A., Russell, B.~C., and Sivic, J.
\newblock Seeing 3{D} chairs: exemplar part-based 2{D}-3{D} alignment using a
  large dataset of cad models.
\newblock In \emph{CVPR}, 2014.

\bibitem[Ba et~al.(2016)Ba, Kiros, and Hinton]{ba2016layer}
Ba, J.~L., Kiros, J.~R., and Hinton, G.~E.
\newblock Layer normalization.
\newblock \emph{arXiv preprint arXiv:1607.06450}, 2016.

\bibitem[Bengio et~al.(2013)Bengio, Courville, and
  Vincent]{bengio2013representation}
Bengio, Y., Courville, A., and Vincent, P.
\newblock Representation learning: A review and new perspectives.
\newblock \emph{IEEE transactions on Pattern Analysis and Machine
  Intelligence}, 35\penalty0 (8):\penalty0 1798--1828, 2013.

\bibitem[Brakel \& Bengio(2017)Brakel and Bengio]{brakel2017learning}
Brakel, P. and Bengio, Y.
\newblock Learning independent features with adversarial nets for non-linear
  {ICA}.
\newblock \emph{arXiv preprint arXiv:1710.05050}, 2017.

\bibitem[Burgess \& Kim(2018)Burgess and Kim]{3dshapes18}
Burgess, C. and Kim, H.
\newblock 3d shapes dataset.
\newblock https://github.com/deepmind/3dshapes-dataset/, 2018.

\bibitem[Chen et~al.(2018)Chen, Li, Grosse, and Duvenaud]{chen2018isolating}
Chen, T.~Q., Li, X., Grosse, R.~B., and Duvenaud, D.~K.
\newblock Isolating sources of disentanglement in variational autoencoders.
\newblock In \emph{Advances in Neural Information Processing Systems}, pp.\
  2615--2625, 2018.

\bibitem[Chen et~al.(2016)Chen, Duan, Houthooft, Schulman, Sutskever, and
  Abbeel]{chen2016infogan}
Chen, X., Duan, Y., Houthooft, R., Schulman, J., Sutskever, I., and Abbeel, P.
\newblock Info{GAN}: Interpretable representation learning by information
  maximizing {G}enerative {A}dversarial {N}ets.
\newblock In \emph{NIPS}, 2016.

\bibitem[Cohen \& Welling(2014)Cohen and Welling]{cohen2014learning}
Cohen, T. and Welling, M.
\newblock Learning the irreducible representations of commutative lie groups.
\newblock In \emph{ICML}, 2014.

\bibitem[Denton \& Birodkar(2017)Denton and Birodkar]{denton2017unsupervised}
Denton, E.~L. and Birodkar, V.
\newblock Unsupervised learning of disentangled representations from video.
\newblock In \emph{NIPS}, 2017.

\bibitem[Desjardins et~al.(2012)Desjardins, Courville, and
  Bengio]{desjardins2012disentangling}
Desjardins, G., Courville, A., and Bengio, Y.
\newblock Disentangling factors of variation via generative entangling.
\newblock \emph{arXiv preprint arXiv:1210.5474}, 2012.

\bibitem[Duchi et~al.(2011)Duchi, Hazan, and Singer]{duchi2011adaptive}
Duchi, J., Hazan, E., and Singer, Y.
\newblock Adaptive subgradient methods for online learning and stochastic
  optimization.
\newblock \emph{JMLR}, 12\penalty0 (Jul):\penalty0 2121--2159, 2011.

\bibitem[Eastwood \& Williams(2018)Eastwood and
  Williams]{eastwood2018framework}
Eastwood, C. and Williams, C.
\newblock A framework for the quantitative evaluation of disentangled
  representations.
\newblock In \emph{ICLR}, 2018.

\bibitem[Gini(1971)]{gini1971variability}
Gini, C.~W.
\newblock Variability and mutability, contribution to the study of statistical
  distributions and relations.
\newblock \emph{Journal of American Statistical Association}, 66:\penalty0
  534--544, 1971.

\bibitem[Goodfellow et~al.(2009)Goodfellow, Lee, Le, Saxe, and
  Ng]{goodfellow2009measuring}
Goodfellow, I., Lee, H., Le, Q.~V., Saxe, A., and Ng, A.~Y.
\newblock Measuring invariances in deep networks.
\newblock In \emph{NIPS}, 2009.

\bibitem[Goodfellow et~al.(2014)Goodfellow, Pouget-Abadie, Mirza, Xu,
  Warde-Farley, Ozair, Courville, and Bengio]{goodfellow2014generative}
Goodfellow, I., Pouget-Abadie, J., Mirza, M., Xu, B., Warde-Farley, D., Ozair,
  S., Courville, A., and Bengio, Y.
\newblock Generative {A}dversarial {N}ets.
\newblock In \emph{NIPS}, 2014.

\bibitem[Goroshin et~al.(2015)Goroshin, Bruna, Tompson, Eigen, and
  LeCun]{goroshin2015unsupervised}
Goroshin, R., Bruna, J., Tompson, J., Eigen, D., and LeCun, Y.
\newblock Unsupervised learning of spatiotemporally coherent metrics.
\newblock In \emph{ICCV}, 2015.

\bibitem[Gulrajani et~al.(2017)Gulrajani, Ahmed, Arjovsky, Dumoulin, and
  Courville]{gulrajani2017improved}
Gulrajani, I., Ahmed, F., Arjovsky, M., Dumoulin, V., and Courville, A.
\newblock Improved training of wasserstein {GAN}s.
\newblock In \emph{NIPS}, 2017.

\bibitem[Higgins et~al.(2016)Higgins, Matthey, Pal, Burgess, Glorot, Botvinick,
  Mohamed, and Lerchner]{higgins2016beta}
Higgins, I., Matthey, L., Pal, A., Burgess, C., Glorot, X., Botvinick, M.,
  Mohamed, S., and Lerchner, A.
\newblock {B}eta-{VAE}: {L}earning basic visual concepts with a constrained
  variational framework.
\newblock 2016.

\bibitem[Hinton et~al.(2011)Hinton, Krizhevsky, and
  Wang]{hinton2011transforming}
Hinton, G.~E., Krizhevsky, A., and Wang, S.~D.
\newblock Transforming auto-encoders.
\newblock In \emph{International Conference on Artificial Neural Networks},
  pp.\  44--51. Springer, 2011.

\bibitem[Hoffman \& Johnson(2016)Hoffman and Johnson]{hoffman2016elbo}
Hoffman, M.~D. and Johnson, M.~J.
\newblock {ELBO} surgery: yet another way to carve up the variational evidence
  lower bound.
\newblock In \emph{Workshop in Advances in Approximate Bayesian Inference,
  NIPS}, 2016.

\bibitem[Hsu et~al.(2017)Hsu, Zhang, and Glass]{hsu2017unsupervised}
Hsu, W.~N., Zhang, Y., and Glass, J.
\newblock Unsupervised learning of disentangled and interpretable
  representations from sequential data.
\newblock In \emph{NIPS}, 2017.

\bibitem[Ioffe \& Szegedy(2015)Ioffe and Szegedy]{ioffe2015batch}
Ioffe, S. and Szegedy, C.
\newblock Batch normalization: Accelerating deep network training by reducing
  internal covariate shift.
\newblock In \emph{ICML}, 2015.

\bibitem[Kingma \& Ba(2015)Kingma and Ba]{kingma2014adam}
Kingma, D.~P. and Ba, J.
\newblock Adam: A method for stochastic optimization.
\newblock In \emph{ICLR}, 2015.

\bibitem[Kingma \& Welling(2014)Kingma and Welling]{kingma2013auto}
Kingma, D.~P. and Welling, M.
\newblock Auto-encoding variational {B}ayes.
\newblock 2014.

\bibitem[Kingma et~al.(2014)Kingma, Mohamed, Rezende, and
  Welling]{kingma2014semi}
Kingma, D.~P., Mohamed, S., Rezende, D.~J., and Welling, M.
\newblock Semi-supervised learning with deep generative models.
\newblock In \emph{NIPS}, 2014.

\bibitem[Kivinen \& Williams(2011)Kivinen and
  Williams]{kivinen2011transformation}
Kivinen, J.~J. and Williams, C.
\newblock Transformation equivariant boltzmann machines.
\newblock In \emph{International Conference on Artificial Neural Networks},
  2011.

\bibitem[Kulkarni et~al.(2015)Kulkarni, Whitney, Kohli, and
  Tenenbaum]{kulkarni2015deep}
Kulkarni, T., Whitney, W.~F., Kohli, P., and Tenenbaum, J.
\newblock Deep convolutional inverse graphics network.
\newblock In \emph{NIPS}, 2015.

\bibitem[Kumar et~al.(2018)Kumar, Sattigeri, and
  Balakrishnan]{kumar2017variational}
Kumar, A., Sattigeri, P., and Balakrishnan, A.
\newblock Variational inference of disentangled latent concepts from unlabeled
  observations.
\newblock In \emph{ICLR}, 2018.

\bibitem[Lake et~al.(2016)Lake, Ullman, Tenenbaum, and
  Gershman]{lake2016building}
Lake, B.~M., Ullman, T.~D., Tenenbaum, J.~B., and Gershman, S.~J.
\newblock Building machines that learn and think like people.
\newblock \emph{Behavioral and Brain Sciences}, pp.\  1--101, 2016.

\bibitem[Lenc \& Vedaldi(2015)Lenc and Vedaldi]{lenc2015understanding}
Lenc, K. and Vedaldi, A.
\newblock Understanding image representations by measuring their equivariance
  and equivalence.
\newblock In \emph{CVPR}, 2015.

\bibitem[Liu et~al.(2015)Liu, Luo, Wang, and Tang]{liu2015deep}
Liu, Z., Luo, P., Wang, X., and Tang, X.
\newblock Deep learning face attributes in the wild.
\newblock In \emph{Proceedings of the IEEE International Conference on Computer
  Vision}, pp.\  3730--3738, 2015.

\bibitem[Maddison et~al.(2017)Maddison, Mnih, and Teh]{maddison2016concrete}
Maddison, C.~J., Mnih, A., and Teh, Y.~W.
\newblock The {CONCRETE} distribution: A continuous relaxation of discrete
  random variables.
\newblock In \emph{ICLR}, 2017.

\bibitem[Makhzani \& Frey(2017)Makhzani and Frey]{makhzani2017pixelgan}
Makhzani, A. and Frey, B.
\newblock Pixel{GAN} autoencoders.
\newblock In \emph{NIPS}, 2017.

\bibitem[Makhzani et~al.(2015)Makhzani, Shlens, Jaitly, Goodfellow, and
  Frey]{makhzani2015adversarial}
Makhzani, A., Shlens, J., Jaitly, N., Goodfellow, I., and Frey, B.
\newblock Adversarial autoencoders.
\newblock \emph{arXiv preprint arXiv:1511.05644}, 2015.

\bibitem[Mathieu et~al.(2016)Mathieu, Zhao, Ramesh, Sprechmann, and
  LeCun]{mathieu2016disentangling}
Mathieu, M.~F., Zhao, J.~J., Ramesh, A., Sprechmann, P., and LeCun, Y.
\newblock Disentangling factors of variation in deep representation using
  adversarial training.
\newblock In \emph{NIPS}, 2016.

\bibitem[Matthey et~al.(2017)Matthey, Higgins, Hassabis, and
  Lerchner]{dsprites17}
Matthey, L., Higgins, I., Hassabis, D., and Lerchner, A.
\newblock d{S}prites: {D}isentanglement testing {S}prites dataset.
\newblock https://github.com/deepmind/dsprites-dataset/, 2017.

\bibitem[Mescheder et~al.(2017)Mescheder, Nowozin, and
  Geiger]{Mescheder2017ICML}
Mescheder, L., Nowozin, S., and Geiger, A.
\newblock Adversarial variational {B}ayes: Unifying {V}ariational
  {A}utoencoders and {G}enerative {A}dversarial {N}etworks.
\newblock In \emph{ICML}, 2017.

\bibitem[Mnih \& Rezende(2016)Mnih and Rezende]{MnihRezende2016}
Mnih, A. and Rezende, D.~J.
\newblock Variational inference for {M}onte {C}arlo objectives.
\newblock In \emph{ICML}, 2016.

\bibitem[Mohamed \& Lakshminarayanan(2016)Mohamed and
  Lakshminarayanan]{mohamed2016learning}
Mohamed, S. and Lakshminarayanan, B.
\newblock Learning in implicit generative models.
\newblock \emph{arXiv preprint arXiv:1610.03483}, 2016.

\bibitem[Nguyen et~al.(2010)Nguyen, Wainwright, and
  Jordan]{nguyen2010estimating}
Nguyen, X., Wainwright, M.~J., and Jordan, M.~I.
\newblock Estimating divergence functionals and the likelihood ratio by convex
  risk minimization.
\newblock \emph{IEEE Transactions on Information Theory}, 2010.

\bibitem[Nowozin et~al.(2016)Nowozin, Cseke, and Tomioka]{nowozin2016f}
Nowozin, S., Cseke, B., and Tomioka, R.
\newblock f-{GAN}: Training generative neural samplers using variational
  divergence minimization.
\newblock In \emph{NIPS}, 2016.

\bibitem[Paysan et~al.(2009)Paysan, Knothe, Amberg, Romdhani, and
  Vetter]{paysan20093d}
Paysan, P., Knothe, R., Amberg, B., Romdhani, S., and Vetter, T.
\newblock A 3{D} face model for pose and illumination invariant face
  recognition.
\newblock In \emph{Proceedings of the IEEE International Conference on Advanced
  Video and Signal based Surveillance}, pp.\  296--301, 2009.

\bibitem[Perry et~al.(2010)Perry, Rolls, and Stringer]{perry2010continuous}
Perry, G., Rolls, E.~T., and Stringer, S.~M.
\newblock Continuous transformation learning of translation invariant
  representations.
\newblock \emph{Experimental Brain Research}, 204\penalty0 (2):\penalty0
  255--270, 2010.

\bibitem[Reed et~al.(2014)Reed, Sohn, Zhang, and Lee]{reed2014learning}
Reed, S., Sohn, K., Zhang, Y., and Lee, H.
\newblock Learning to disentangle factors of variation with manifold
  interaction.
\newblock In \emph{ICML}, 2014.

\bibitem[Rezende et~al.(2014)Rezende, Mohamed, and
  Wierstra]{rezende2014stochastic}
Rezende, D.~J., Mohamed, S., and Wierstra, D.
\newblock Stochastic backpropagation and approximate inference in deep
  generative models.
\newblock In \emph{ICML}, 2014.

\bibitem[Rosca et~al.(2018)Rosca, Lakshminarayanan, and
  Mohamed]{rosca2018distribution}
Rosca, M., Lakshminarayanan, B., and Mohamed, S.
\newblock Distribution matching in variational inference.
\newblock \emph{arXiv preprint arXiv:1802.06847}, 2018.

\bibitem[Schmidhuber(1992)]{schmidhuber1992learning}
Schmidhuber, J.
\newblock Learning factorial codes by predictability minimization.
\newblock \emph{Neural Computation}, 4\penalty0 (6):\penalty0 863--879, 1992.

\bibitem[Siddharth et~al.(2017)Siddharth, Paige, Van~de Meent, Desmaison, Wood,
  Goodman, Kohli, and Torr]{siddharth2017learning}
Siddharth, N., Paige, B., Van~de Meent, J.~W., Desmaison, A., Wood, F.,
  Goodman, N.~D., Kohli, P., and Torr, P. H.~S.
\newblock Learning disentangled representations with semi-supervised deep
  generative models.
\newblock In \emph{NIPS}, 2017.

\bibitem[S{\o}nderby et~al.(2016)S{\o}nderby, Caballero, Theis, Shi, and
  Husz{\'a}r]{sonderby2016amortised}
S{\o}nderby, C.~K., Caballero, J., Theis, L., Shi, W., and Husz{\'a}r, F.
\newblock Amortised {MAP} inference for image super-resolution.
\newblock In \emph{ICLR}, 2016.

\bibitem[Sugiyama et~al.(2012)Sugiyama, Suzuki, and
  Kanamori]{sugiyama2012density}
Sugiyama, M., Suzuki, T., and Kanamori, T.
\newblock Density-ratio matching under the {B}regman divergence: a unified
  framework of density-ratio estimation.
\newblock \emph{Annals of the Institute of Statistical Mathematics},
  64\penalty0 (5):\penalty0 1009--1044, 2012.

\bibitem[Watanabe(1960)]{watanabe1960information}
Watanabe, S.
\newblock Information theoretical analysis of multivariate correlation.
\newblock \emph{IBM Journal of research and development}, 4\penalty0
  (1):\penalty0 66--82, 1960.

\bibitem[Yang \& Amari(1997)Yang and Amari]{yang1997adaptive}
Yang, H.~H. and Amari, S.~I.
\newblock Adaptive online learning algorithms for blind separation: maximum
  entropy and minimum mutual information.
\newblock \emph{Neural computation}, 9\penalty0 (7):\penalty0 1457--1482, 1997.

\end{thebibliography}
\bibliographystyle{icml2018}


\clearpage

\appendix

\section*{Appendix}

\section{Experimental Details for FactorVAE and $\beta$-VAE}
\label{apd:exp}

We use a Convolutional Neural Network for the encoder, a Deconvolutional Neural Network for the decoder and a Multi-Layer Perceptron (MLP) with for the discriminator in FactorVAE for experiments on all data sets. We use [0,1] normalised data as targets for the mean of a Bernoulli distribution, using negative cross-entropy for $\log p(x|z)$ and Adam optimiser \cite{kingma2014adam} with learning rate $10^{-4}$, $\beta_1=0.9, \beta_2=0.999$ for the VAE updates, as in \citet{higgins2016beta}. We also use Adam for the discriminator updates with $\beta_1=0.5, \beta_2=0.9$ and a learning rate tuned from $\{10^{-4},10^{-5}\}$. We use $10^{-4}$ for 2D Shapes and 3D Faces, and $10^{-5}$ for 3D Shapes, 3D Chairs and CelebA. The encoder outputs parameters for the mean and log-variance of Gaussian $q(z|x)$, and the decoder outputs logits for each entry of the image. We use the same encoder/decoder architecture for $\beta$-VAE and FactorVAE, shown in Tables \ref{tab:2dshapes_enc_dec}, \ref{tab:3dshapes_celeba_chairs_enc_dec}, and \ref{tab:3dfaces_enc_dec}. We use the same 6 layer MLP discriminator with 1000 hidden units per layer and leaky ReLU (lReLU) non-linearity, that outputs 2 logits in all FactorVAE experiments. We noticed that smaller discriminator architectures work fine, but noticed small improvements up to 6 hidden layers and 1000 hidden units per layer. Note that scaling the discriminator learning rate is not equivalent to scaling $\gamma$, since $\gamma$ does not affect the discriminator loss. See Algorithm \ref{alg:factor_vae} for details of FactorVAE updates. We train for $3\times 10^5$ iterations on 2D Shapes, $5\times 10^5$ iterations on 3D Shapes, and $10^6$ iterations on Chairs, 3D Faces and CelebA. We use a batch size of 64 for all data sets.

\begin{table}[h!]
\caption{Encoder and Decoder architecture for 2D Shapes data.}
\label{tab:2dshapes_enc_dec}
\vskip -0.1in
\begin{center}
\begin{small}
\resizebox{\columnwidth}{!}{
\begin{tabular}{|l|l|}
\toprule
\textbf{Encoder} & \textbf{Decoder} \\
\midrule
Input $64 \times 64$ binary image & Input $\in \mathbb{R}^{10}$ \\
\midrule
$4 \times 4$ conv. 32 ReLU. stride 2 & FC. 128 ReLU. \\
\midrule
$4 \times 4$ conv. 32 ReLU. stride 2 & FC. $4 \times 4 \times 64$ ReLU. \\
\midrule
$4 \times 4$ conv. 64 ReLU. stride 2 & $4 \times 4$ upconv. 64 ReLU. stride 2 \\
\midrule
$4 \times 4$ conv. 64 ReLU. stride 2 & $4 \times 4$ upconv. 32 ReLU. stride 2 \\
\midrule
FC. 128. FC. $2 \times 10$. & $4 \times 4$ upconv. 32 ReLU. stride 2\\
\midrule
& $4 \times 4$ upconv. 1. stride 2 \\
\bottomrule
\end{tabular}
}
\end{small}
\end{center}

\caption{Encoder and Decoder architecture for 3D Shapes, CelebA, Chairs data.}
\label{tab:3dshapes_celeba_chairs_enc_dec}
\vskip -0.1in
\begin{center}
\begin{small}
\resizebox{\columnwidth}{!}{
\begin{tabular}{|l|l|}
\toprule
\textbf{Encoder} & \textbf{Decoder} \\
\midrule
Input $64 \times 64 \times 3$ RGB image & Input $\in$ $\mathbb{R}^{6}$ (3D Shapes) $\mathbb{R}^{10}$ (CelebA, Chairs) \\
\midrule
$4 \times 4$ conv. 32 ReLU. stride 2 & FC. 256 ReLU. \\
\midrule
$4 \times 4$ conv. 32 ReLU. stride 2 & FC. $4 \times 4 \times 64$ ReLU. \\
\midrule
$4 \times 4$ conv. 64 ReLU. stride 2 & $4 \times 4$ upconv. 64 ReLU. stride 2 \\
\midrule
$4 \times 4$ conv. 64 ReLU. stride 2 & $4 \times 4$ upconv. 32 ReLU. stride 2 \\
\midrule
FC. 256. FC. $2 \times 10$. & $4 \times 4$ upconv. 32 ReLU. stride 2\\
\midrule
& $4 \times 4$ upconv. 3. stride 2 \\
\bottomrule
\end{tabular}
}
\end{small}
\end{center}

\caption{Encoder and Decoder architecture for 3D Faces data.}
\label{tab:3dfaces_enc_dec}
\vskip -0.1in
\begin{center}
\begin{small}
\resizebox{\columnwidth}{!}{
\begin{tabular}{|l|l|}
\toprule
\textbf{Encoder} & \textbf{Decoder} \\
\midrule
Input $64 \times 64$ greyscale image & Input $\in \mathbb{R}^{10}$ \\
\midrule
$4 \times 4$ conv. 32 ReLU. stride 2 & FC. 256 ReLU. \\
\midrule
$4 \times 4$ conv. 32 ReLU. stride 2 & FC. $4 \times 4 \times 64$ ReLU. \\
\midrule
$4 \times 4$ conv. 64 ReLU. stride 2 & $4 \times 4$ upconv. 64 ReLU. stride 2 \\
\midrule
$4 \times 4$ conv. 64 ReLU. stride 2 & $4 \times 4$ upconv. 32 ReLU. stride 2 \\
\midrule
FC. 256. FC. $2 \times 10$. & $4 \times 4$ upconv. 32 ReLU. stride 2\\
\midrule
& $4 \times 4$ upconv. 1. stride 2 \\
\bottomrule
\end{tabular}
}
\end{small}
\end{center}
\end{table}

\section{Details for the Disentanglement Metrics}
\label{apd:metric}

\begin{figure}[h!]
\vskip -0.1in
  \centering
  \includegraphics[width=\columnwidth]{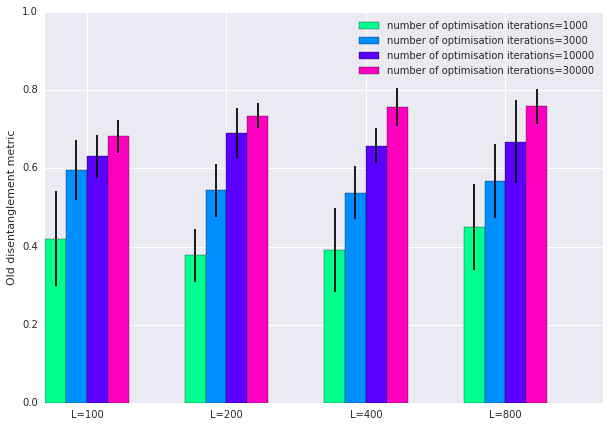}
  \vskip -0.1in
  \caption{Mean and standard deviation of metric in \citet{higgins2016beta} across 10 random seeds for varying $L$ and number of Adagrad optimiser iterations (batch size 10). The number of points used for evaluation after optimisation is fixed to 800. These were all evaluated on a fixed, randomly chosen $\beta$-VAE model that was trained to convergence on the 2D Shapes data.}\label{fig:disent_sensitivity}
\vskip -0.1in
\end{figure}

\begin{figure}[h!]
\vskip -0.1in
  \centering
  \includegraphics[width=\columnwidth]{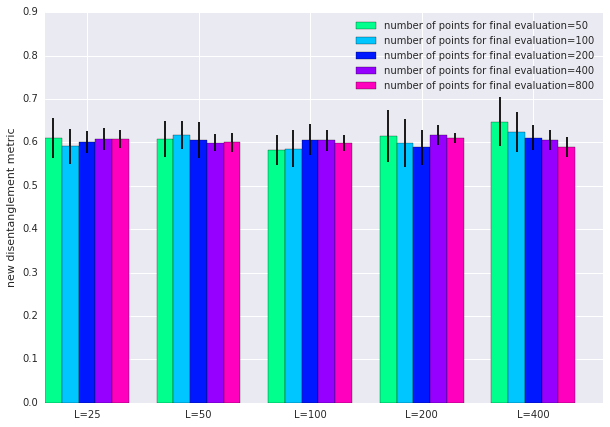}
  \vskip -0.1in
  \caption{Mean and standard deviation of our metric across 10 random seeds for varying $L$ and number of points used for evaluation. These were all evaluated on a fixed, randomly chosen $\beta$-VAE model that was trained to convergence on the 2D Shapes data.}\label{fig:disent_new_sensitivity}
\vskip -0.1in
\end{figure}

We performed a sensitivity analysis of each metric with respect to its hyperparameters (c.f. Figure \ref{fig:metric}). In Figures \ref{fig:disent_sensitivity}, we show that the metric in \citet{higgins2016beta} is very sensitive to number of iterations of the Adagrad \cite{duchi2011adaptive} optimiser with learning rate 0.01 (used in \citet{higgins2016beta}), and constantly improves with more iterations. This suggests that one might want to use less noisy multi-class logistic regression solvers than gradient-descent based methods. The number of data points used to evaluate the metric after optimisation did not seem to help reduce variance beyond 800. So in our experiments, we use $L=200$ and 10000 iterations, with a batch size of 10 per iteration of training the linear classifier, and use a batch of size 800 to evaluate the metric at the end of training. Each evaluation of this metric took around 30 minutes on a single GPU, hence we could not afford to train for more iterations.

For our disentanglement metric, we first prune out all latent dimensions that have collapsed to the prior ($q(z_j|x)=p(z_j)$). Then we just use the surviving dimensions for the majority vote. From the sensitivity analysis our metric in Figure \ref{fig:disent_new_sensitivity}, we observe that our metric is much less sensitive to hyperparameters than the metric in \citet{higgins2016beta}. We use $L=100$ and take the majority vote classifier from 800 votes. This only takes a few seconds on a single GPU. The majority vote classifier $C$ works as follows: suppose we are given data $(a_i,b_i)_{i=1}^M, a_i \in \{1,\ldots,D\}, b_i \in \{1,\ldots,K\}$ (so $M=500$). Then for $j \in \{1,\ldots,D\}$, let $V_{jk}=\sum_{i=1}^M \mathbb{I}(a_i=j,b_i=k)$. Then the majority vote classifier is defined to be $C(j)= \argmax_{k} V_{jk}$.

Note that $D$, the dimensionality of the latents, does not affect the metric; for a classifier that chooses at random, the accuracy is $1/K$, independent of $D$.

For discrete latent variables, we use Gini's definition of empirical variance:
\begin{equation}
\reallywidehat{Var}(x) = \frac{1}{2N(N-1)} \sum_{i,j=1}^N d(x_i,x_j)
\end{equation}
for $x=[x_1,\ldots,x_N] \in \mathbb{R}^N$, $d(x_i,x_j)=1$ if $x_i \neq x_j$ and 0 if $x_i=x_j$. Note that this is equal to empirical variance for continuous variables when $d(x_i,x_j)=(x_i-x_j)^2$.

\section{KL Decomposition}
\label{apd:kl_decomp}
The KL term in the VAE objective decomposes as follows \cite{makhzani2017pixelgan}:
\begin{lemma}
$\mathbb{E}_{p_{data}(x)}[KL(q(z|x)||p(z))] = I_q(x;z) + KL(q(z)||p(z))$ where $q(x,z) = p_{data}(x)q(z|x)$.
\end{lemma}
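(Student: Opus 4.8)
The plan is to expand both sides as expectations of log-ratios under the joint $q(x,z) = p_{data}(x)q(z|x)$ and match terms by inserting a telescoping factor of $\log q(z)$. First I would rewrite the left-hand side as
\begin{equation}
\mathbb{E}_{p_{data}(x)}[KL(q(z|x)\|p(z))] = \mathbb{E}_{q(x,z)}\Big[\log \frac{q(z|x)}{p(z)}\Big], \nonumber
\end{equation}
simply by unfolding the definition of the KL divergence and the outer expectation over $p_{data}$.

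Next I would multiply and divide inside the logarithm by $q(z)$, the aggregate posterior defined earlier, and split the expectation:
\begin{equation}
\mathbb{E}_{q(x,z)}\Big[\log \frac{q(z|x)}{p(z)}\Big] = \mathbb{E}_{q(x,z)}\Big[\log \frac{q(z|x)}{q(z)}\Big] + \mathbb{E}_{q(x,z)}\Big[\log \frac{q(z)}{p(z)}\Big]. \nonumber
\end{equation}
For the second term I would note that the integrand depends on $z$ only, and that the $z$-marginal of $q(x,z)$ is exactly $q(z) = \mathbb{E}_{p_{data}(x)}[q(z|x)]$, so that term collapses to $KL(q(z)\|p(z))$.

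For the first term I would identify it as the mutual information $I_q(x;z)$ under $q(x,z)$: writing $q(z|x)/q(z) = q(x,z)/(p_{data}(x)q(z))$ using $q(z|x) = q(x,z)/p_{data}(x)$ (valid since $p_{data}(x)$ is the $x$-marginal of $q(x,z)$, because $q(z|x)$ integrates to one in $z$), the expectation becomes $\mathbb{E}_{q(x,z)}[\log \frac{q(x,z)}{p_{data}(x)q(z)}]$, which is by definition the KL divergence between the joint and the product of its marginals, i.e. $I_q(x;z)$. Combining the two pieces gives the claim.

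There is no real obstacle here; the only point requiring a moment's care is the bookkeeping of marginals — verifying that the $x$- and $z$-marginals of $q(x,z)$ are $p_{data}(x)$ and $q(z)$ respectively — and ensuring the decomposition of the expectation is justified (e.g. that all terms are finite / the integrals are well defined), which holds under the usual mild regularity assumptions on the densities.
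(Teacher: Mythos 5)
Your proposal is correct and follows essentially the same route as the paper's proof: insert $q(z)$ into the log-ratio, split the expectation, identify the first term as $I_q(x;z)$ and the second as $KL(q(z)\|p(z))$ using the fact that the $z$-marginal of $q(x,z)$ is $q(z)$. The only cosmetic difference is that you express the mutual information as the KL between the joint and the product of its marginals, whereas the paper writes it as $\mathbb{E}_{p_{data}(x)}[KL(q(z|x)\|q(z))]$; these are the same quantity.
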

\begin{proof}
\begin{align*}
&\mathbb{E}_{p_{data}(x)}  [KL(q(z|x)||p(z))]  \\ 
& =  \mathbb{E}_{p_{data}(x)} \mathbb{E}_{q(z|x)}\bigg[\log \frac{q(z|x)}{p(z)} \bigg]  \\
&=\mathbb{E}_{p_{data}(x)} \mathbb{E}_{q(z|x)}\bigg[\log \frac{q(z|x)}{q(z)}\frac{q(z)}{p(z)} \bigg]  \\
&=\mathbb{E}_{p_{data}(x)} \mathbb{E}_{q(z|x)}\bigg[\log \frac{q(z|x)}{q(z)}+ \log \frac{q(z)}{p(z)} \bigg]  \\
&=\mathbb{E}_{p_{data}(x)}[KL(q(z|x)||q(z))] + \mathbb{E}_{q(x,z)}\bigg[ \log \frac{q(z)}{p(z)} \bigg]  \\
&=I_q(x;z) + \mathbb{E}_{q(z)} \bigg[ \log \frac{q(z)}{p(z)}\bigg]  \\
&=I_q(x;z) +  KL(q(z)||p(z))
\end{align*}
\end{proof}

\begin{remark}
Note that this decomposition is equivalent to that in \citet{hoffman2016elbo}, written as follows:
$\mathbb{E}_{p_{data}(x)}[KL(q(z|x)||p(z))]= I_r(i;z) + KL(q(z)||p(z))$
where $r(i,z) = \frac{1}{N}q(z|x^{(i)})$, hence $r(z|i) = q(z|x^{(i)})$, $r(z) = \frac{1}{N} \sum_{i=1}^N q(z|x^{(i)}) = q(z)$.
\end{remark}
\begin{proof}
\begin{align*}
I_r(i;z) &= \mathbb{E}_{r(i)}[KL(r(z|i)||r(z))] \\
&=\frac{1}{N} \sum_{i=1}^N KL(q(z|x^{(i)})||q(z)) \\
&= \mathbb{E}_{p_{data}(x)} [KL(q(z|x)||q(z))] \\
&= I_q(x;z)
\end{align*}
\end{proof}

\section{Using a Batch Estimate of $q(z)$ for Estimating TC}
\label{apd:batch_estimate_q}

We have also tried using a batch estimate for the density $q(z)$, thus optimising this estimate of the TC directly instead of having a discriminator and using the density ratio trick. In other words, we tried $q(z) \approx \hat{q}(z) = \frac{1}{|\mathcal{B}|} \sum_{i\in \mathcal{B}} q(z|x^{(i)})$, and using the estimate:
\begin{align}
KL(q(z)||\prod_j q(z_j)) &= \mathbb{E}_{q(z)} \bigg[ \log \frac{q(z)}{\prod_j q(z_j)}\bigg] \nonumber \\
&\approx \mathbb{E}_{q(z)} \bigg[ \log \frac{\hat{q}(z)}{\prod_j \hat{q}(z_j)}\bigg]
\end{align}
Note that:
\begin{align}
\mathbb{E}_{q(z)} \bigg[ \log \frac{\hat{q}(z)}{\prod_j \hat{q}(z_j)}\bigg]& \nonumber \\
\approx \frac{1}{H} \sum_{h=1}^H \bigg[& \log \frac{1}{|\mathcal{B}|} \sum_{i \in \mathcal{B}} \prod_{j=1}^D q(z^{(h)}_j|x^{(i)}) \nonumber \\
& - \log \prod_{j=1}^D \frac{1}{|\mathcal{B}|} \sum_{i \in \mathcal{B}} q(z^{(h)}_j|x^{(i)}) \bigg]
\end{align}
for $z^{(h)} \overset{iid}{\sim} q(z)$. However while experimenting on 2D Shapes, we observed that the value of $\log q(z^{(h)})$ becomes very small (negative with high absolute value) for latent dimension $d \geq 2$ during training, because $\hat{q}(z)$ is not a good enough approximation to $q(z)$ unless $\mathcal{B}$ is very big. As training progresses for the VAE, the variance of Gaussians $q(z|x^{(i)})$ becomes smaller and smaller, so they do not overlap too much in higher dimensions. Hence we get $z^{(h)} \sim q(z)$ that land on the tails of $\hat{q}(z)=\frac{1}{|\mathcal{B}|} \sum_{i\in \mathcal{B}} q(z|x^{(i)})$, giving worryingly small values of $\log \hat{q}(z^{(h)})$. On the other hand $\prod_j \hat{q}(z_j^{(h)})$, a mixture of $|\mathcal{B}|^d$ Gaussians hence of much higher entropy, gives much more stable values of $\log \prod_j \hat{q}(z_j^{(h)})$. From Figure \ref{fig:batch_estimate_q}, we can see that even with $\mathcal{B}$ as big as 10,000, we get negative values for the estimate of TC, which is a KL divergence and hence should be non-negative, hence this method of using a batch estimate for $q(z)$ does not work. A fix is to use samples from $\hat{q}(z)$ instead of $q(z)$, but this seemed to give a similar reconstruction-disentanglement trade-off to $\beta$-VAE. Very recently, work from \cite{chen2018isolating} has shown that disentangling can be improved by using samples from $\hat{q}(z)$.

\begin{figure}[h!]
\vskip -0.1in
  \centering
  \includegraphics[width=\columnwidth]{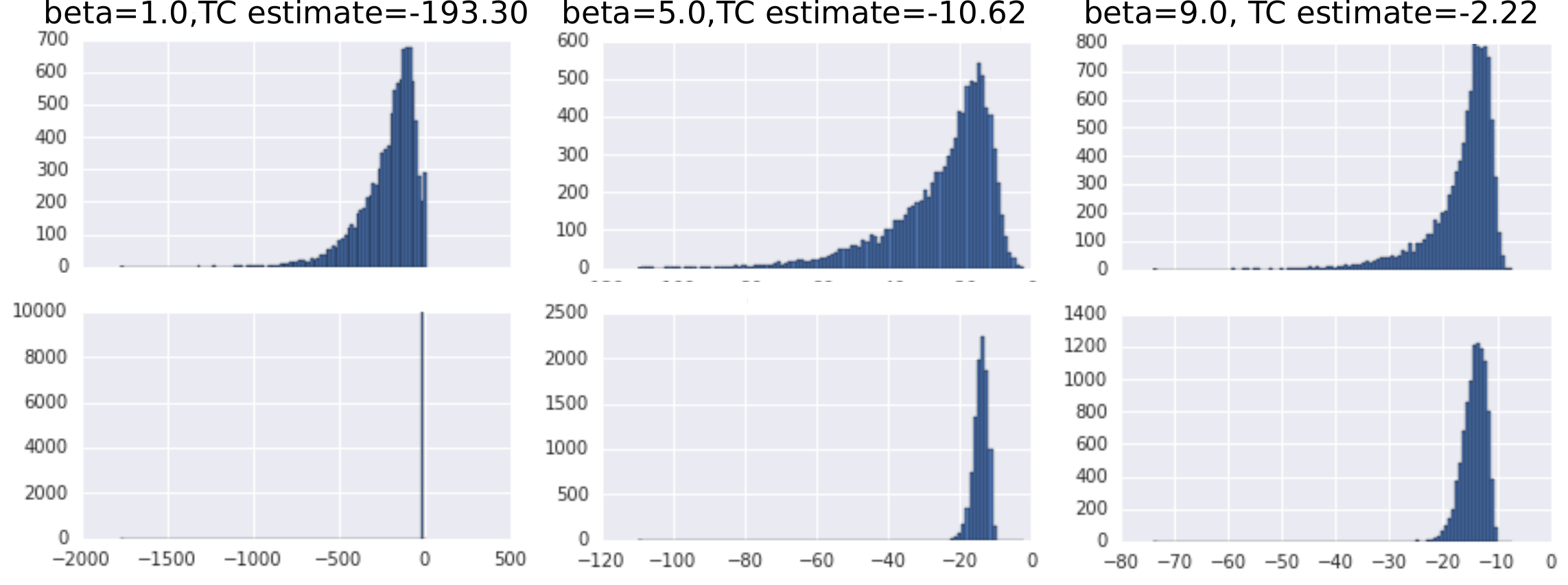}
  \vskip -0.1in
  \caption{Histogram of $\log \hat{q}(z^{(h)})$ (top) and $\prod_{j=1}^d \hat{q}(z_j^{(h)})$ (bottom) for $z^{(h)} \overset{iid}{\sim} q(z)$ with $|\mathcal{B}|=10000$, $d=10$. The columns correspond to values of $\beta=1,5,9$ for training $\beta$-VAE. In the title of each histogram, there is an estimate of TC based on the samples of $z^{(h)}$.}\label{fig:batch_estimate_q}
\vskip -0.1in
\end{figure}

\section{Log Marginal Likelihood and Samples}
\label{apd:generative}

\begin{figure}[h!]
\vskip -0.1in
  \centering
  \includegraphics[width=\columnwidth]{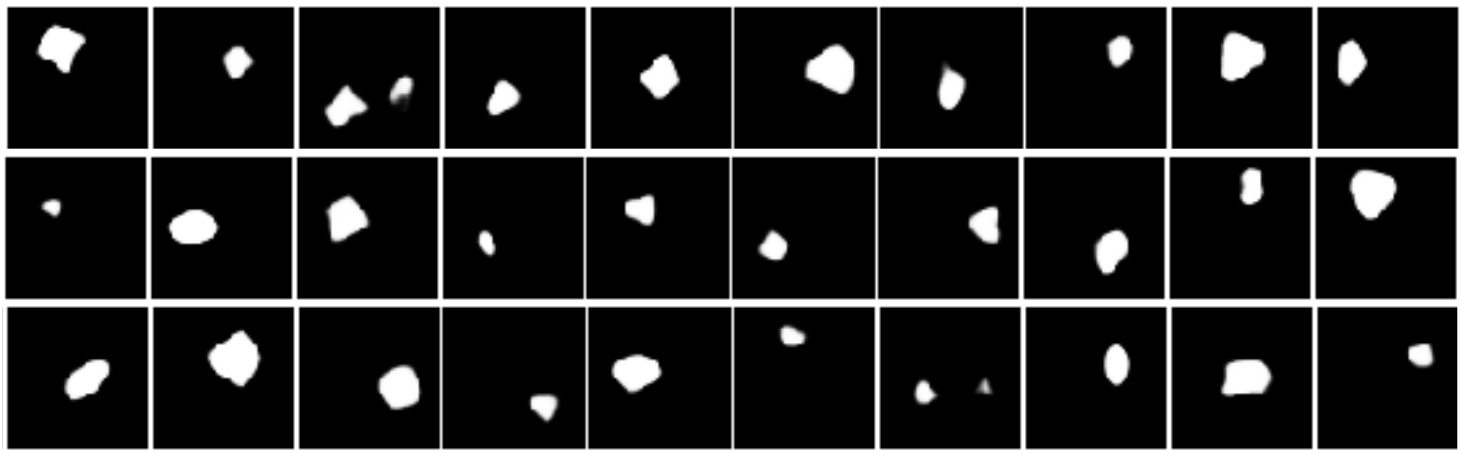}
  \vskip -0.1in
  \caption{Randomly chosen samples from the best performing (in terms of disentanglement) $\beta$-VAE generative model ($\beta=4$).}\label{fig:beta_vae_samples}
\end{figure}

\begin{figure}[h!]
  \centering
  \includegraphics[width=\columnwidth]{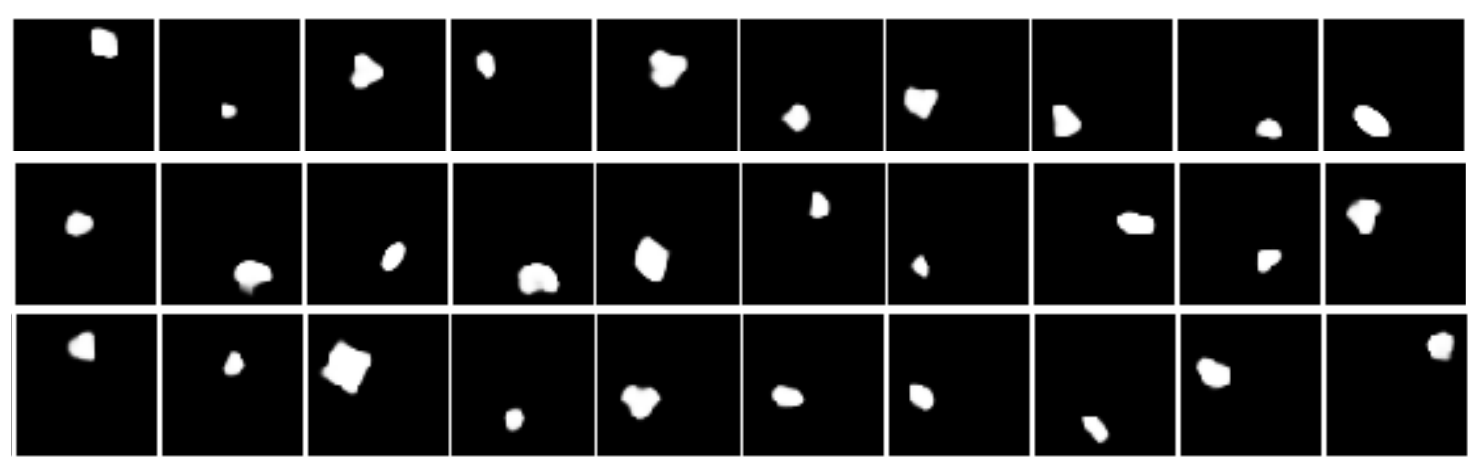}
  \vskip -0.1in
  \caption{Randomly chosen samples from the best performing (in terms of disentanglement) FactorVAE generative model ($\gamma=35$).}\label{fig:factor_vae_samples}
\end{figure}

\begin{figure}[h!]
  \centering
  \includegraphics[width=\columnwidth]{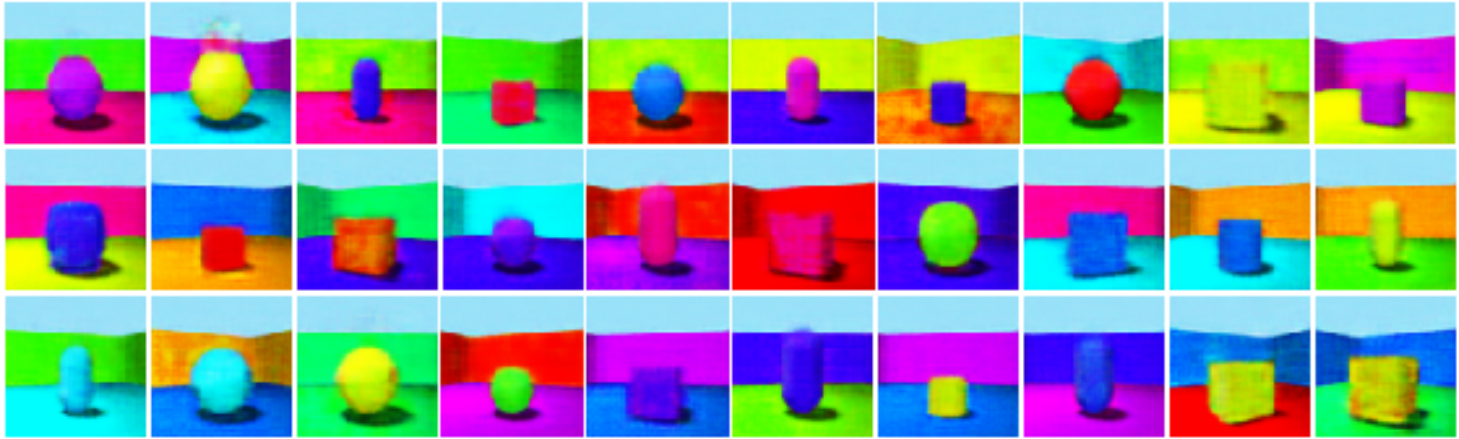}
  \vskip -0.1in
  \caption{Randomly chosen samples from the best performing (in terms of disentanglement) $\beta$-VAE generative model ($\beta=32$).}\label{fig:beta_vae_3d_samples}
\end{figure}

\begin{figure}[h!]
  \centering
  \includegraphics[width=\columnwidth]{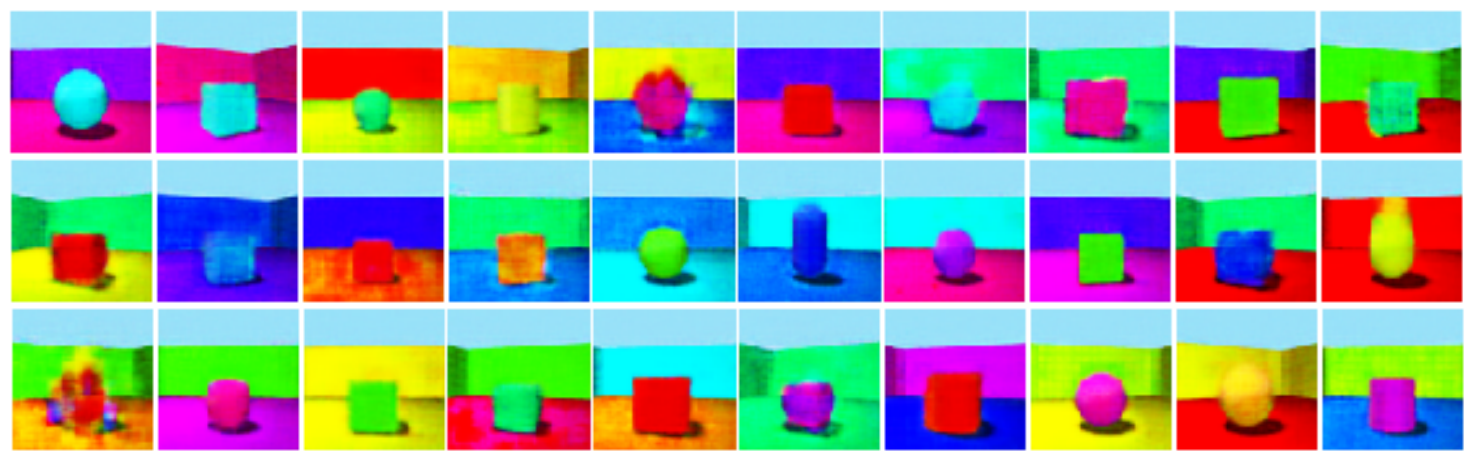}
  \vskip -0.1in
  \caption{Randomly chosen samples from the best performing (in terms of disentanglement) FactorVAE generative model ($\gamma=6$).}\label{fig:factor_vae_3d_samples}
\end{figure}

We give the log marginal likelihood of each of the best performing $\beta$-VAE and FactorVAE models (in terms of disentanglement) for both the 2D Shapes and 3D Shapes data sets along with samples from the generative model. Since the log marginal likelihood is intractable, we report the \textit{Importance-Weighted Autoencoder} (IWAE) bound with 5000 particles, in line with standard practice in the generative modelling literature.

In Figures \ref{fig:beta_vae_samples} and \ref{fig:factor_vae_samples}, the samples for FactorVAE are arguably more representative of the data set than those of $\beta$-VAE. For example $\beta$-VAE has occasional samples with two separate shapes in the same image (Figure \ref{fig:beta_vae_samples}). The log marginal likelihood for the best performing $\beta$-VAE ($\beta=4$) is -46.1, whereas for FactorVAE it is -51.9 ($\gamma=35$) (a randomly chosen VAE run gives -43.3). So on 2D Shapes, FactorVAE gives better samples but worse log marginal likelihood.

In Figures \ref{fig:beta_vae_3d_samples} and \ref{fig:factor_vae_3d_samples}, the samples for $\beta$-VAE appear more coherent than those for FactorVAE. However the log marginal likelihood for $\beta$-VAE ($\beta=32$) is -3534, whereas for FactorVAE it is -3520 ($\gamma=6$) (a randomly chosen VAE run gives -3517). So on 3D Shapes, FactorVAE gives worse samples but better log marginal likelihood.

In general, if one seeks to learn a generative model with a disentangled latent space, it would make sense to choose the model with the lowest value of $\beta$ or $\gamma$ among those with similarly high disentanglement performance. 

\section{Losses and Experiments for other related Methods}
\label{apd:related}
The Adversarial Autoencoder (AAE) \cite{makhzani2015adversarial} uses the following objective
\begin{align}
\frac{1}{N}\sum_{i=1}^N \Big[ \mathbb{E}_{q(z|x^{(i)})} [\log p(x^{(i)}|z)] \Big] - KL(q(z)||p(z)), 
\end{align}
utilising the density ratio trick to estimate the KL term.

Information Dropout \cite{achille2018information} uses the objective
\begin{equation}
\frac{1}{N}\sum_{i=1}^N  \mathbb{E}_{q(z|x^{(i)})}[\log p(x^{(i)}|z)] - \beta KL(q(z|x^{(i)})||q(z)).
\end{equation}
The following objective is also considered in the paper but is dismissed as intractable: 
\begin{align}
\frac{1}{N}\sum_{i=1}^N \Big[ \mathbb{E}_{q(z|x^{(i)})}[\log p(x^{(i)}|z)] - & \beta  KL(q(z|x^{(i)})||q(z)) \Big] \nonumber \\ 
 - & \gamma KL(q(z)||\prod_{j=1}^d q(z_j))
\end{align}
Note that it is similar to the FactorVAE objective (which has $\beta=1$), but with $p(z)$ in the first KL term replaced with $q(z)$.

DIP-VAE \cite{kumar2017variational} uses the VAE objective with an additional penalty on how much the covariance of $q(z)$ deviates from the identity matrix, either using the law of total covariance $Cov_{q(z)}[z] = \mathbb{E}_{p_{data}(x)} Cov_{q(z|x)}[z] + Cov_{p_{data}(x)}(\mathbb{E}_{q(z|x)}[z])$ (DIP-VAE I):
\begin{align}
\frac{1}{N}\sum_{i=1}^N  & \Big[ \mathbb{E}_{q(z|x^{(i)})}[\log p(x^{(i)}|z)] - KL(q(z|x^{(i)})||p(z)) \Big] \nonumber \\
& - \lambda_{od} \sum_{i\neq j} [Cov_{p_{data}(x)}[\mu(x)]]_{ij}^2 \nonumber \\
& - \lambda_{d} \sum_{i}([Cov_{p_{data}(x)}[\mu(x)]]_{ii} - 1)^2
\end{align}
where $\mu(x)=mean(q(z|x))$, or directly (DIP-VAE II):
\begin{align}
\frac{1}{N}\sum_{i=1}^N  & \Big[\mathbb{E}_{q(z|x^{(i)})}[\log p(x^{(i)}|z)] - KL(q(z|x^{(i)})||p(z)) \Big] \nonumber \\
& - \lambda_{od} \sum_{i\neq j} [Cov_{q(z)}[z]]_{ij}^2 \nonumber \\
& - \lambda_{d} \sum_{i}([Cov_{q(z)}[z]]_{ii} - 1)^2
\end{align}

One could argue that during training of FactorVAE, $\prod_j q(z_j)$ will be similar to $p(z)$, assuming the prior is factorial, due to the $KL(q(z|x)||p(z))$ term in the objective. Hence we also investigate a modified FactorVAE objective that replaces $\prod_j q(z_j)$ with $p(z)$:
\begin{align} \label{eq:modified_factor_vae}
\frac{1}{N}\sum_{i=1}^N  \Big[ \mathbb{E}_{q(z|x^{(i)})}[\log p(x^{(i)}|z)] - & KL(q(z|x^{(i)})||p(z)) \Big] \nonumber \\ 
 - & \gamma KL(q(z)||p(z))
\end{align}
However as shown in Figure \ref{fig:shapes2d_histograms} of Appendix \ref{apd:further_results}, the histograms of samples from the marginals are clearly quite different from the the prior for FactorVAE.

\begin{figure}[h!]
\vskip -0.1in
  \centering
  \includegraphics[width=\columnwidth]{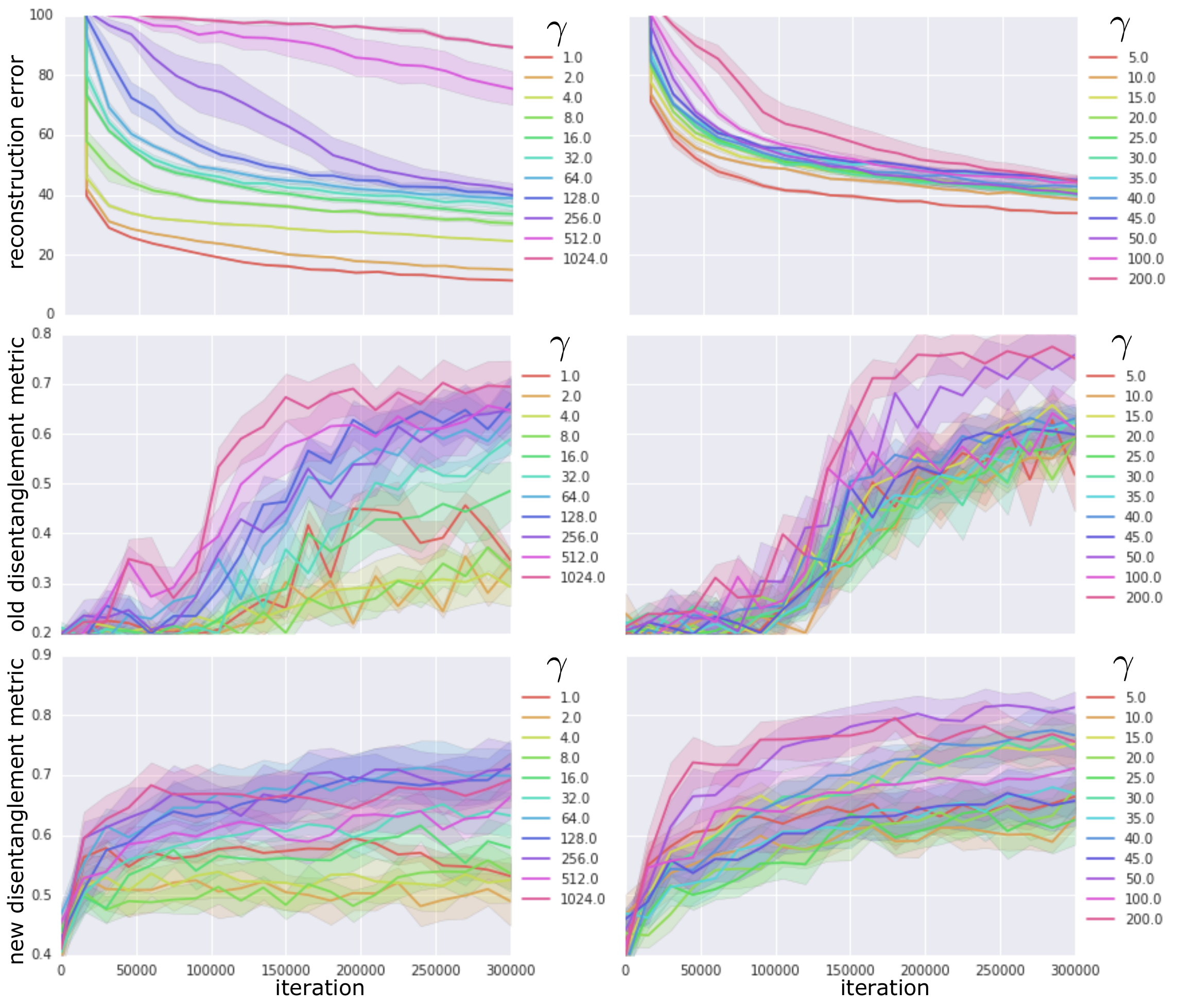}
  \vskip -0.1in
  \caption{Same as Figure \ref{fig:shapes2d_plots} but for AAE (left) and a variant of FactorVAE (\eqn{eq:modified_factor_vae}.}\label{fig:aae_modified_factor_vae_plots}
\vskip -0.1in
\end{figure}

Moreover we show experimental results for AAE (adding a $\gamma$ coefficient in front of the $KL(q(z)||p(z))$ term of the objective and tuning it) and the variant of FactorVAE (\eqn{eq:modified_factor_vae}) on the 2D Shapes data. From Figure \ref{fig:aae_modified_factor_vae_plots}, we see that the disentanglement performance for both are somewhat lower than that for FactorVAE. This difference could be explained as a benefit of directly encouraging $q(z)$ to be factorised (FactorVAE) instead of encouraging it to approach an arbitrarily chosen factorised prior $p(z)=\mathcal{N}(0,I)$ (AAE, \eqn{eq:modified_factor_vae}). Information Dropout and DIP-VAE did not have enough experimental details in the paper nor publicly available code to have their results reproduced and compared against.

\section{InfoGAN and InfoWGAN-GP} \label{apd:infogan_overview}
We give an overview of InfoGAN \cite{chen2016infogan} and InfoWGAN-GP, its counterpart using Wasserstein distance and gradient penalty. InfoGAN uses latents $z=(c,\epsilon)$ where $c$ models semantically meaningful codes and $\epsilon$ models incompressible noise. The generative model is defined by a generator $G$ with the process: $c \sim p(c), \epsilon \sim p(\epsilon), z=(c,\epsilon)$, $x=G(z)$. i.e. $p(z)=p(c)p(\epsilon)$. GANs are defined as a minimax game on some objective $V(D,G)$, where $D$ is either a discriminator (\eg for the original GAN \cite{goodfellow2014generative}) that outputs log probabilities for binary classification, or a critic (\eg  for Wasserstein-GAN \cite{arjovsky2017wasserstein}) that outputs a real-valued scalar. InfoGAN defines an extra encoding distribution $Q(c|x)$ that is used to define an extra penalty:
\begin{equation}
L(G,Q) = \mathbb{E}_{p(c)}\mathbb{E}_{p(\epsilon)}[\log Q(c|G(c,\epsilon))]
\end{equation}
that is added to the GAN objective. Hence InfoGAN is the following minimax game on the parameters of neural nets $D,G,Q$:
\begin{equation}
\min_{G,Q} \max_{D} V_I(D,G,Q) = \min_{G,Q} \max_{D} V(D,G) - \lambda L(G,Q)
\end{equation}
$L$ can be interpreted as a variational lower bound to $I(c; G(c,\epsilon))$, with equality at $Q = \argmin_Q V_I(D,G,Q)$. i.e. $L$ encourages the codes to be more informative about the image. From the definition of $L$, it can also be seen as the reconstruction error of codes in the latent space. The original InfoGAN defines: 
\begin{equation}
V(D,G)= \mathbb{E}_{p_{data}(x)}[D(x)] - \mathbb{E}_{p(z)}[D(G(z))]
\end{equation}
same as the original GAN objective where $D$ outputs log probabilities. However as we'll show in Appendix \ref{apd:infogan} this has known instability issues in training. So it is natural to try replacing this with the more stable WGAN-GP \cite{gulrajani2017improved} objective:
\begin{align}
V(D,G)= & \mathbb{E}_{p_{data}(x)}[D(x)] - \mathbb{E}_{p(z)}[D(G(z))] \nonumber \\
& + \eta(||\nabla_{x}D(x)|_{x=\hat{x}}||_2 - 1)^2
\end{align}
for $\hat{x}=\pi x_r + (1-\pi) x_f$ with $\pi\sim U[0,1]$, $x_r\sim p_{data}(x)$, $z_f\sim p(z)$, $x_f=$\texttt{stop\_gradient}$(G(z_f))$ and with a new $\hat{x}$ for each iteration of optimisation. Thus we obtain InfoWGAN-GP.

\section{Empirical Study of InfoGAN and InfoWGAN-GP} \label{apd:infogan}
To begin with, we implemented InfoGAN and InfoWGAN-GP on MNIST using the hyperparameters given in \citet{chen2016infogan} to better understand its behaviour, using 1 categorical code with 10 categories, 2 continuous codes, and 62 noise variables. We use priors $p(c_j)=U[-1,1]$ for the continuous codes, $p(c_j)=\frac{1}{J}$ for categorical codes with $J$ categories, and $p(\epsilon_j)=\mathcal{N}(0,1)$ for the noise variables. For 2D Shapes data we use 1 categorical codes with 3 categories ($J=3$), 4 continuous codes, and 5 noise variables. The number of noise variables did not seem to have a noticeable effect on the experiment results. We use the Adam optimiser \cite{kingma2014adam} with $\beta_1=0.5, \beta_2=0.999$, and learning rate $10^{-3}$ for the generator updates and $10^{-4}$ for the discriminator updates. The detailed Discriminator/Encoder/Generator architecture are given in Tables \ref{tab:mnist_infogan} and \ref{tab:2dshapes_infogan}. The architecture for InfoWGAN-GP is the same as InfoGAN, except that we use no Batch Normalisation (batchnorm) \cite{ioffe2015batch} for the convolutions in the discriminator, and replace batchnorm with Layer Normalisation \cite{ba2016layer} in the fully connected layer that follows the convolutions as recommended in \cite{gulrajani2017improved}. We use gradient penalty coefficient $\eta=10$, again as recommended.

\begin{table}[h!]
\caption{InfoGAN architecture for MNIST data. 2 continuous codes, 1 categorical code with 10 categories, 62 noise variables.}
\label{tab:mnist_infogan}
\vskip -0.1in
\begin{center}
\begin{small}
\resizebox{\columnwidth}{!}{
\begin{tabular}{|l|l|}
\toprule
\textbf{discriminator} D / \textbf{encoder} Q & \textbf{generator} G \\
\midrule
Input $28 \times 28$ greyscale image & Input $\in \mathbb{R}^{74}$ \\
\midrule
$4 \times 4$ conv. 64 lReLU. stride 2 & FC. 1024 ReLU. batchnorm \\
\midrule
$4 \times 4$ conv. 128 lReLU. stride 2. batchnorm & FC. $7 \times 7 \times 128$ ReLU. batchnorm \\
\midrule
FC. 1024 lReLU. batchnorm & $4 \times 4$ upconv. 64 ReLU. stride 2. batchnorm \\
\midrule
FC. 1. output layer for D & $4 \times 4$ upconv. 1 Sigmoid. stride 2 \\
\midrule
FC. 128 lReLU. batchnorm. FC $2 \times 2 + 1 \times 10$  &  \\
\bottomrule
\end{tabular}
}
\end{small}
\end{center}

\caption{InfoGAN architecture for 2D Shapes data. 4 continuous codes, 1 categorical code with 3 categories, 5 noise variables.}
\label{tab:2dshapes_infogan}
\vskip -0.1in
\begin{center}
\begin{small}
\resizebox{\columnwidth}{!}{
\begin{tabular}{|l|l|}
\toprule
\textbf{discriminator} D / \textbf{encoder} Q & \textbf{generator} G \\
\midrule
Input $64 \times 64$ binary image & Input $\in \mathbb{R}^{12}$ \\
\midrule
$4 \times 4$ conv. 32 lReLU. stride 2 & FC. 128 ReLU. batchnorm \\
\midrule
$4 \times 4$ conv. 32 lReLU. stride 2. batchnorm & FC. $4 \times 4 \times 64$ ReLU. batchnorm \\
\midrule
$4 \times 4$ conv. 64 lReLU. stride 2. batchnorm & $4 \times 4$ upconv. 64 lReLU. stride 2. batchnorm \\
\midrule
$4 \times 4$ conv. 64 lReLU. stride 2. batchnorm & $4 \times 4$ upconv. 32 lReLU. stride 2. batchnorm \\
\midrule
FC. 128 lReLU. batchnorm & $4 \times 4$ upconv. 32 lReLU. stride 2. batchnorm \\
\midrule
FC. 1. output layer for D & $4 \times 4$ upconv. 1 Sigmoid. stride 2 \\
\midrule
FC. 128 lReLU. batchnorm. FC $4 \times 2 + 1 \times 3$ for Q  & \\
\bottomrule
\end{tabular}
}
\end{small}
\end{center}

\caption{Bigger InfoGAN architecture for 2D Shapes data. 4 continuous codes, 1 categorical code with 3 categories, 128 noise variables.}
\label{tab:2dshapes_infogan_big}
\vskip -0.1in
\begin{center}
\begin{small}
\resizebox{\columnwidth}{!}{
\begin{tabular}{|l|l|}
\toprule
\textbf{discriminator} D / \textbf{encoder} Q & \textbf{generator} G \\
\midrule
Input $64 \times 64$ binary image & Input $\in \mathbb{R}^{136}$ \\
\midrule
$4 \times 4$ conv. 64 lReLU. stride 2 & FC. 1024 ReLU. batchnorm \\
\midrule
$4 \times 4$ conv. 128 lReLU. stride 2. batchnorm & FC. $8 \times 8 \times 256$ ReLU. batchnorm \\
\midrule
$4 \times 4$ conv. 256 lReLU. stride 2. batchnorm & $4 \times 4$ upconv. 256 lReLU. stride 1. batchnorm \\
\midrule
$4 \times 4$ conv. 256 lReLU. stride 1. batchnorm & $4 \times 4$ upconv. 256 lReLU. stride 1. batchnorm \\
\midrule
$4 \times 4$ conv. 256 lReLU. stride 1. batchnorm & $4 \times 4$ upconv. 128 lReLU. stride 2. batchnorm \\
\midrule
FC. 1024 lReLU. batchnorm & $4 \times 4$ upconv. 64 lReLU. stride 2. batchnorm \\
\midrule
FC. 1. output layer for D & $4 \times 4$ upconv. 1 Sigmoid. stride 2 \\
\midrule
FC. 128 lReLU. batchnorm. FC $4 \times 2 + 1 \times 3$ for Q  & \\
\bottomrule
\end{tabular}
}
\end{small}
\end{center}

\end{table}

\begin{figure}[h!]
  \centering
  \includegraphics[width=\columnwidth]{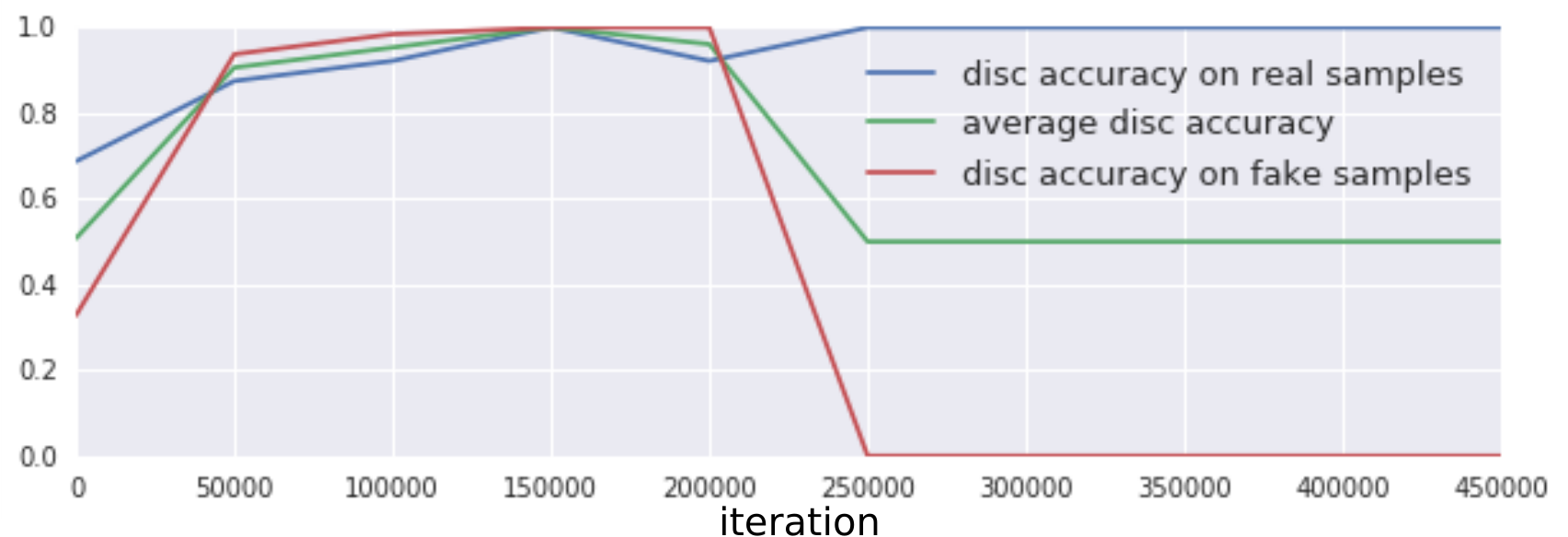}
  \vskip -0.1in
  \caption{Discriminator accuracy of InfoGAN on MNIST throughout training.}\label{fig:infogan_mnist_disc_acc}
\vskip -0.1in
\end{figure}

\begin{figure}[h!]
  \centering
  \includegraphics[width=0.7\columnwidth]{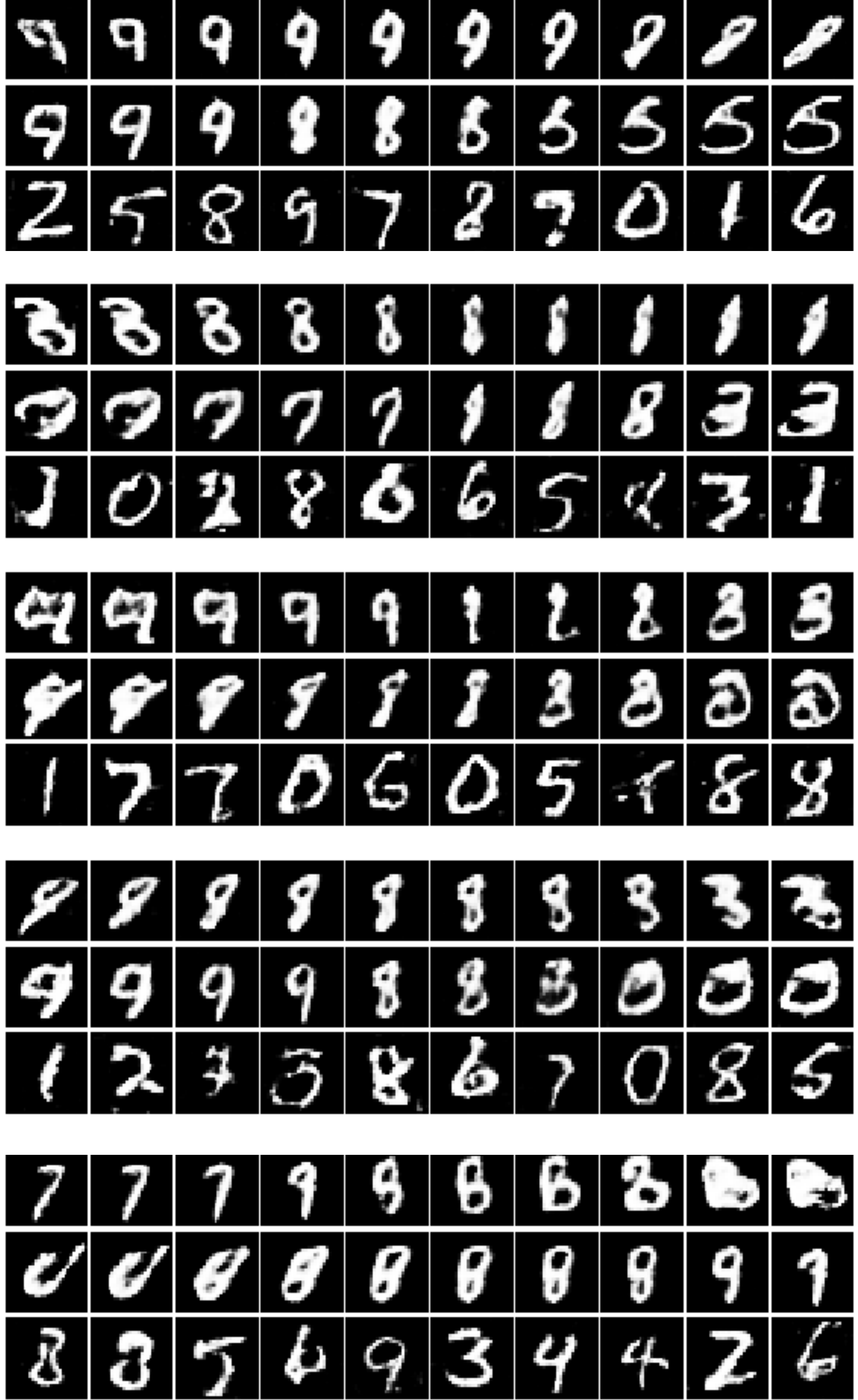}
  \vskip -0.1in
  \caption{Latent traversals for InfoGAN on MNIST across the two continuous codes (first two rows) and the categorical code (last row) for 5 different random seeds.}\label{fig:infogan_mnist_latent_traversals_multiple}
\vskip -0.1in
\end{figure}

\begin{figure}[h!]
  \centering
  \includegraphics[width=0.7\columnwidth]{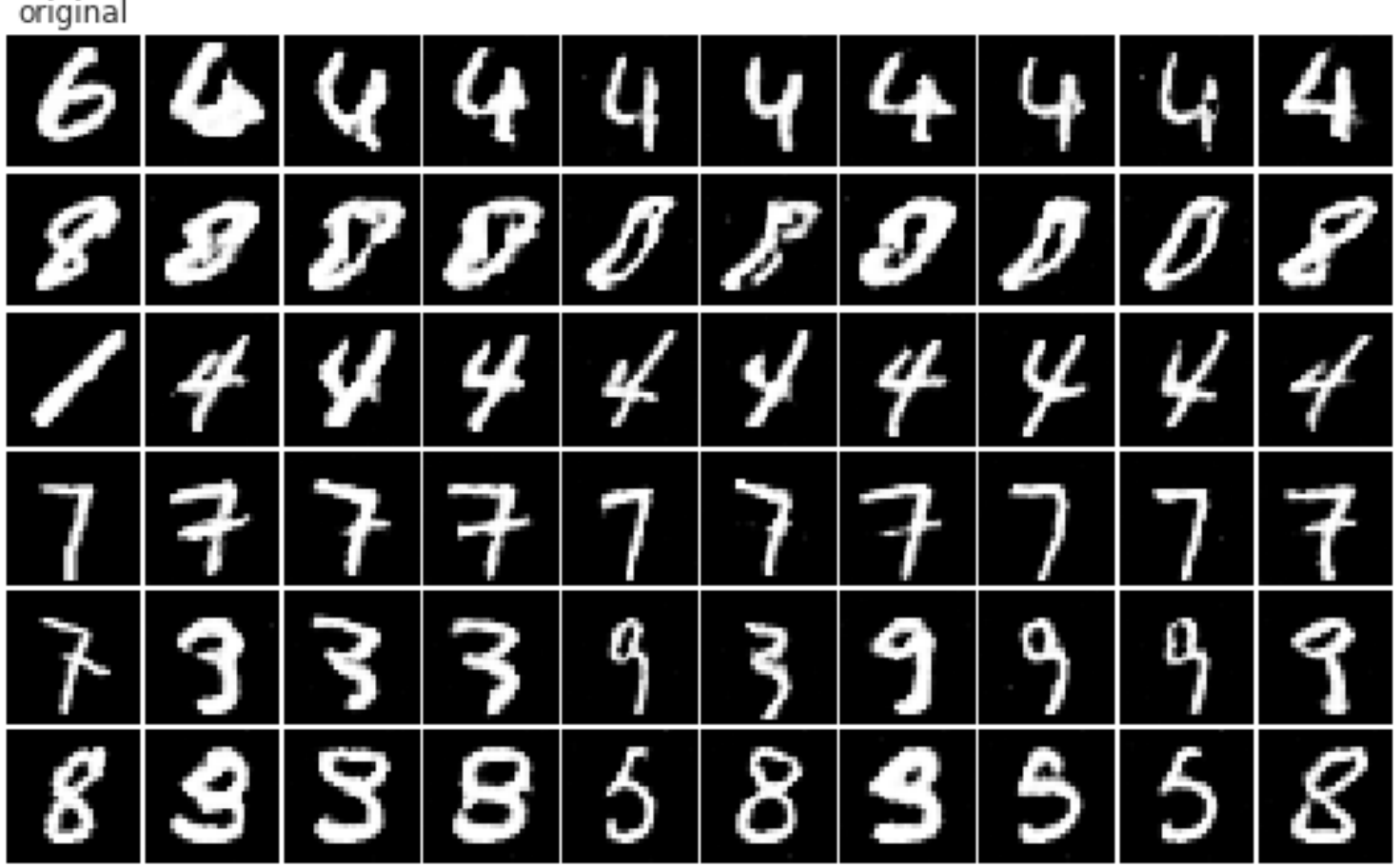}
  \vskip -0.1in
  \caption{Reconstructions for InfoGAN on MNIST. First column: original image. Remaining columns: reconstructions varying the noise latent $\epsilon$.}\label{fig:infogan_mnist_reconstructions}
\vskip -0.1in
\end{figure}

\begin{figure}[h!]
  \centering
  \includegraphics[width=0.7\columnwidth]{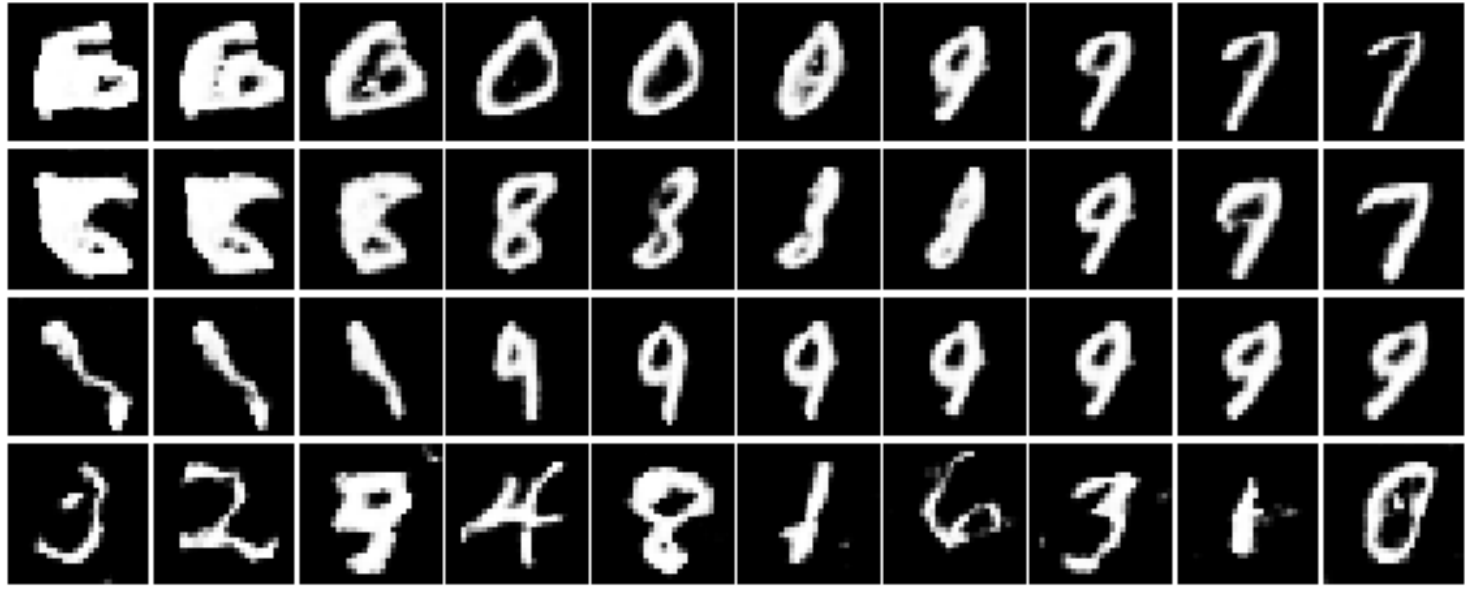}
  \vskip -0.1in
  \caption{Latent traversals for InfoGAN on MNIST across the three continuous codes (first three rows) and the categorical code (last row).}\label{fig:infogan_mnist_latent_traversals_3cont}
\vskip -0.1in
\end{figure}

We firstly observe that for all runs, we eventually get a degenerate discriminator that predicts all inputs to be real, as in Figure \ref{fig:infogan_mnist_disc_acc}. This is the well-known instability issue of the original GAN. We have tried using a smaller learning rate for the discriminator, and although this delays the degenerate behaviour it does not prevent it. Hence early stopping seems crucial, and all results shown below are from well before the degenerate behaviour occurs.

\citet{chen2016infogan} claim that the categorical code learns digit class (discrete factor of variation) and that the continuous codes learn azimuth and width, but when plotting latent traversals for each run, we observed that this is inconsistent. We show five randomly chosen runs in Figure \ref{fig:infogan_mnist_latent_traversals_multiple}. The digit class changes in the continuous code traversals and there are overlapping digits in the categorical code traversal. Similar results hold for InfoWGAN-GP in Figure \ref{fig:infowgan_gp_mnist_latent_traversals_multiple}. 

We also tried visualising the reconstructions: given an image, we push the image through the encoder to obtain latent codes $c$, fix this $c$ and vary the noise $\epsilon$ to generate multiple reconstructions for the same image. This is to check the extent to which the noise $\epsilon$ can affect the generation. We can see in Figure \ref{fig:infogan_mnist_reconstructions} that digit class often changes when varying $\epsilon$, so the model struggles to cleanly separate semantically meaningful information and incompressible noise.

Furthermore, we investigated the sensitivity of the model to the number of latent codes. We show latent traversals using three continuous codes instead of two in Figure \ref{fig:infogan_mnist_latent_traversals_3cont}. It is evident that the model tries to put more digit class information into the continuous traversals. So the number of codes is an important hyperparameter to tune, whereas VAE methods are less sensitive to the choice of number of codes since they can prune out unnecessary latents by collapsing $q(z_j|x)$ to the prior $p(z_j)$.

\begin{figure}[h!]
  \centering
  \includegraphics[width=0.7\columnwidth]{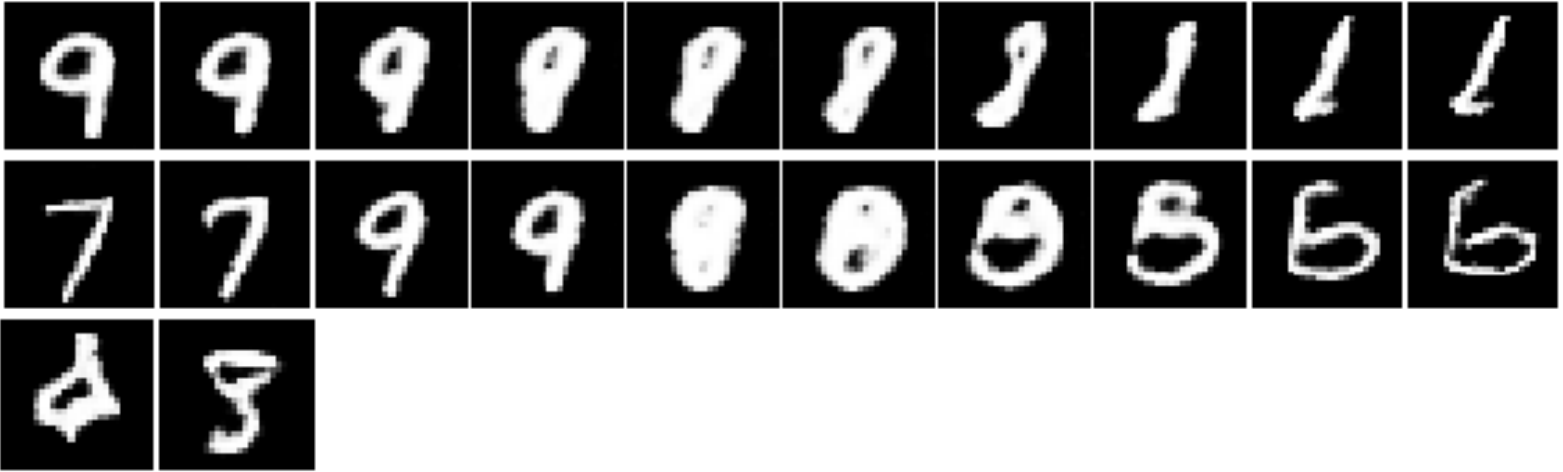}
  \vskip -0.1in
  \caption{Latent traversals for InfoGAN on MNIST across the two continuous codes (first two rows) and the categorical code (last row) using 2 categories}\label{fig:infogan_mnist_latent_traversals_2catval}
\vskip -0.1in
\end{figure}

\begin{figure}[h!]
  \centering
  \includegraphics[width=0.7\columnwidth]{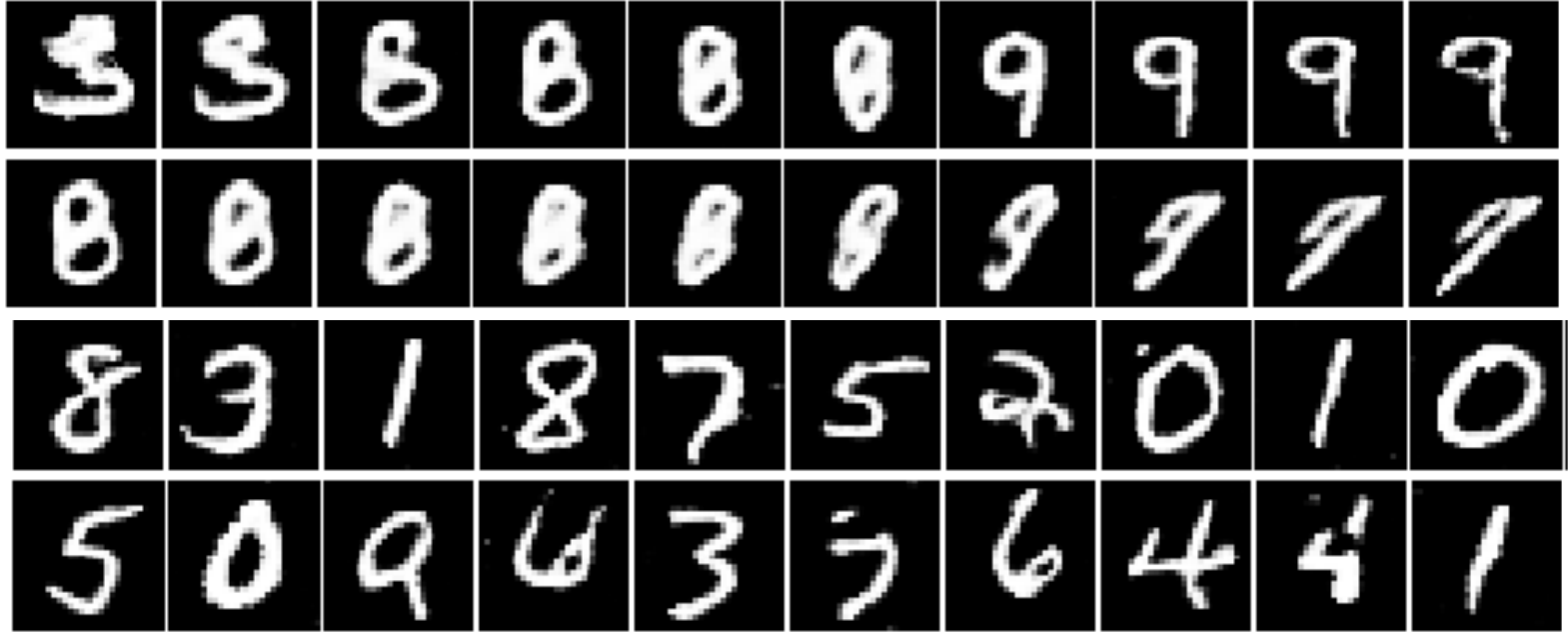}
  \vskip -0.1in
  \caption{Latent traversals for InfoGAN on MNIST across the two continuous codes (first two rows) and the categorical code (last two rows) using 20 categories}\label{fig:infogan_mnist_latent_traversals_20catval}
\vskip -0.1in
\end{figure}

\begin{figure}[h!]
  \centering
  \includegraphics[width=0.7\columnwidth]{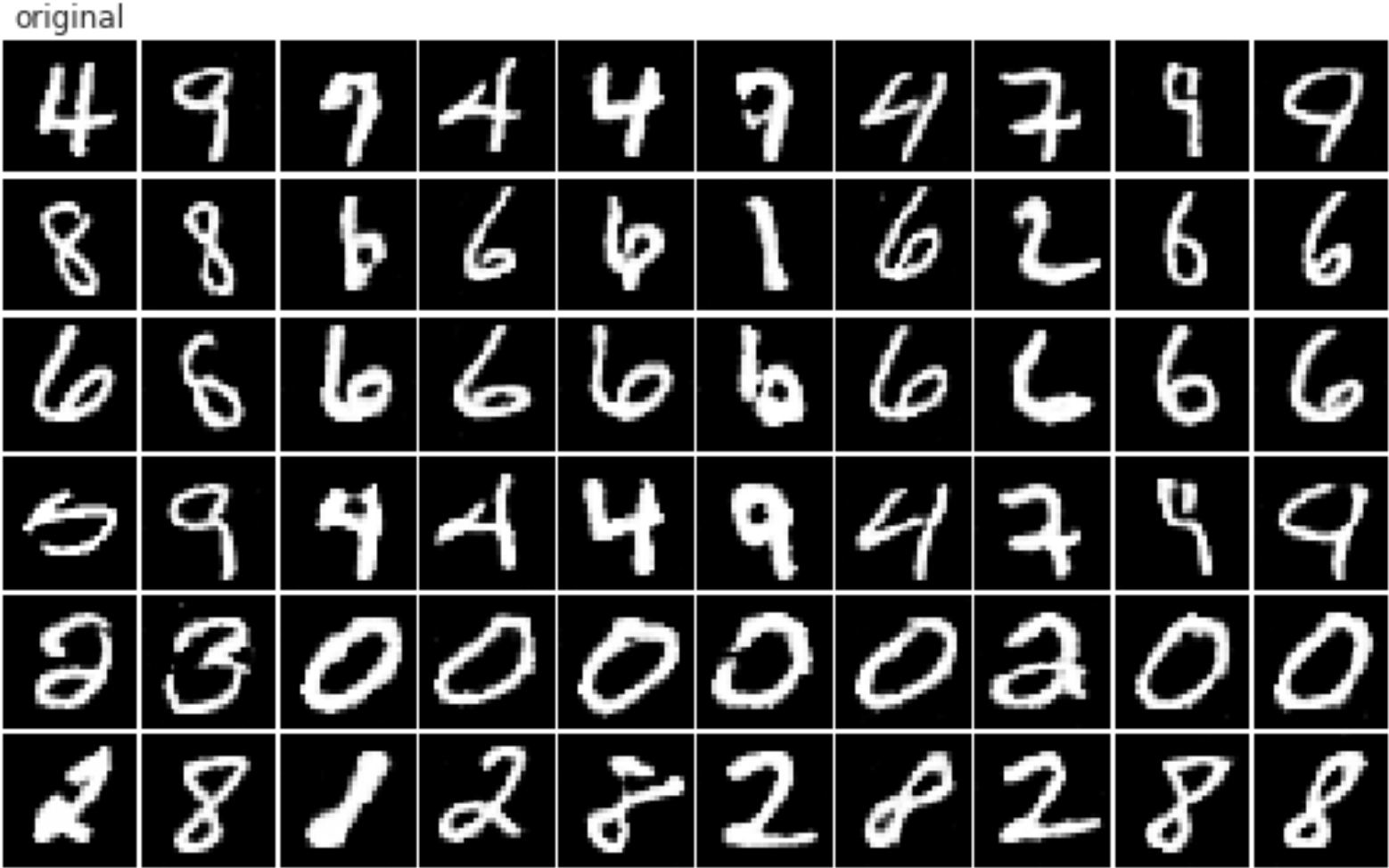}
  \vskip -0.1in
  \caption{Same as Figure \ref{fig:infogan_mnist_reconstructions} but the categorical code having 2 categories.}\label{fig:infogan_mnist_reconstructions_2catval}
\vskip -0.1in
\end{figure}

\begin{figure}[h!]
  \centering
  \includegraphics[width=0.7\columnwidth]{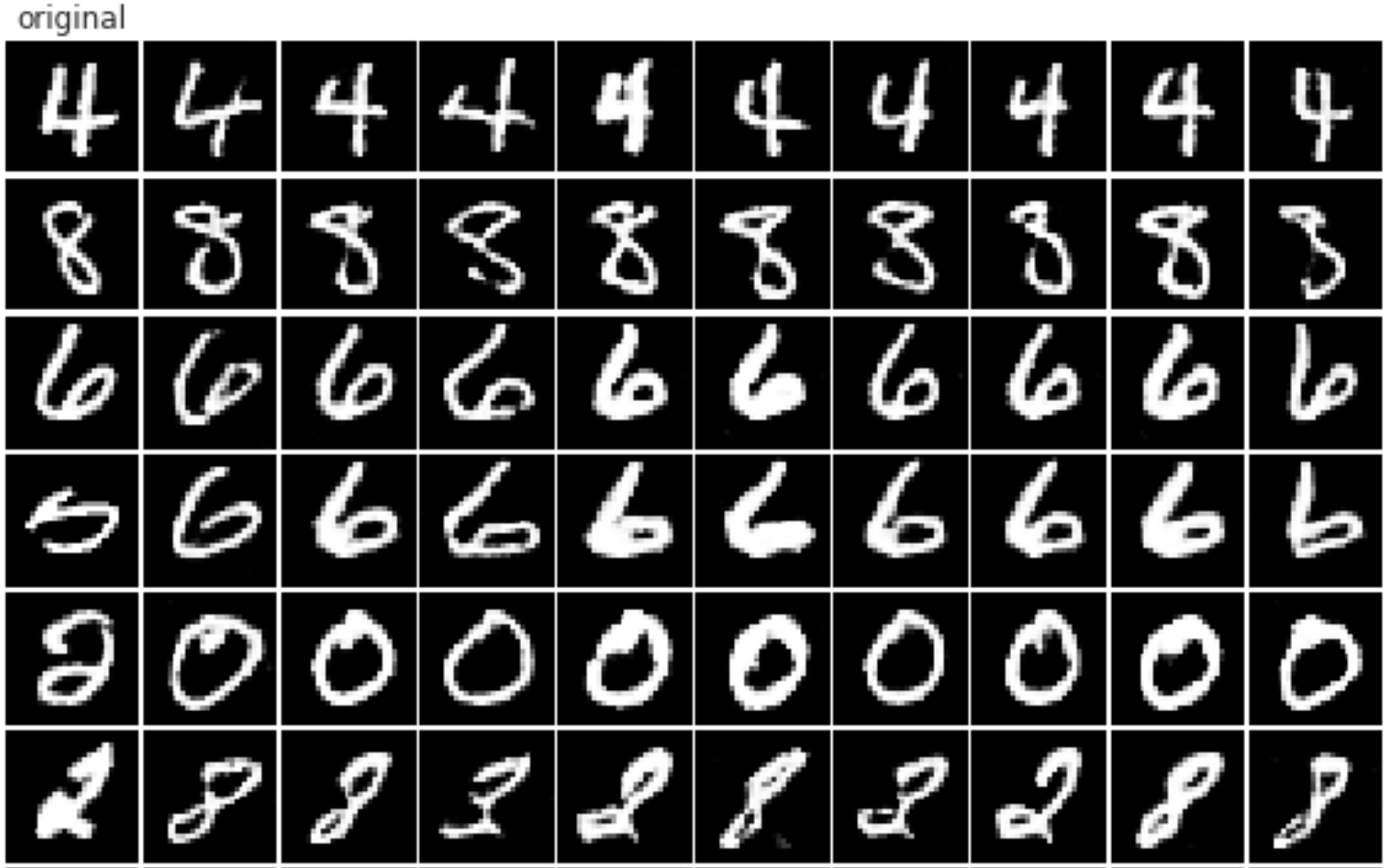}
  \vskip -0.1in
  \caption{Same as Figure \ref{fig:infogan_mnist_reconstructions_2catval} but with 20 categories.}\label{fig:infogan_mnist_reconstructions_20catval}
\vskip -0.1in
\end{figure}

We also tried varying the number of categories for the categorical code. Using 2 categories, we see from Figure \ref{fig:infogan_mnist_latent_traversals_2catval} that the model tries to put much more information about digit class into the continuous latents, as expected. Moreover from Figure \ref{fig:infogan_mnist_reconstructions_2catval}, we can see that the noise variables also have more information about the digit class. However, when we use 20 categories, we see that the model still puts information about the digit class in the continuous latents. However from Figure \ref{fig:infogan_mnist_reconstructions_20catval} we see that the noise variables contain less semantically meaningful information.

\begin{figure}[h!]
\vskip -0.1in
  \centering
  \includegraphics[width=0.7\columnwidth]{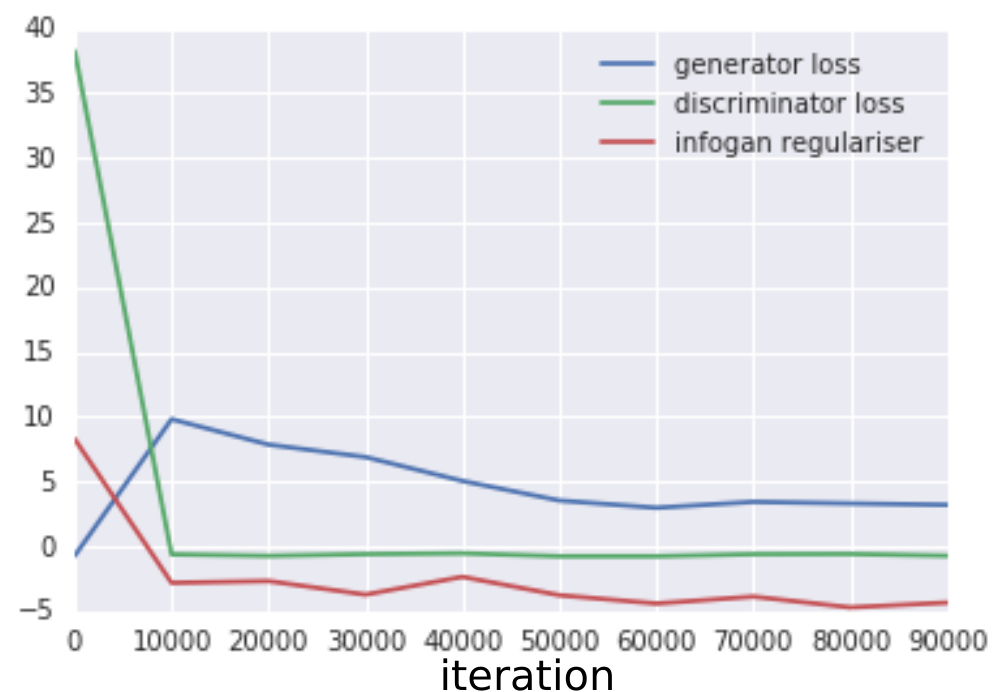}
  \vskip -0.1in
  \caption{The generator loss $- \mathbb{E}_{p(z)}[D(G(z))]$, discriminator loss $\mathbb{E}_{p_{data}(x)}[D(x)] - \mathbb{E}_{p(z)}[D(G(z))]$ and the InfoGAN regulariser term $-L$ for InfoWGAN-GP on MNIST with $\lambda=1$}\label{fig:infowgan_gp_mnist_learning_curve}
\vskip -0.1in
\end{figure}

\begin{figure}[h!]
  \centering
  \includegraphics[width=0.7\columnwidth]{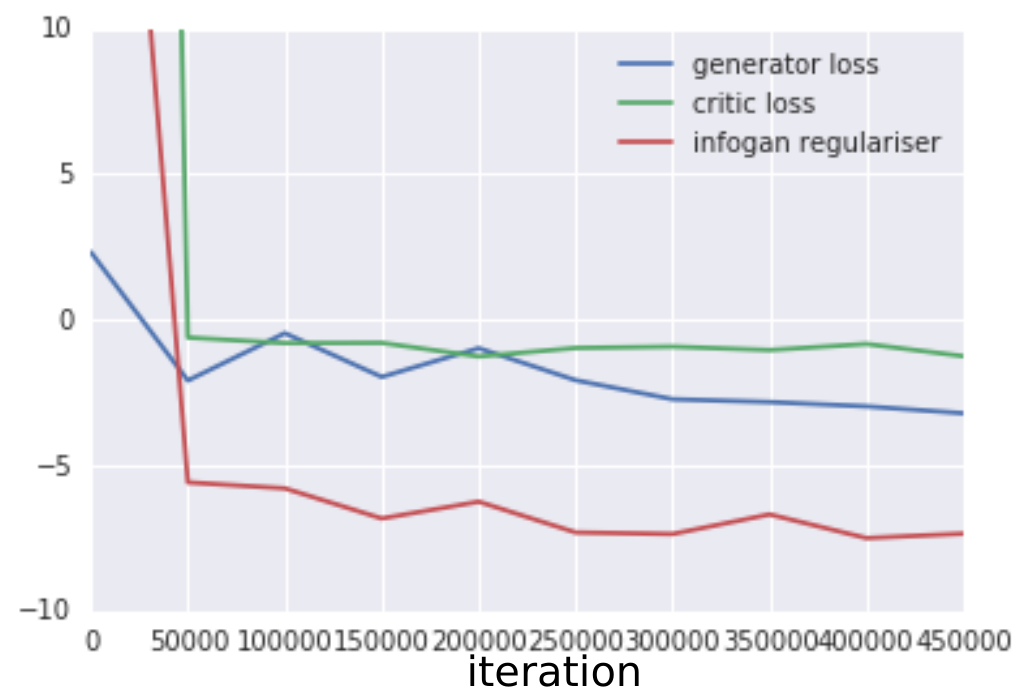}
  \vskip -0.1in
  \caption{Same as Figure \ref{fig:infowgan_gp_mnist_learning_curve} but for 2D Shapes.}\label{fig:infowgan_gp_shapes2d_learning_curve}
\vskip -0.1in
\end{figure}

\begin{figure}[h!]
  \centering
  \includegraphics[width=\columnwidth]{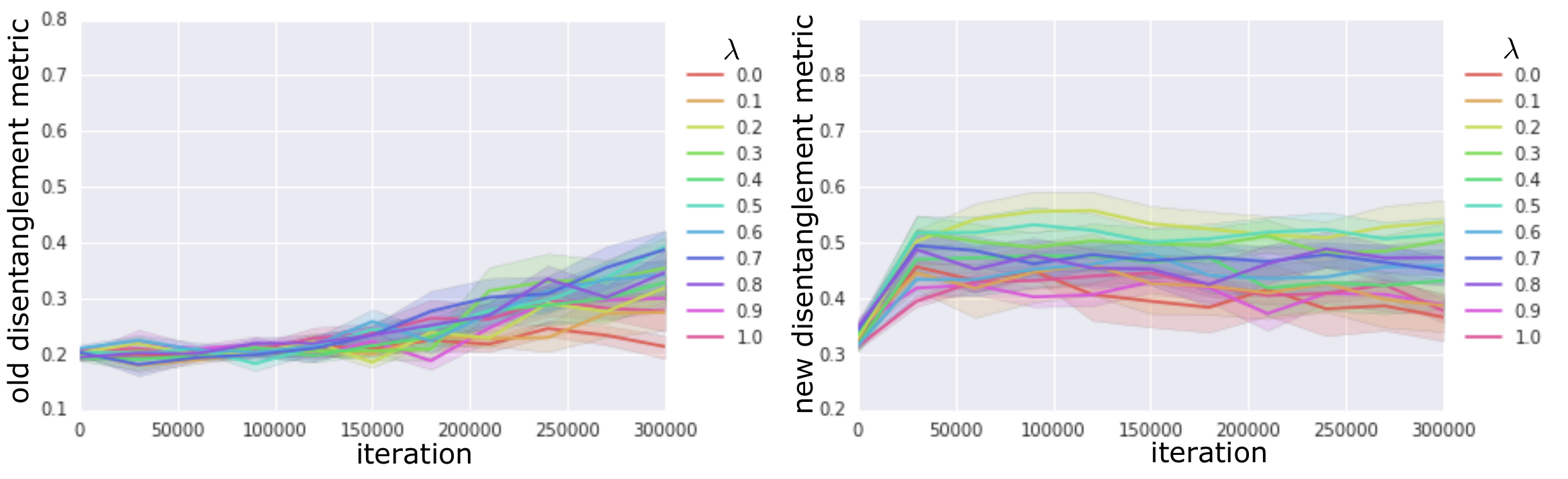}
  \vskip -0.1in
  \caption{Disentanglement scores for InfoWGAN-GP on 2D Shapes with bigger architecture (Table \ref{tab:2dshapes_infogan_big}) for 10 random seeds per hyperparameter setting. Left: Metric in \citet{higgins2016beta}. Right: Our metric.}\label{fig:infowgan_gp_shapes2d_disent_big}
\vskip -0.1in
\end{figure}

\begin{figure}[h!]
  \centering
  \includegraphics[width=0.7\columnwidth]{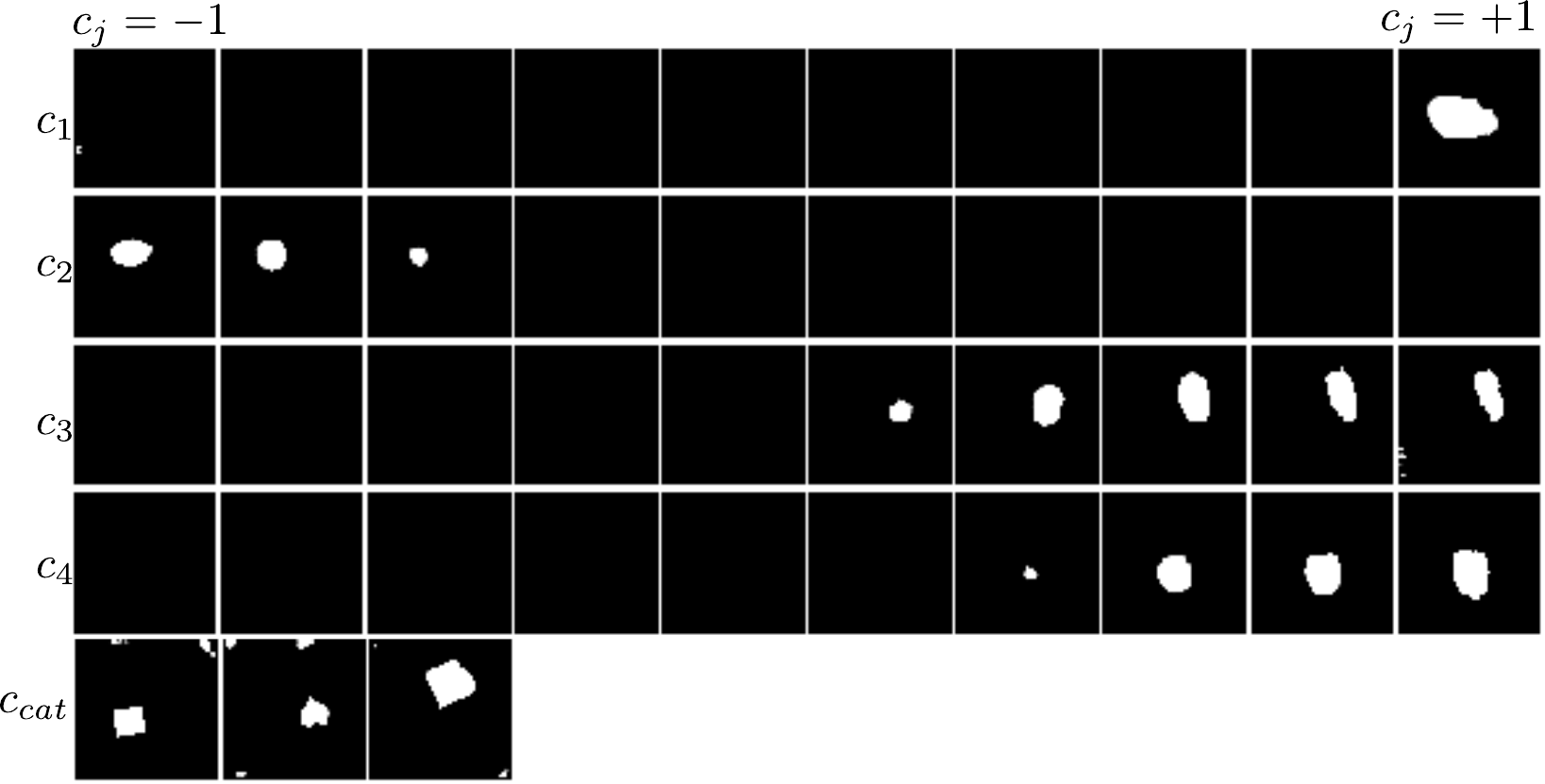}
  \vskip -0.1in
  \caption{Latent traversals for InfoWGAN-GP on 2D Shapes across the four continuous codes (first four rows) and the categorical code (last row) with bigger architecture (Table \ref{tab:2dshapes_infogan_big}) for run with best disentanglement score ($\lambda=0.6$).}\label{fig:infowgan_gp_shapes2d_latent_traversals_big}
\vskip -0.2in
\end{figure}

Using InfoWGAN-GP solved the degeneracy issue and makes training more stable (see Figure \ref{fig:infowgan_gp_shapes2d_learning_curve}), but we observed that the other problems persisted (see \eg  Figure \ref{fig:infowgan_gp_mnist_latent_traversals_multiple}).

For 2D Shapes, we have also tried using a bigger architecture for InfoWGAN-GP that is used for a data set of similar dimensions (Chairs data set) in \citet{chen2016infogan}. See Table \ref{tab:2dshapes_infogan_big}. However as can be seen in Figure \ref{fig:infowgan_gp_shapes2d_disent_big} this did not improve disentanglement scores, yet the latent traversals look slightly more realistic (Figure \ref{fig:infowgan_gp_shapes2d_latent_traversals_big}).

In summary, InfoWGAN-GP can help prevent the instabilities in training faced by InfoGAN, but it does not help overcome the following weaknesses compared to VAE-based methods: 1) Disentangling performance is sensitive to the number of code latents. 2) More often than not, the noise variables contain semantically meaningful information. 3) The model does not always generalise well to all across the domain of $p(z)$.

\begin{figure}[h!]
  \centering
  \includegraphics[width=0.7\columnwidth]{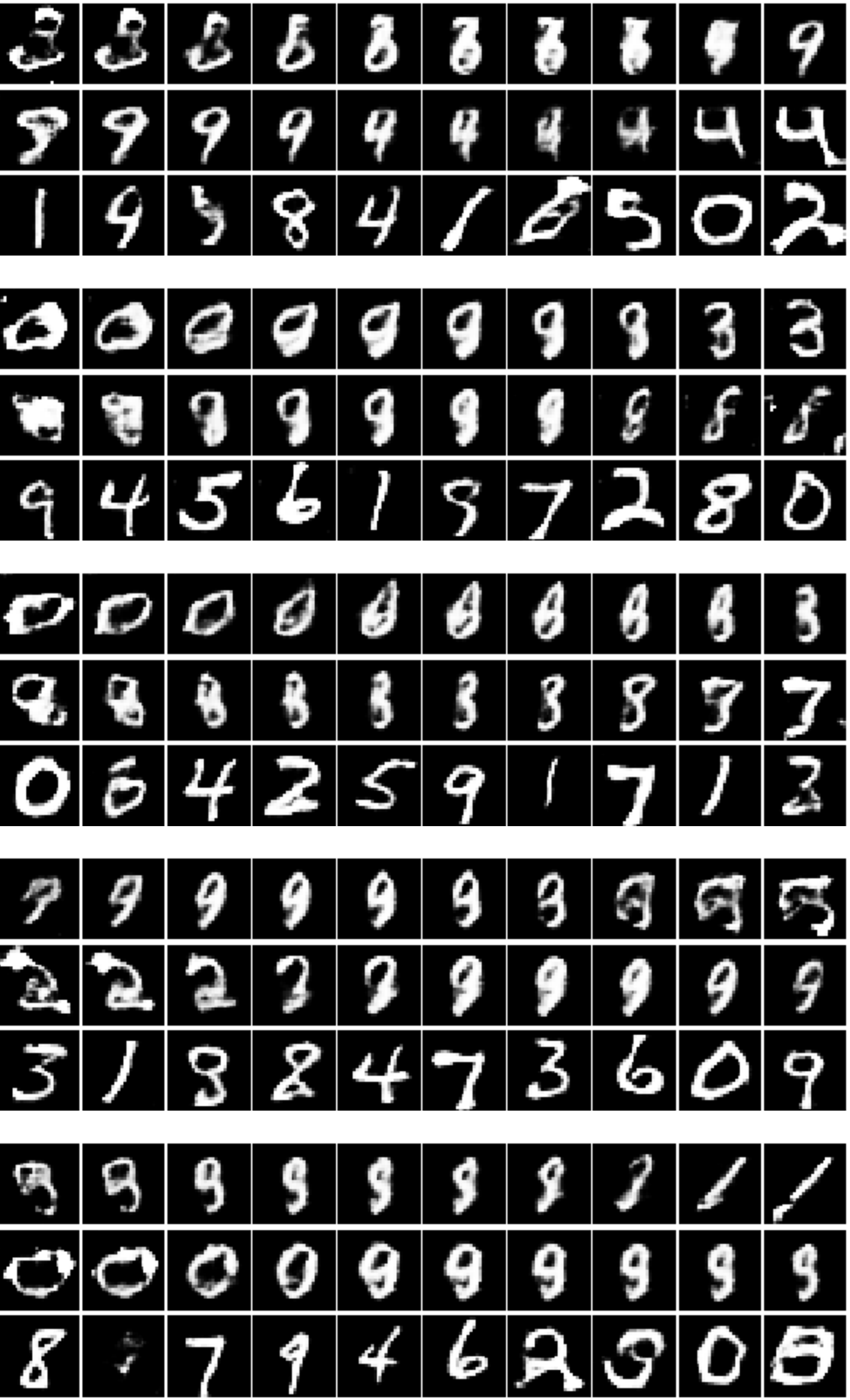}
  \vskip -0.1in
  \caption{Same as Figure \ref{fig:infogan_mnist_latent_traversals_multiple} but for InfoWGAN-GP.}\label{fig:infowgan_gp_mnist_latent_traversals_multiple}
\vskip -0.1in
\end{figure}

\section{Further Experimental Results}
\label{apd:further_results}

From Figure \ref{fig:shapes2d_disc_acc}, we see that higher values of $\gamma$ in FactorVAE leads to a lower discriminator accuracy. This is as expected, since a higher $\gamma$ encourages $q(z)$ and $\prod_j q(z_j)$ to be closer together, hence a lower accuracy for the discriminator to successfully classify samples from the two distributions.

\begin{figure}[h!]
  \centering
  \includegraphics[width=\columnwidth]{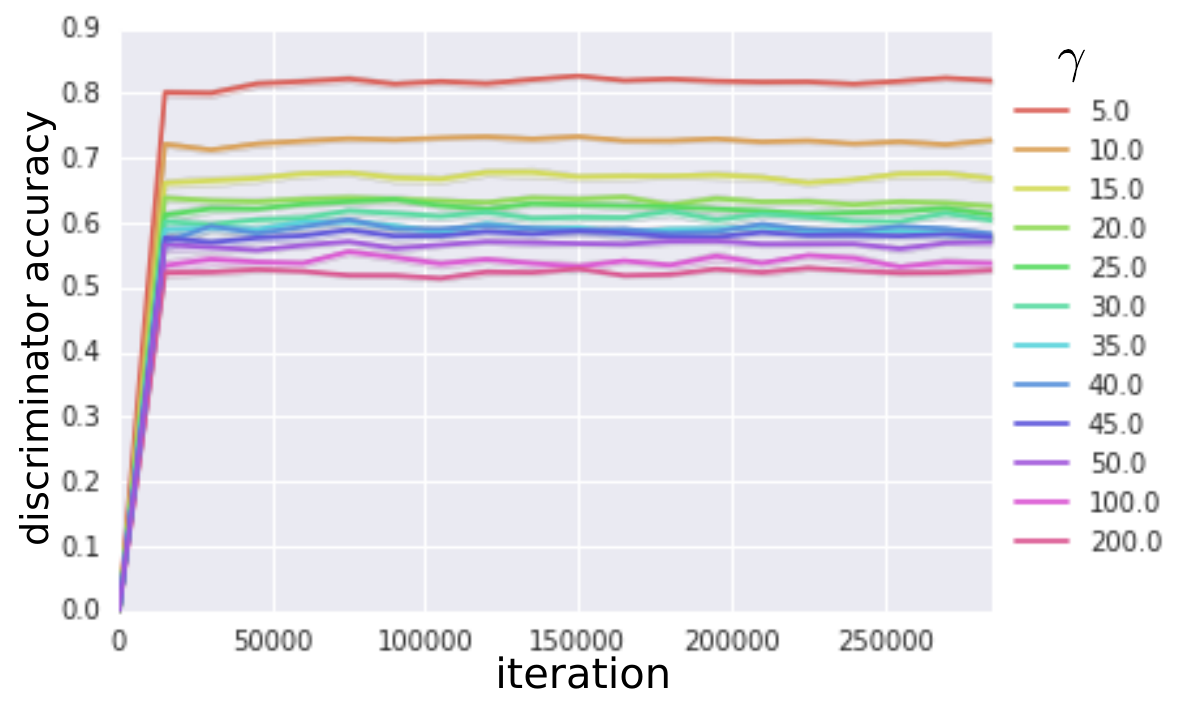}
  \vskip -0.1in
  \caption{Plot of discriminator accuracy of FactorVAE on 2D Shapes data across iterations over 5 random seeds.}\label{fig:shapes2d_disc_acc}
\vskip -0.1in
\end{figure}

\begin{figure}[htb!]
  \centering
  \includegraphics[width=\columnwidth]{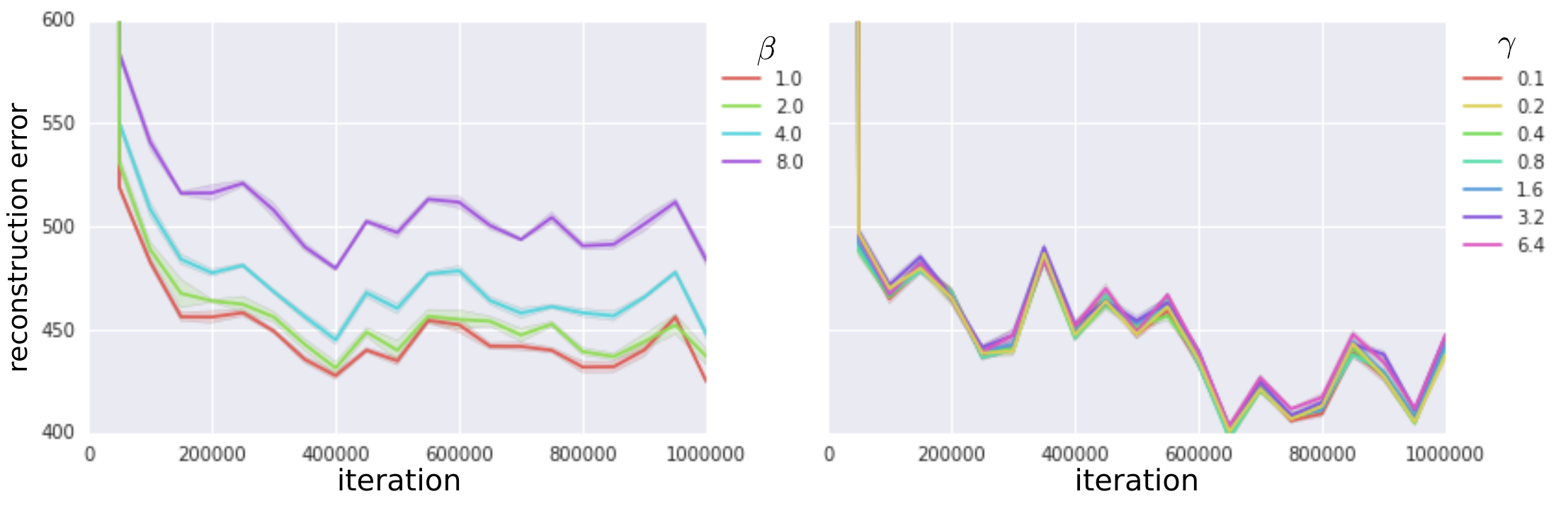}
  \vskip -0.1in
  \caption{Same as \fig{fig:faces3d_rec_err} but for 3D Chairs.}\label{fig:chairs_rec_err}
\vskip -0.1in
\end{figure}

\begin{figure}[htb!]
  \centering
  \includegraphics[width=\columnwidth]{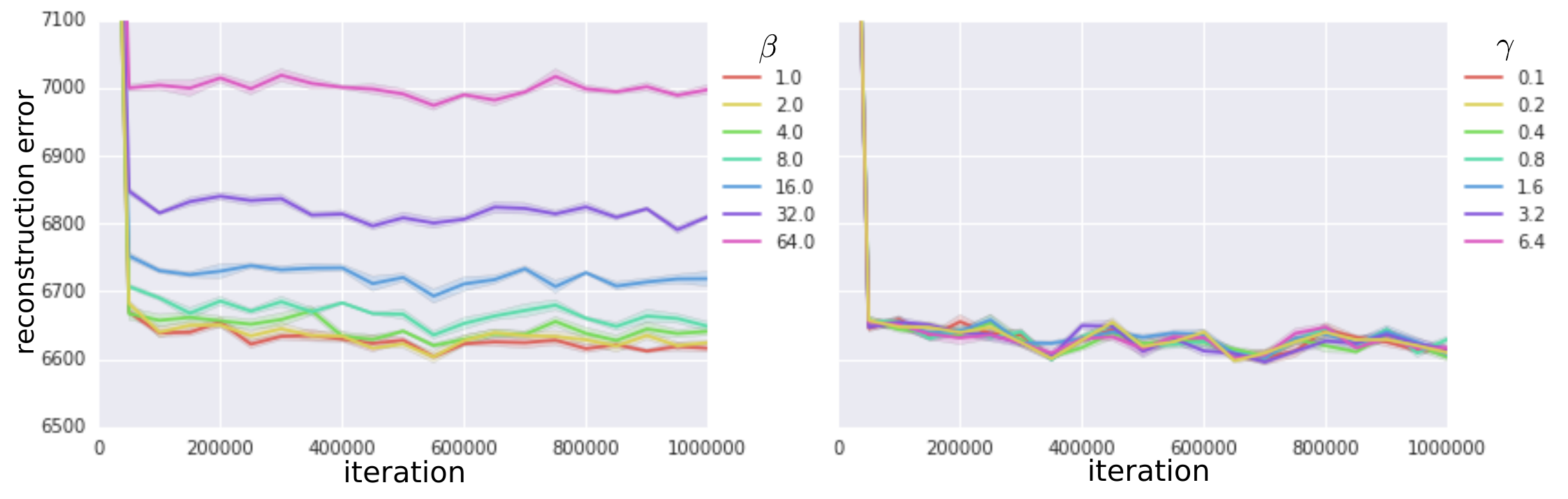}
  \vskip -0.1in
  \caption{Same as \fig{fig:faces3d_rec_err} but for CelebA.}\label{fig:celeba_rec_err}
\vskip -0.1in
\end{figure}

\begin{figure*}[t!]
  \centering
  \includegraphics[width=\linewidth]{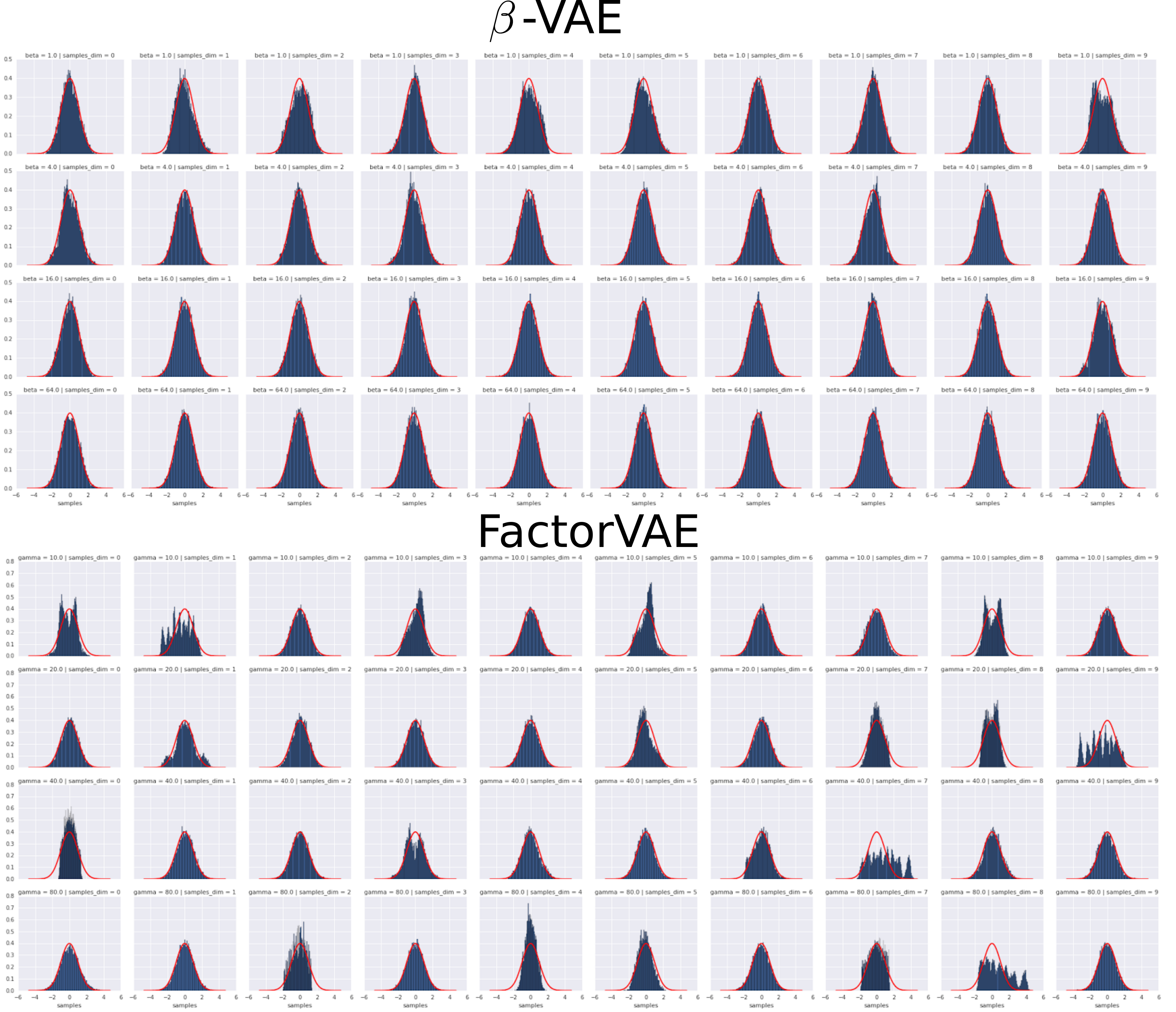}
  \vskip -0.1in
  \caption{Histograms of $q(z_j)$ for each $j$ (columns) for $\beta$-VAE and FactorVAE at the end of training on 2D Shapes, with the pdf of Gaussian $\mathcal{N}(0,1)$ overlaid in red. The rows correspond to different values of $\beta$ $(1,4,16,64)$ and $\gamma$ $(10,20,40,80)$ respectively.}\label{fig:shapes2d_histograms}
\vskip -0.1in
\end{figure*}

We also show histograms of $q(z_j)$ for each $j$ in $\beta$-VAE and FactorVAE for different values of $\beta$ and $\gamma$ at the end of training on 2D Shapes in Figure \ref{fig:shapes2d_histograms}. We can see that the marginals of FactorVAE are quite different from the prior, which could be a reason that the variant of FactorVAE using the objective given by \eqn{eq:modified_factor_vae} leads to different results to FactorVAE. For FactorVAE, the model is able to focus on factorising $q(z)$ instead of pushing it towards some arbitrarily specified prior $p(z)$.

\end{document}